%% file: pNMF_arxiv.tex
\documentclass[hidelinks,onefignum,onetabnum]{siamart251216}

\input{shared}

\begin{document}

\maketitle

\begin{abstract}
Matrix factorization techniques, especially Nonnegative Matrix Factorization (NMF), have been widely used for dimensionality reduction and interpretable data representation. 
However, existing NMF-based methods are inherently single-scale and fail to capture the evolution of connectivity structures across resolutions. 
In this work, we propose persistent nonnegative matrix factorization (pNMF), a scale-parameterized family of NMF problems, that produces a sequence of persistence-aligned embeddings rather than a single one. 
By leveraging persistent homology, we identify a canonical minimal sufficient scale set at which the underlying connectivity undergoes qualitative changes. 
These canonical scales induce a sequence of graph Laplacians, leading to a coupled NMF formulation with scale-wise geometric regularization and explicit cross-scale consistency constraint. 
We analyze the structural properties of the embeddings along the scale parameter and establish bounds on their increments between consecutive scales. 
The resulting model defines a nontrivial solution path across scales, rather than a single factorization, which poses new computational challenges. 
We develop a sequential alternating optimization algorithm with guaranteed convergence. 
Numerical experiments on synthetic and single-cell RNA sequencing datasets demonstrate the effectiveness of the proposed approach in multi-scale low-rank embeddings.
\end{abstract}

\begin{keywords}
Nonnegative Matrix Factorization, Persistent Homology, Graph Laplacian Regularization, Multi-scale Embeddings
\end{keywords}

\begin{MSCcodes}
15A23, 55N31, 68Q25
\end{MSCcodes}

\section{Introduction}
Matrix factorization is a fundamental technique in machine learning, particularly for tasks such as feature extraction and dimensionality reduction. 
By decomposing a data matrix into the product of two or more smaller matrices, it yields a compact embedding that reveals latent patterns within the data. 
This capability is especially valuable in domains such as computer vision, pattern recognition, bioinformatics, and recommender systems~\cite{koren2009matrix}, where data are often high-dimensional and complex. 

Classical matrix factorization techniques—such as LU decomposition, QR decomposition, and Singular Value Decomposition (SVD)~\cite{stewart1993early}—have long served as foundational tools in numerical linear algebra. 
However, these methods typically do not impose constraints such as nonnegativity, which can limit their interpretability and effectiveness in applications where the data and its components are naturally nonnegative. 
Meanwhile, psychological~\cite{palmer1977hierarchical} and physiological~\cite{wachsmuth1994recognition,logothetis1996visual} evidence suggests that the human brain processes information using parts-based representations. 
Inspired by this, Nonnegative Matrix Factorization (NMF)~\cite{lee1999learning} introduces nonnegativity constraints, factorizing a nonnegative data matrix into the product of two low-rank nonnegative matrices: a basis matrix and a coefficient matrix. 
These constraints enforce additive combinations of basis components, encouraging the model to learn parts of objects rather than holistic patterns. 
For example, in image analysis, NMF may learn facial components such as eyes, noses, or mouths, rather than entire faces—hence the term ``parts-based'' representation. 

Despite these advantages, standard NMF treats each data point independently and thus fails to preserve the geometric structure inherent in many datasets. 
In practice, data are often assumed to lie on a low-dimensional manifold embedded in a high-dimensional space~\cite{tenenbaum2000global,roweis2000nonlinear,belkin2006manifold}, and ignoring this manifold structure can lead to suboptimal representations. 
To address this limitation, Graph Regularized Nonnegative Matrix Factorization (GNMF)~\cite{cai2010graph} incorporates geometric information by constructing a $k$-nearest neighbors ($k$-NN) graph that captures the intrinsic manifold structure of the data. 
The graph structure is encoded via the graph Laplacian~\cite{chung1997spectral}, which is used to define a graph-based regularization term added to the NMF objective. 
This encourages the learned low-rank embedding to respect the geometry—i.e., nearby points in the original space remain close in the embedding space—thus achieving a geometry-preserving, parts-based factorization. 

However, the use of a standard graph Laplacian in GNMF captures only a single-scale of the data manifold, thereby limiting its capacity to reflect multi-scale structural patterns. 
To overcome this limitation, Persistent Combinatorial Laplacian~\cite{wang2020persistent}—also known as the persistent Laplacian (PL) or persistent spectral graph—was introduced. 
This framework has inspired a variety of theoretical developments~\cite{memoli2022persistent} and algorithmic implementations~\cite{wang2021hermes}, facilitating the integration of topological persistence into spectral graph methods. 
Building upon this foundation, Topological NMF (TNMF)~\cite{hozumi2024analyzing} was proposed, which integrates persistent Laplacian regularization into the NMF objective to capture multi-scale geometric and topological information. 
In TNMF, the single graph Laplacian used in GNMF is replaced by a sum of Laplacians derived from a series of weighted graphs, where each graph is generated by partitioning the edge weight range into several equally spaced intervals. 
Additionally, a variant called $k$-TNMF~\cite{hozumi2024analyzing} was introduced, in which the weighted graphs are substituted with a series of $k$-NN graphs, generated by varying the neighborhood parameter $k$. 

Despite the introduction of persistent Laplacian regularization in TNMF, the method still suffers from two key limitations. 
First, it learns only a single low-rank embedding, thereby failing to capture the evolution of data representations across multiple topological scales. 
Second, its construction of multi-scale graphs relies on manually partitioning edge weights or varying the neighborhood size $k$, which may not accurately reflect the intrinsic topological changes within the data. 
As a powerful alternative, Topological Data Analysis (TDA)~\cite{bubenik2015statistical,wasserman2018topological,chazal2021introduction}, particularly persistent homology~\cite{zomorodian2004computing,edelsbrunner2008persistent}, offers a robust and mathematically grounded framework for capturing multi-scale topological features in high-dimensional data. 
By tracking the appearance and disappearance of topological features (e.g., connected components, cycles, and voids) as a scale parameter varies, persistent homology provides a natural basis for multi-scale embeddings learning. 
Recent works have successfully incorporated persistent homology into applications such as image segmentation~\cite{clough2020topological}, topological autoencoders~\cite{moor2020topological}, and cell state analysis~\cite{huynh2024topological}-typically by introducing a topological loss function. 

Motivated by these insights, we propose a novel method termed persistent Nonnegative Matrix Factorization (pNMF), which integrates nonnegative matrix factorization with persistent homology–guided multi-scale topological graph constructions. 
This framework is specifically designed to extract interpretable, low-rank, and multi-scale embeddings of high-dimensional and complex data, while explicitly modeling the evolution of data connectivity across scales. 

The main contributions of this work are summarized as follows: 
\begin{itemize}
\item We define a \emph{canonical minimal sufficient scale set} for data connectivity via the stability structure of $0$-dimensional persistent homology. 
We show that the death scales induced by the $\mathcal{H}_0$ persistence diagram form a minimal set that exhausts all connectivity transitions of the underlying data. 
Based on these canonical scales, we construct a family of multi-scale graphs and associated graph Laplacians, and establish Lipschitz continuity with respect to scale, together with spectral stability and monotonicity properties. 

\item We propose a novel multi-scale persistent nonnegative matrix factorization (pNMF) model that couples scale-wise geometric regularization with explicit cross-scale smoothness and anchoring constraints. 
This formulation gives a sequence of persistence-aligned low-rank factorizations and admits quantitative bounds on the variation of factor matrices across adjacent scales, providing theoretical guarantees on the stability of the resulting multi-scale embeddings. 

\item We develop an efficient sequential alternating optimization scheme tailored to the proposed pNMF formulation and establish its convergence to a stationary point. 
Numerical experiments on synthetic benchmarks and single-cell RNA sequencing datasets corroborate the theoretical analysis and demonstrate the effectiveness of the proposed method in capturing coherent multi-scale structures, as well as the impact of individual model components. 
\end{itemize}

The code for the proposed pNMF and experiments is available at \url{https://github.com/LiminLi-xjtu/pNMF}.

\section{Related Work}
In this section, we briefly review two fundamental methodologies: nonnegative matrix factorization and persistent homolgy, which are the basics of our proposed method of persistent nonnegative matrix factorization. 

\subsection{Nonnegative Matrix Factorization}
Nonnegative Matrix Factorization (NMF)~\cite{lee1999learning,lee2000algorithms, wang2012nonnegative,gillis2020nonnegative} is a widely utilized matrix decomposition technique that enables low-rank embedding of high-dimensional nonnegative data. 

Given a nonnegative data matrix $X \in \mathbb{R}^{p \times n}$ representing $n$ data points with dimension $p$, NMF seeks nonnegative matricesa $W \in \mathbb{R}^{p \times d}$ (basis matrix) and $H \in \mathbb{R}^{d \times n}$ (coefficient matrix) such that $X \approx WH$, typically by solving 
\[
\min_{W, H} \|X - WH\|_F^2, \quad \text{s.t. } W \ge 0, \, H \ge 0,
\]
where $\|\cdot\|_F$ denotes the Frobenius norm. 
By replacing the original high-dimensional matrix $X$ with $H$, NMF achieves effective dimensionality reduction. 
In this context, $H$ can be interpreted as a \textit{nonnegative low-rank embedding} of the input data. 

Graph Regularized Nonnegative Matrix Factorization (GNMF)~\cite{cai2010graph} extends NMF by incorporating geometric information through a graph Laplacian regularization term. 
The corresponding optimization problem is formulated as 
\[
\min_{W, H} \|X - WH\|_F^2 + \lambda\,\mathrm{Tr}(H L H^\top), \quad \text{s.t. } W \ge 0, \, H \ge 0,
\]
where $\mathrm{Tr}(\cdot)$ denotes the matrix trace, $(\cdot)^\top$ represents the matrix transpose, $L$ is the graph Laplacian~\cite{chung1997spectral} constructed from the data affinity graph, and $\lambda > 0$ is a regularization parameter controlling the contribution of the geometric term. 

The graph regularization term $\mathrm{Tr}(HLH^\top)$ encourages nearby data points to remain close in the learned embedding, yielding geometry-preserving low-rank embedding that are effective for clustering and manifold learning tasks.

\subsection{Persistent Homology}
Persistent homology~\cite{zomorodian2004computing, edelsbrunner2008persistent} is a fundamental tool in \textit{Topological Data Analysis} (TDA)~\cite{bubenik2015statistical,wasserman2018topological,chazal2021introduction}, designed to capture the multi-scale topological structure of data. 
By tracking the birth and death of topological features—such as connected components, cycles, and voids—across varying scales, it provides a coordinate-invariant and noise-robust summary of the intrinsic geometric organization of a dataset. 

Let $V$ be a finite set of points. 
A $k$-\emph{simplex} $\sigma = (v_0, v_1, \dots, v_k)$ is defined as the convex hull of $k+1$ independent vertices in $V$. 
A \emph{simplicial complex} $\mathcal{K}$ is a collection of simplices that is closed under taking faces and whose simplices intersect only along common faces. 
Such complexes provide a combinatorial representation for the underlying geometric structures of data. 

Given a simplicial complex $\mathcal{K}$, algebraic topology associates a sequence of chain groups $\mathcal{C}_k(\mathcal{K})$ together with boundary operators $\partial_k$ satisfying $\partial_k \circ \partial_{k+1} = 0$. 
The $k$-th \emph{homology group} is defined as
\[
\mathcal{H}_k(\mathcal{K}) = \frac{\ker(\partial_k)}{\operatorname{im}(\partial_{k+1})}, 
\]
which characterizes $k$-dimensional topological features of $\mathcal{K}$. 
The corresponding Betti number $\beta_k = \mathrm{rank}(\mathcal{H}_k)$ counts the number of such features, including connected components ($k=0$), cycles ($k=1$), and voids ($k=2$). 

For a finite point set $V$ equipped with a distance metric, a common construction is the Vietoris--Rips complex~\cite{zomorodian2010fast}. 
For $\varepsilon \ge 0$, the Vietoris--Rips complex $\mathcal{V}_\varepsilon$ consists of all simplices whose vertices are pairwise within distance $\varepsilon$. 
Rather than selecting a single scale, one considers the filtration 
\[
\cdots \subseteq \mathcal{V}_{\varepsilon_i} \subseteq \mathcal{V}_{\varepsilon_j} \subseteq \cdots, \quad \text{with } \cdots < \varepsilon_i < \varepsilon_j < \cdots, 
\]
which enables the study of topological features across multiple scales. 
As $\varepsilon$ increases, $k$-dimensional features may appear (birth) and disappear (death). 
These events are recorded as pairs $(b_i, d_i)$, forming the \emph{persistence diagram} 
\[
\mathcal{D}_k = \{(b_i, d_i) \mid b_i < d_i, \ i = 1, 2, \dots\}. 
\]
Persistence diagrams compactly encode the multiscale evolution of topological structures, where long-lived features are typically regarded as topologically significant. 

A detailed review of the mathematical background of persistent homology is provided in the supplementary material \cref{Supp-sec:ph}.

\section{Persistent Nonnegative Matrix Factorization(pNMF)}
In this section, we propose our model of persistent nonnegative matrix factorization for a given high-dimensional, nonnegative data $X \in \mathbb{R}^{p \times n}$ for $n$ data points, which innovatively integrates persistent homology with matrix factorization by first capturing multi-scale topological features and then embedding data points across continuously varying scales, thereby bridging topological representation with latent feature learning.

\subsection{Canonical Minimal Sufficient Scale Set}
Let $X = \{x_1, x_2, \ldots, x_n\} \subset \mathbb{R}^p$ be a set of $n$ data points. 
We construct a nested sequence of simplicial complexes $\{\mathcal{X}(\varepsilon)\}_{\varepsilon \ge 0}$ using the Vietoris--Rips (VR) construction. 
For a scale parameter $\varepsilon \ge 0$, the VR complex is defined as 
\[
\mathcal{X}(\varepsilon) = \{\sigma \subseteq X \mid \text{dist}(x_i, x_j) < \varepsilon, \ \forall \ x_i \neq x_j \in \sigma\}, 
\]
where $\sigma$ denotes a simplex and $\text{dist}(x_i, x_j)$ is the Euclidean distance between points $x_i$ and $x_j$. 
As the scale parameter $\varepsilon$ increases, the VR complexes form a nested sequence, such that the complex at any smaller scale is contained within the complex at every larger scale. 

In this work, we focus on the evolution of the $0$-dimensional homology groups $\{\mathcal{H}_0(\mathcal{X}(\varepsilon))\}_{\varepsilon > 0}$, which characterize the connectivity structure of the data across scales. 
To formalize this evolution, we introduce the homology-valued map
\[
F : \mathbb{R}^+ \longrightarrow \mathsf{Ab}/\!\cong,
\qquad
F(\varepsilon) := [\mathcal{H}_0(\mathcal{X}(\varepsilon))],
\]
where $\mathsf{Ab}/\!\cong$ denotes the set of isomorphism classes of abelian groups. 

The map $F$ is piecewise constant and changes only at certain critical scale values. 
To formalize which scales are sufficient to capture all connectivity transitions, we introduce the following notion. 

\begin{definition}[Sufficient Scale Set]
A finite set $\Lambda \subset \mathbb{R}^+$ is called \textbf{sufficient} (with respect to $\mathcal{H}_0$) if for every $\varepsilon > 0$ there exists $\varepsilon' \in \Lambda$ such that $F(\varepsilon) = F(\varepsilon')$, i.e., the $0$-dimensional homology groups at $\varepsilon$ and $\varepsilon'$ are isomorphic. 
\end{definition}

A sufficient scale set guarantees that restricting attention to $\Lambda$ preserves all connectivity information encoded in $\mathcal{H}_0$. 
A natural question is how to construct such a set. 
Since changes in the VR filtration are triggered precisely by the appearance of edges, and edges appear exactly at pairwise distances, a most straightforward sufficient set is obtained by collecting all pairwise distances. 

\begin{definition}[Distance-Scale Set]
The \textbf{distance-scale set} of $X$ is defined as
\[
\Delta^\star := \{\|x_i - x_j\|_2 \mid 1 \le i < j \le n\} \cup \{\delta_{\max}\}, 
\]
where $\delta_{\max}$ is a sufficiently large scale such that $\mathcal{X}(\delta_{\max})$ is fully connected. 
\end{definition}

A sufficient scale set guarantees that no topological information encoded in $\mathcal{H}_0$ is lost. 
In particular, the distance-scale set $\Delta^\star$ is trivially sufficient. 
Among all sufficient scale sets, we are interested in those with minimal cardinality. 

\begin{definition}[Minimal Sufficient Scale Set]
A sufficient scale set is called \textbf{minimal} if no proper subset of it remains sufficient. 
\end{definition}

Although minimal sufficient scale sets have the smallest possible size, they are generally not unique. 
To eliminate this ambiguity, we further introduce a canonical representative. 

\begin{definition}[Canonical Minimal Sufficient Scale Set]
A minimal sufficient scale set $\Lambda \subset \mathbb{R}^+$ is called \textbf{canonical} if for every $\varepsilon \in \Lambda$, there exists an interval $(\varepsilon^{-}, \varepsilon] \subset \mathbb{R}^+$ such that $F(\varepsilon) = F(\varepsilon')$ for all $\varepsilon' \in (\varepsilon^{-}, \varepsilon]$. 
\end{definition}

The map $F$ is right-continuous and piecewise constant. 
More precisely, for every $\varepsilon > 0$, there exists $\varepsilon^- < \varepsilon$ such that $F(\varepsilon') = F(\varepsilon)$ for all $\varepsilon' \in (\varepsilon^-, \varepsilon]$. 
Hence, the canonical minimal sufficient scale set coincides with the set of jump points of $F$ viewed as a càdlàg map with values in \(\mathsf{Ab}/\!\cong\). 

To explicitly identify such canonical scales in a principled and computable manner, we consider the $0$-dimensional persistence diagram $\mathcal{D}_0$ associated with the VR filtration. 
The diagram $\mathcal{D}_0$ records the birth and death times of all connected components. 
Since each data point initially forms an isolated component, all components are born at time $0$. 
Thus, $\mathcal{D}_0$ takes the form 
\[
\mathcal{D}_0 = \{(0, d_t) \mid t = 1, 2, \ldots, n\}, \quad d_1 \le d_2 \le \cdots \le d_n, 
\]
where $d_t$ denotes the death time at which the $t$-th component merges into another component and ceases to exist as an independent class in $\mathcal{H}_0(X(d_t))$. 
Under a generic position assumption on the data (which holds almost surely in practice), such ties occur with probability zero. 
Therefore, without loss of generality, we assume the death times to be strictly ordered throughout this paper, i.e., $d_1 < d_2 < \cdots < d_n$. 
As $d_t$ increases, the $0$-th Betti number $\beta_0$ decreases monotonically from $n$ (the number of data points) to $1$ (when all points are connected). 
Note that, in theory, the final death time in $\mathcal{D}_0$ is infinite; in practice, we replace it with a sufficiently large finite value, which coincides with $\delta_{\max}$ and guarantees full connectivity of the induced graph for numerical implementation. 

We collect the scales $\varepsilon_t$, given by the death times in $\mathcal{D}_0(X)$, to form a scale set
\[
\Lambda^\star := \{\varepsilon_1, \varepsilon_2, \dots, \varepsilon_n\}. 
\]

\begin{theorem}\label{thm:critical_scale_set}
The scale set $\Lambda^\star$ is the canonical minimal sufficient scale set. 
\end{theorem}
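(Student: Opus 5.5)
The plan is to reduce everything to the Betti number $\beta_0$. Since $\mathcal{H}_0(\mathcal{X}(\varepsilon))$ is free abelian of rank $\beta_0(\varepsilon)$, two scales have the same value of $F$ if and only if they have the same $\beta_0$. So sufficiency, minimality and the canonical interval condition all become statements about the integer-valued step function $\beta_0$ and the death times $d_1<\cdots<d_n$ from $\mathcal{D}_0$.

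\textbf{Step 1: identify $F$ on intervals.} By the VR definition with strict inequality $\mathrm{dist}<\varepsilon$, the 1-skeleton of $\mathcal{X}(\varepsilon)$ only changes when $\varepsilon$ passes a pairwise distance, and an edge of length $\ell$ is present exactly for $\varepsilon>\ell$. Under the generic-position assumption, a merge of components occurs exactly at the death times. Set $d_0:=0$. I would show that $\beta_0$ is constant equal to $n-t+1$ on each interval $(d_{t-1},d_t]$ for $t=1,\dots,n-1$. The merging edge of length $d_t$ appears only for $\varepsilon>d_t$, so it is the strict inequality that makes the right endpoint belong to the interval. For $\varepsilon>d_{n-1}$ the complex is connected, so $\beta_0=1$. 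The last death time $d_n$ is replaced by $\delta_{\max}$, so $F(d_n)=[\mathbb{Z}]$ as well.

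\textbf{Step 2: sufficiency.} Every $\varepsilon>0$ either lies in some $(d_{t-1},d_t]$, in which case $F(\varepsilon)=F(d_t)$, or satisfies $\varepsilon>d_{n-1}$, in which case $F(\varepsilon)=[\mathbb{Z}]=F(d_n)$. Hence $\Lambda^\star$ is sufficient.

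\textbf{Step 3: minimality.} By Step 1 the values $F(d_t)$ correspond to the ranks $n-t+1$, which are pairwise distinct. If some $d_s$ is removed, no remaining element of $\Lambda^\star$ has the isomorphism class $F(d_s)$. That class is attained by $\varepsilon=d_s$, so the smaller set is not sufficient. This also shows that every sufficient set contains at least one scale from each level set of $F$, so $|\Lambda^\star|$ is the minimum possible cardinality.

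\textbf{Step 4: canonicity.} For each $d_t$, take $\varepsilon^-=d_{t-1}$; Step 1 gives $F\equiv F(d_t)$ on $(d_{t-1},d_t]$. To single out $\Lambda^\star$ among minimal sets, I would add one remark. Each level set of $F$ is one of the left-open, right-closed intervals from Step 1, or, for the top level, $(d_{n-1},\infty)$. Within such an interval, the death time $d_t$ is the natural representative, namely the jump point matching the càdlàg description given before the theorem.

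The main obstacle I expect is the bookkeeping at the endpoints. One must check that the strict inequality in the VR definition makes $F$ constant on $(d_{t-1},d_t]$ rather than on $[d_{t-1},d_t)$, which is what makes the intervals in the canonical definition point the right way. One must also handle the final, formally infinite death time, replaced by $\delta_{\max}$. I would treat that last level set separately, as in Steps 1 and 2.
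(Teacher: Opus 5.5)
Your proposal is correct and follows the paper's own three-part argument: sufficiency because $\mathcal{H}_0$ changes only across death times, minimality because each death time carries a distinct value of $\beta_0$, and canonicity via the interval ending at each death time. It is also more careful than the paper about why the strict inequality in the VR definition makes $F$ constant on $(d_{t-1},d_t]$. As your Step 1 itself shows, the literal canonicity condition holds at every $\varepsilon>0$, so singling out $\Lambda^\star$ among minimal sufficient sets rests on the convention of choosing the right endpoint $d_t$ of each level set (and the fixed $\delta_{\max}$ for the unbounded top one); the paper's canonicity step likewise asserts this rather than derives it.
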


\begin{proof}
The proof is provided in the supplementary material \cref{Supp-sec:thm_critical_scale_set}. 
\end{proof}

The scale set $\Lambda^\star$ therefore uniquely captures all essential transition points in the connectivity structure of the data and forms the basis for constructing multi-scale topological graphs in pNMF.

\subsection{Multi-scale topological graphs}
Our pNMF enables multi-scale low-rank embeddings by leveraging topological graphs constructed across multiple scales. 
Specifically, we construct a family of scale-dependent graphs associated with the selected scale parameters, which serve as structural regularizers in the proposed pNMF framework. 
Let $X=\{x_1, x_2, \dots, x_n\} \subset \mathbb{R}^p$ be a finite point set. 
For each scale $\varepsilon > 0$, let $A(\varepsilon) \in \mathbb{R}^{n\times n}$ denote the weighted adjacency matrix 
\begin{equation}\label{eq:adjacency_matrix}
A_{ij}(\varepsilon) = 
\begin{cases}
\exp(-\dfrac{\|x_i - x_j\|_2^2}{\varepsilon^{\alpha}}), &i \neq j, \|x_i-x_j\|_2 < \varepsilon, \\
0, &\text{otherwise}, 
\end{cases}
\end{equation}
with a constant $\alpha > 0$ controlling the decay rate of edge weights with respect to the scale parameter. 
Let $D(\varepsilon)$ be the diagonal degree matrix with $D_{ii}(\varepsilon) = \sum_{j = 1}^n A_{ij}(\varepsilon)$, 
and define the (unnormalized) graph Laplacian $L(\varepsilon) = D(\varepsilon) - A(\varepsilon)$. 
Each adjacency matrix $A(\varepsilon)$ induces a weighted graph $G(X, A(\varepsilon))$, capturing both the connectivity and the relative affinities among the data points $X$ at scale $\varepsilon$. 
We refer to the sequence $\{G(X, A(\varepsilon))\}_{\varepsilon > 0}$ as the \emph{multi-scale topological graphs}, as illustrated in \cref{fig:ph-to-multi_scale_graphs}. 

\begin{figure}[t]
\centering
\includegraphics[width=1\textwidth]{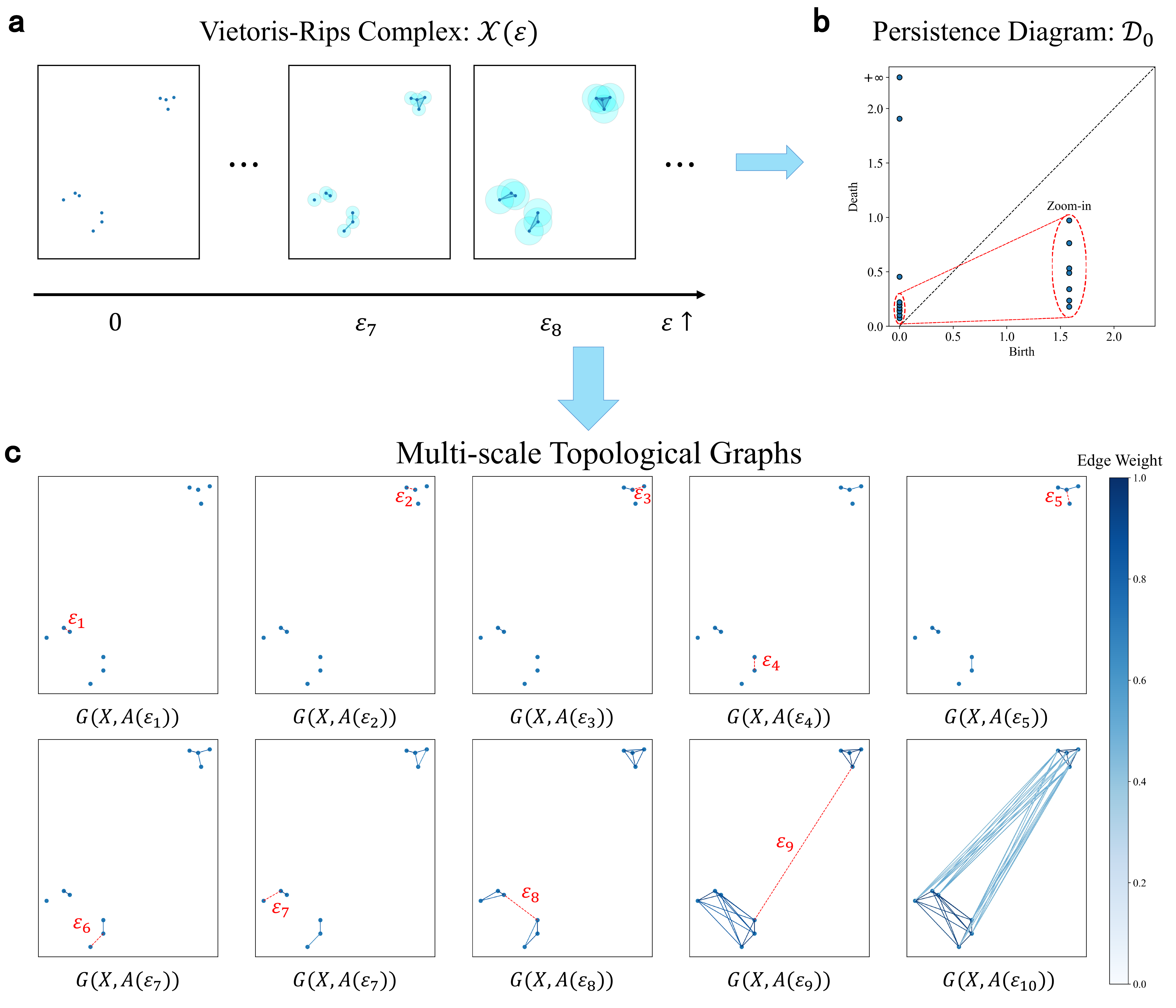}
\caption{Illustration of the construction of multi-scale topological graphs from persistent homology. 
(a) Vietoris--Rips complex $\mathcal{X}(\varepsilon)$ on a dataset $X \in \mathbb{R}^{2 \times 10}$. 
(b) The associated $0$-dimensional persistence diagram $\mathcal{D}_0$, encoding the death times of connected components. 
(c) A sequence of weighted graphs defined at the persistence-based scales $\{\varepsilon_t\}$, which serve as the multi-scale topological graphs.}
\label{fig:ph-to-multi_scale_graphs}
\end{figure}

\begin{theorem}[Piecewise Lipschitz Continuity of Laplacian]
\label{thm:piecewise_lipschitz_continuity}
Let $\Delta^\star = \{\delta_1 < \delta_2 < \dots < \delta_m\}$ be the distance-scale set on a finite point set $X = \{x_1, x_2, \dots, x_n\} \subset \mathbb{R}^p$. 
For each $k = 1, 2, \dots, m-1$, there exists a constant $C > 0$, such that for all $\varepsilon, \varepsilon' \in  (\delta_k, \delta_{k+1}]$, the Laplacian satisfies
\[
\|L(\varepsilon) - L(\varepsilon')\|_F \le C|\varepsilon - \varepsilon'|. 
\]
\end{theorem}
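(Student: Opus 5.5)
The plan is to show that on each interval $(\delta_k,\delta_{k+1}]$ the edge set of the graph is frozen, so that $L(\varepsilon)$ depends smoothly on $\varepsilon$ only through the Gaussian weights. For $\varepsilon\in(\delta_k,\delta_{k+1}]$ the condition $\|x_i-x_j\|_2<\varepsilon$ holds exactly when $\|x_i-x_j\|_2\le\delta_k$. The reason is that no pairwise distance lies strictly between $\delta_k$ and $\delta_{k+1}$, and $\delta_{k+1}$ itself is excluded by the strict inequality even at $\varepsilon=\delta_{k+1}$. Hence the edge set $E_k:=\{(i,j): i\ne j,\ \|x_i-x_j\|_2\le\delta_k\}$ is independent of $\varepsilon$ on this half-open interval. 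The choice of the half-open interval and the strict inequality in \eqref{eq:adjacency_matrix} are precisely what make this work.

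With the support fixed, each nonzero entry is $A_{ij}(\varepsilon)=f_{ij}(\varepsilon):=\exp(-r_{ij}^2\varepsilon^{-\alpha})$ with $r_{ij}=\|x_i-x_j\|_2$. This function is $C^1$ on $(0,\infty)$, with derivative
\[
f_{ij}'(\varepsilon)=\alpha r_{ij}^2\varepsilon^{-\alpha-1}\exp(-r_{ij}^2\varepsilon^{-\alpha}).
\]
On $(\delta_k,\delta_{k+1}]\subset[\delta_1,\infty)$ we have $\varepsilon\ge\delta_1>0$. We may also assume the points are distinct, or otherwise restrict to $\delta_1>0$ as the set of positive distances. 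Therefore
\[
|f_{ij}'(\varepsilon)|\le \alpha r_{ij}^2\delta_1^{-\alpha-1}=:M_{ij}.
\]
An alternative is to use $\sup_{s>0}s e^{-s}=e^{-1}$ to get the bound $\alpha e^{-1}\varepsilon^{-1}\le\alpha e^{-1}\delta_k^{-1}$, which avoids the $r_{ij}$ dependence. Either bound is fine. By the mean value theorem, $|A_{ij}(\varepsilon)-A_{ij}(\varepsilon')|\le M|\varepsilon-\varepsilon'|$ for every $(i,j)\in E_k$, where $M=\alpha e^{-1}\delta_k^{-1}$. Entries outside $E_k$ vanish identically on the interval, so their differences are zero.

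It remains to pass to the Laplacian. Off-diagonal entries of $L(\varepsilon)-L(\varepsilon')$ are $-(A_{ij}(\varepsilon)-A_{ij}(\varepsilon'))$. Diagonal entries are $\sum_{j}(A_{ij}(\varepsilon)-A_{ij}(\varepsilon'))$, which are bounded in absolute value by $(n-1)M|\varepsilon-\varepsilon'|$. Summing squares gives
\[
\|L(\varepsilon)-L(\varepsilon')\|_F^2\le \bigl(n(n-1)M^2+n(n-1)^2M^2\bigr)|\varepsilon-\varepsilon'|^2 .
\]
This yields $C=M\sqrt{n(n-1)n}=n\sqrt{n-1}\,M$, or more crudely $C=n^{3/2}M$.

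I expect no real obstacle here. The only delicate point is the bookkeeping at the interval endpoints: one must verify that the support of $A$ does not jump within $(\delta_k,\delta_{k+1}]$, including at the right endpoint $\delta_{k+1}$. One must also keep $\varepsilon$ bounded away from $0$ so that the derivative bound is finite. Both are handled by the strict inequality in the adjacency definition and by $\delta_k\ge\delta_1>0$.
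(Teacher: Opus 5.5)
Your proof is correct and follows the paper's argument. You freeze the edge support on $(\delta_k,\delta_{k+1}]$, bound the derivative of the Gaussian weights uniformly and apply the mean value theorem, then pass to the Laplacian; the paper uses $\|\Delta D\|_F\le\sqrt{n}\,\|\Delta A\|_F$ and the triangle inequality where you sum squares entrywise, and your justification of the frozen support via the strict inequality at the right endpoint $\delta_{k+1}$ is more accurate than the paper's claim that $(\delta_k,\delta_{k+1}]$ contains no element of $\Delta^\star$.
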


\begin{proof}
The proof is provided in the supplementary material \cref{Supp-sec:thm_piecewise_lipschitz_continuity}. 
\end{proof}

\begin{theorem}[Lipschitz Bound Between Adjacent Scales]
\label{thm:lipschitz_bound}
For any two consecutive scales $\delta_k, \delta_{k+1} \in \Delta^\star$, there exists a constant $C > 0$, depending only on $X$ and $n$, such that 
\[
\|L(\delta_{k+1}) - L(\delta_k)\|_F \le C(\delta_{k+1} - \delta_k). 
\]
Similarly, for any two consecutive scales $\varepsilon_t, \varepsilon_{t+1} \in \Lambda^\star$, there exists a constant $C > 0$, depending only on $X$ and $n$, such that 
\[
\|L(\varepsilon_{t+1}) - L(\varepsilon_t) \|_F \le C(\varepsilon_{t+1} - \varepsilon_t).
\]
\end{theorem}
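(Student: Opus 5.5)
The plan is a finiteness argument. $\Delta^\star$ and $\Lambda^\star$ are finite sets of strictly ordered scales, and $L(\varepsilon)$ is uniformly bounded. So a single constant $C$ can absorb every consecutive increment, including any jump caused by edges entering the graph. We will not try to get a Lipschitz estimate from smoothness across $\delta_k$. Because edges require the strict inequality $\|x_i-x_j\|_2<\varepsilon$, all edges of length exactly $\delta_k$ are absent from $A(\delta_k)$ but present in $A(\delta_{k+1})$. Hence $L$ has a genuine discontinuity there, and \cref{thm:piecewise_lipschitz_continuity} alone (valid on $(\delta_k,\delta_{k+1}]$) cannot bridge it.

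\emph{Step 1 (uniform bound on $L$).} Every entry satisfies $0\le A_{ij}(\varepsilon)\le 1$, so $0\le D_{ii}(\varepsilon)\le n-1$. A direct count gives
\[
\|L(\varepsilon)\|_F^2=\sum_i D_{ii}^2+\sum_{i\ne j}A_{ij}^2\le n(n-1)^2+n(n-1)=n^2(n-1).
\]
Hence $\|L(\varepsilon)-L(\varepsilon')\|_F\le 2n^{3/2}$ for all $\varepsilon,\varepsilon'>0$.

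\emph{Step 2 (first claim).} Set $\gamma:=\min_k(\delta_{k+1}-\delta_k)>0$, which is positive since the $\delta_k$ are distinct and finitely many. Then
\[
\|L(\delta_{k+1})-L(\delta_k)\|_F\le 2n^{3/2}\le \frac{2n^{3/2}}{\gamma}(\delta_{k+1}-\delta_k),
\]
so $C:=2n^{3/2}/\gamma$ works and depends only on $X$ and $n$. If we want a sharper constant, we can split the difference as $L(\delta_{k+1})-L(\delta_k^+)$ plus a jump term. The first piece is controlled by \cref{thm:piecewise_lipschitz_continuity}, applied as $\varepsilon\downarrow\delta_k$ with its constant extended by continuity. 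The jump term involves only the edges of length $\delta_k$, each with weight at most $1$, and is bounded by the same crude estimate divided by the gap.

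\emph{Step 3 (second claim).} Each death time $\varepsilon_t\in\Lambda^\star$ is a pairwise distance (an edge length at which two components merge) or $\delta_{\max}$, so $\Lambda^\star\subseteq\Delta^\star$. We can therefore write $\varepsilon_t=\delta_{k_0}<\delta_{k_0+1}<\dots<\delta_{k_1}=\varepsilon_{t+1}$ and telescope with the triangle inequality:
\[
\|L(\varepsilon_{t+1})-L(\varepsilon_t)\|_F\le\sum_{k=k_0}^{k_1-1}C(\delta_{k+1}-\delta_k)=C(\varepsilon_{t+1}-\varepsilon_t),
\]
using the uniform $C$ from Step 2. Alternatively, the same finiteness argument with the minimal gap of $\Lambda^\star$ gives the result directly.

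The main obstacle is conceptual rather than computational. The increments are not small in the sense of continuity, because the strict-inequality edge rule produces jumps at each $\delta_k$. The statement therefore holds only with a constant that depends on the minimal gap of the scale set. This is why we need the strict ordering of the scales (generic position) and why we must ensure $C$ depends only on $X$ and $n$. The one point to check carefully is $\Lambda^\star\subseteq\Delta^\star$, including the convention that $\delta_{\max}$ belongs to both sets.
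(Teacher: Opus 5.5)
Your proof is correct. For the $\Lambda^\star$ claim it matches the paper's proof: both note that $\Lambda^\star\subseteq\Delta^\star$, since the finite death times are pairwise distances and the last one is replaced by $\delta_{\max}$. Both then telescope through the intermediate distance scales using the triangle inequality.

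For the $\Delta^\star$ claim your main route is different and more elementary. The paper splits $\Delta A$ into two parts and then passes from $A$ to $D$ and $L$:
\begin{itemize}
\item an old-edge part, bounded by the piecewise Lipschitz theorem (derivative bounds on the Gaussian weights);
\item a new-edge part, consisting of the edges of length exactly $\delta_k$, bounded by $\sqrt{|E_{\mathrm{new}}|}\le C_2(\delta_{k+1}-\delta_k)$ with $C_2=\sqrt{n}/\min_t(\delta_{t+1}-\delta_t)$.
\end{itemize}
You apply the ``bounded increment divided by the minimal gap'' argument to the whole Laplacian instead. Using $\|L(\varepsilon)\|_F\le n^{3/2}$ you get $C=2n^{3/2}/\gamma$, and the paper's split appears only as your optional refinement.

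The paper's jump term already forces a constant of order $1/\min_t(\delta_{t+1}-\delta_t)$. So its decomposition gives no qualitatively stronger statement. What it buys is interpretability: it separates the smooth reweighting of existing edges from the topological jump caused by newly admitted edges.

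Your version makes explicit that the estimate is a finiteness-and-boundedness fact, not a continuity fact. It also avoids three loose points in the paper's argument:
\begin{itemize}
\item The piecewise theorem is applied at the excluded endpoint $\delta_k$, which needs extension by continuity; you flag this correctly.
\item The paper writes the new-edge increment as $\exp(-\|x_i-x_j\|_2^2/\delta_k^{\alpha})$. The correct value is $\exp(-\|x_i-x_j\|_2^2/\delta_{k+1}^{\alpha})$, because these edges are absent at $\delta_k$. This is harmless, since both values lie in $(0,1)$.
\item The bound $\sqrt{|E_{\mathrm{new}}|}\le\sqrt{n}$ tacitly assumes that few pairs share the length $\delta_k$.
\end{itemize}

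One small correction to your closing remark: generic position is not needed here. $\Delta^\star$ and $\Lambda^\star$ are finite sets of distinct reals, so their minimal gaps are automatically positive.
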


\begin{proof}
\textbf{(i) Proof on $\Delta^\star$.}
Fix two consecutive scales $\delta_k, \delta_{k+1} \in \Delta^\star$. 
Denote 
\[
\Delta A := A(\delta_{k+1}) - A(\delta_k), \
\Delta D := D(\delta_{k+1}) - D(\delta_k), \
\Delta L := L(\delta_{k+1}) - L(\delta_k) = \Delta D - \Delta A. 
\]
Introduce the edge sets 
\[
E_{\mathrm{old}} := \{(i, j) | \|x_i - x_j\|_2 < \delta_k\}, \quad
E_{\mathrm{new}} := \{(i, j) | \delta_k \le \|x_i - x_j\|_2 < \delta_{k+1} \}, 
\]
and decompose $\Delta A = (\Delta A)_{\mathrm{old}} + (\Delta A)_{\mathrm{new}}$, where $(\Delta A)_{\mathrm{old}}$ corresponds to existing edges in $E_{\mathrm{old}}$ and $(\Delta A)_{\mathrm{new}}$ corresponds to newly added edges in $E_{\mathrm{new}}$. 

For edges in $E_{\mathrm{old}}$, the support of the adjacency matrix remains unchanged on $(\delta_k, \delta_{k+1}]$. 
By \cref{thm:piecewise_lipschitz_continuity}, there exists a constant $C_1 > 0$, independent of $k$, such that $\|(\Delta A)_{\mathrm{old}}\|_F \le C_1 (\delta_{k+1}-\delta_k)$. 
For edges in $E_{\mathrm{new}}$, we have $(\Delta A)_{ij} = A_{ij}(\delta_k) = \exp(-\|x_i-x_j\|_2^2 / \delta_k^{\alpha})$, and hence $0 < (\Delta A)_{ij} < 1$. 
Since $X$ is finite, the number of newly added edges $|E_{\mathrm{new}}|$ is finite. 
Therefore, 
\[
\|(\Delta A)_{\mathrm{new}}\|_F < \sqrt{|E_{\mathrm{new}}|} \le C_2(\delta_{k+1} - \delta_k), 
\]
where $C_2 = \sqrt{n} / \min_t (\delta_{t+1} - \delta_t)$ is a constant, independent of $k$. 
Combining the two parts yields 
\[
\|\Delta A\|_F \le C_3(\delta_{k+1} - \delta_k), 
\]
with $C_3$ depending only on $X$ and $n$. 

Moreover, $\|\Delta D\|_F \le \sqrt{n}\|\Delta A\|_F \le C_4(\delta_{k+1}-\delta_k)$. 
Finally, 
\[
\|\Delta L\|_F \le \|\Delta D\|_F + \|\Delta A\|_F \le C(\delta_{k+1}-\delta_k),
\]
where $C = C_3 + C_4$ depends only on $X$ and $n$. 
This proves the claim on $\Delta^\star$. 

\textbf{(ii) Proof on $\Lambda^\star$.}
Let $\varepsilon_t, \varepsilon_{t+1} \in \Lambda^\star$. 
Denote by $\delta_{k_1}, \delta_{k_2}, \dots, \delta_{k_r}$ all consecutive scales in $\Delta^\star$ lying in $[\varepsilon_t,\varepsilon_{t+1}]$, with
$\delta_{k_1}=\varepsilon_t$ and $\delta_{k_r}=\varepsilon_{t+1}$. 
Then
\[
L(\varepsilon_{t+1}) - L(\varepsilon_{t}) = \sum_{j = 1}^{r-1}( L(\delta_{k_{j+1}}) - L(\delta_{k_j})). 
\]
Applying the triangle inequality and the first part of the proof yields 
\[
\|L(\varepsilon_{t+1}) - L(\varepsilon_t)\|_F \le \sum_{j = 1}^{r-1} \|L(\delta_{k_{j+1}}) - L(\delta_{k_j})\|_F \le \sum_{j = 1}^{r-1} C(\delta_{k_{j+1}} - \delta_{k_j}) = C(\varepsilon_{t+1} - \varepsilon_t). 
\]
This completes the proof. 
\end{proof}

Since operator-level stability alone does not directly characterize the behavior of the Laplacian spectrum, we next show that, when the scale parameter is restricted to the canonical minimal sufficient scale set $\Lambda^\star$, the Laplacian eigenvalues exhibit not only Lipschitz stability but also a monotone evolution across consecutive scales. 

\begin{theorem}[Spectral stability and monotonicity on the canonical minimal sufficient scale set]
\label{thm:spectral_stability_monotonicity}
Let $\Lambda^\star = \{\varepsilon_1 < \varepsilon_2 < \cdots < \varepsilon_n\}$ be the canonical minimal sufficient scale set on a finite point set $X = \{x_1, x_2, \dots, x_n\} \subset \mathbb{R}^p$. 
Let $0 = \gamma_1(\varepsilon) \le \gamma_2(\varepsilon) \le \cdots \le \gamma_n(\varepsilon)$ denote the eigenvalues of $L(\varepsilon)$. 
Then for any two consecutive scales $\varepsilon_t, \varepsilon_{t+1} \in \Lambda^\star$, the following statements hold. 

\textbf{(1) Lipschitz bound for eigenvalues.}
There exists a constant $C > 0$, depending only on $X$ and $n$, such that for all $k = 1, 2, \dots, n$, 
\[
|\gamma_k(\varepsilon_{t+1}) - \gamma_k(\varepsilon_t)| \le C(\varepsilon_{t+1} - \varepsilon_t). 
\]

\textbf{(2) Monotonicity of eigenvalues.} 
\[
\gamma_k(\varepsilon_t) \le \gamma_k(\varepsilon_{t+1}), \quad \forall k = 1, 2, \dots, n.
\]
Moreover, letting $k = \dim\ker L(\varepsilon_t)$, we have $0 = \gamma_k(\varepsilon_t) < \gamma_k(\varepsilon_{t+1})$. 

\textbf{(3) Strict decrease of kernel dimension.}
As a direct consequence of (2), 
\[
\dim\ker L(\varepsilon_t) > \dim\ker L(\varepsilon_{t+1}). 
\]
\end{theorem}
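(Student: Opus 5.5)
The plan is to handle the three statements in order. Everything rests on two facts: a standard eigenvalue perturbation bound, and the observation that $L(\varepsilon_{t+1})-L(\varepsilon_t)$ is itself the Laplacian of a nonnegatively weighted graph.

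\textbf{Step 1 (Lipschitz bound).} I will apply Weyl's inequality to the symmetric matrices $L(\varepsilon_t)$ and $L(\varepsilon_{t+1})$:
\[
|\gamma_k(\varepsilon_{t+1})-\gamma_k(\varepsilon_t)| \le \|L(\varepsilon_{t+1})-L(\varepsilon_t)\|_2 \le \|L(\varepsilon_{t+1})-L(\varepsilon_t)\|_F .
\]
Part (ii) of \cref{thm:lipschitz_bound} then bounds the right-hand side by $C(\varepsilon_{t+1}-\varepsilon_t)$, with $C$ depending only on $X$ and $n$. This gives (1) for every $k$ at once.

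\textbf{Step 2 (Monotonicity).} Set $E:=L(\varepsilon_{t+1})-L(\varepsilon_t)$. Since the Laplacian is linear in the adjacency matrix, $E$ is the Laplacian of the graph with weights $A_{ij}(\varepsilon_{t+1})-A_{ij}(\varepsilon_t)$. I will check that every such weight is nonnegative, in two cases.
\begin{itemize}
\item If $\|x_i-x_j\|_2<\varepsilon_t$, the edge is present at both scales. Since $\alpha>0$, the map $\varepsilon\mapsto \exp(-\|x_i-x_j\|_2^2/\varepsilon^\alpha)$ is nondecreasing, so the difference is $\ge 0$.
\item If $\varepsilon_t\le \|x_i-x_j\|_2<\varepsilon_{t+1}$, the edge is new and the difference equals $A_{ij}(\varepsilon_{t+1})>0$.
\end{itemize}
So $E$ is the Laplacian of a nonnegatively weighted graph, and $x^\top E x=\sum_{i<j} w_{ij}(x_i-x_j)^2\ge 0$, i.e.\ $E\succeq 0$. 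The Courant--Fischer min-max characterization (equivalently, Weyl's monotonicity theorem) then gives $\gamma_k(\varepsilon_t)\le\gamma_k(\varepsilon_t)+\gamma_1(E)\le\gamma_k(\varepsilon_{t+1})$ for all $k$.

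\textbf{Step 3 (Strictness and kernel drop).} Here I will use the standard fact that $\dim\ker L(\varepsilon)$ equals the number of connected components of $G(X,A(\varepsilon))$. All weights on edges are positive, so this graph has the same components as the $1$-skeleton of $\mathcal{X}(\varepsilon)$. Hence $\dim\ker L(\varepsilon)=\beta_0(\mathcal{X}(\varepsilon))$.

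I then need to show $\beta_0(\mathcal{X}(\varepsilon_{t+1}))<\beta_0(\mathcal{X}(\varepsilon_t))$. Because the VR complex uses the strict inequality $\|x_i-x_j\|_2<\varepsilon$, the merging edge of length $\varepsilon_t$ is absent at scale $\varepsilon_t$. It is present at $\varepsilon_{t+1}$, since $\varepsilon_t<\varepsilon_{t+1}$. So at least one merge occurs between the two scales, and by generic position exactly one does. Equivalently, by \cref{thm:critical_scale_set}, consecutive elements of $\Lambda^\star$ carry non-isomorphic, nested $\mathcal{H}_0$, and $\beta_0$ is nonincreasing along the filtration; together these force a strict drop.

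With $k=\dim\ker L(\varepsilon_t)$ we have $\gamma_k(\varepsilon_t)=0$. Since $\dim\ker L(\varepsilon_{t+1})\le k-1$, also $\gamma_k(\varepsilon_{t+1})>0$, which proves the strict part of (2). Statement (3) follows because the kernel dimension is the number of zero eigenvalues: the eigenvalues are ordered and nonnegative, and monotone by Step 2.

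I expect Step 3 to be the main obstacle. It hinges on matching the strict-inequality convention in the VR complex and in \eqref{eq:adjacency_matrix} with the death-time indexing, so I must be careful to place the merge inside $(\varepsilon_t,\varepsilon_{t+1}]$ rather than at its endpoint. If the indexing turned out to be off by one, I would fall back on the sufficiency and minimality property from \cref{thm:critical_scale_set} to get $F(\varepsilon_t)\neq F(\varepsilon_{t+1})$ directly.
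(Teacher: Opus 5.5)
Your proposal is correct and follows the same route as the paper: Weyl's inequality combined with the Frobenius bound of \cref{thm:lipschitz_bound} for (1), the Loewner ordering $L(\varepsilon_{t+1}) \succeq L(\varepsilon_t)$ with Courant--Fischer for (2), and a forced connectivity change between consecutive elements of $\Lambda^\star$ for the strict drop in kernel dimension. Your version is more careful than the paper's in two places: the paper attributes $L(\varepsilon_{t+1})-L(\varepsilon_t)$ only to newly added edges, while you also check that existing edge weights are nondecreasing in $\varepsilon$ (since $\alpha>0$), and you make explicit both $\dim\ker L(\varepsilon)=\beta_0(\mathcal{X}(\varepsilon))$ and the strict-inequality convention that places the merge inside $(\varepsilon_t,\varepsilon_{t+1}]$.
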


\begin{proof}
The proof is provided in the supplementary material \cref{Supp-sec:thm_spectral_stability_monotonicity}. 
\end{proof}

For consecutive scales in $\Delta^\star \setminus \Lambda^\star$, the induced Laplacian perturbation preserves graph connectivity and the nullity of $L(\delta)$, affecting only intra-component structure. 
In contrast, consecutive scales in $\Lambda^\star$ necessarily induce connectivity changes, manifested by a strict decrease in the multiplicity of the zero eigenvalue. 
Thus, $\Delta^\star$ and $\Lambda^\star$ correspond respectively to metric refinements and topology-altering transitions.

\subsection{Persistent nonnegative matrix factorization}
The goal of pNMF is to learn a family of multi-scale low-rank embeddings $\{H_t\}_{t=1}^n$ for the input data matrix $X \in \mathbb{R}^{p \times n}$, where each embedding corresponds to a graph constructed at a topologically meaningful scale $\varepsilon_t \in \Lambda^\star$. 
At each scale $\varepsilon_t$, $X$ is approximated by a nonnegative factorization $X \approx W_t H_t$, with a basis matrix $W_t \in \mathbb{R}^{p \times d}$ and a coefficient (embedding) matrix $H_t \in \mathbb{R}^{d \times n}$. 
For notational convenience, we write $A_t := A(\varepsilon_t)$, $D_t := D(\varepsilon_t)$, and $L_t := L(\varepsilon_t)$. 
The collection $\{H_t\}_{t=1}^n$ is referred to as the \emph{multi-scale low-rank embeddings}. 

Motivated by the persistence principle in topological data analysis, pNMF enforces that embeddings evolve smoothly across scales, so that structures persisting over wide scale ranges give rise to stable latent components, while short-lived structures are naturally suppressed. 
This is achieved by integrating scale-adaptive geometric regularization, explicit cross-scale coupling, and a stabilizing anchoring term within a unified optimization framework, leading to the following model: 
\begin{equation}\label{eq:pnmf_model}
\begin{aligned}
\min_{\substack{W_t, H_t \\ t=1, 2, \ldots, n}} \mathcal{O} = &\underbrace{\sum_{t=1}^n \|X - W_tH_t\|_F^2}_{\text{data fidelity}} + \underbrace{\lambda_1\sum_{t=1}^n \mathrm{Tr}(H_t L_t H_t^\top)}_{\text{geometric regularization}} \\ 
&+ \underbrace{\lambda_2\sum_{t=2}^n \|H_t - H_{t-1}\|_F^2}_{\text{scale smoothness}} + \underbrace{\lambda_3\sum_{t=1}^n\|H_t\|_F^2}_{\text{scale anchoring}} \\
\text{s.t. } \quad &W_t \ge 0, \ H_t \ge 0, \quad \forall t=1, 2, \ldots, n. 
\end{aligned}
\end{equation}

The four terms in \eqref{eq:pnmf_model} play distinct and complementary roles. 
The data fidelity term ensures faithful low-rank approximation of $X$ at each scale, 
while the geometric regularization term, based on the graph Laplacian $L_t$, encourages the embeddings to respect the local manifold structure encoded by the scale-specific graph $A_t$. 
The scale smoothness term explicitly couples successive embeddings, promoting continuity across scales, and the anchoring term controls the overall magnitude of the embeddings to improve numerical stability. 
The parameters $\lambda_1 > 0$, $\lambda_2 > 0$, and $\lambda_3 > 0$ balance local geometry preservation, cross-scale coherence, and embedding regularity, respectively. 
The nonnegativity constraints preserve the interpretability of the learned embeddings. 

\paragraph{Remark}
The pNMF formulation possesses several distinctive properties. 
First, the scale sequence $\{\varepsilon_t\}_{t=1}^n$ is not chosen heuristically but is derived from the $0$-dimensional persistence diagram, ensuring that each embedding corresponds to a meaningful transition in the connectivity structure of the data. 
Second, geometric regularization is imposed through scale-specific graph Laplacians, enabling the model to preserve local manifold structures in a scale-adaptive manner rather than relying on a fixed neighborhood construction. 
Third, the joint optimization framework, incorporating both cross-scale smoothness and anchoring, promotes coherent and stable evolution of embeddings across scales. 
Finally, the nonnegativity constraints yield additive and parts-based embeddings, enhancing interpretability and allowing the progression of latent components to be explicitly tracked along the scale sequence.

\subsection{Theoretical Analysis of Multi-scale Embeddings Continuity}
While pNMF is motivated by topological persistence, it is important to theoretically understand how the embeddings $\{H_t\}_{t=1}^n$ evolve across scales. 
Building on the Lipschitz bound of the graph Laplacians across adjacent scales in $\Lambda^\star$ established in \cref{thm:lipschitz_bound}, we establish bounds on the increments of the pNMF embeddings. 

\begin{theorem}[Bounds for multi-scale embedding increments]
\label{thm:embedding_increment_bounds}
Let $\{\!(W_t, H_t)\!\}_{t=1}^n$ be a first-order stationary point of the pNMF model \eqref{eq:pnmf_model}. 
For $1 < t \le n$, define
\[
\Delta H_t := H_t - H_{t-1}, \qquad \Delta L_t := L_t - L_{t-1}, \qquad C_X := \|X^\top X\|_F. 
\]
Assume that for all $t$, $H_t$ has full row rank $d$, and there exist constants $C_H > 0$ and $\ell > 0$ such that $\|H_t\|_F \le C_H$ and $\gamma_{\min}(H_tH_t^\top) \ge \ell$, where $\gamma_{min}$ means the minimal eigenvalue of the corresponding matrix. 
Then the increments satisfy the recursive inequality
\begin{equation}\label{eq:recursive_inequality}
(2\lambda_2 + \lambda_3 - C)\|\Delta H_t\|_F \le \lambda_1 C_H\|\Delta L_t\|_F + \lambda_2(\|\Delta H_{t-1}\|_F + \|\Delta H_{t+1}\|_F), 
\end{equation}
where $C := (C_1\sqrt{n-d} + C_2\frac{C_H}{\ell})C_X, \quad C_1 := \frac{1}{\ell} + \frac{2C_H^2}{\ell^2}, \quad C_2 := C_1C_H + \frac{C_H}{\ell}$. 

Furthermore, assuming $\lambda_3 > C$, define 
\[
a := \frac{\lambda_1 C_H}{2\lambda_2 + \lambda_3 -C}, \qquad b := \frac{\lambda_2}{2\lambda_2 + \lambda_3 - C}, 
\]
the following two bounds hold:
\begin{equation}\label{eq:uniform_bound}
\max_{t}\|\Delta H_t\|_F \le \frac{\lambda_1 C_H}{\lambda_3 - C} \max_{t}\|\Delta L_t\|_F, (\text{Uniform bound}) 
\end{equation}

\begin{equation}\label{eq:pointwise_bound}
\|\Delta H_t\|_F \le \frac{a}{\sqrt{1-4b^2}} \sum_{k} \rho^{|t-k|} \|\Delta L_k\|_F, (\text{Pointwise bound})
\end{equation}
where $\rho := \frac{1 - \sqrt{1 - 4b^2}}{2b} \in (0, 1)$. 
\end{theorem}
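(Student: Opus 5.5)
We will start from the first-order stationarity (KKT) conditions of \eqref{eq:pnmf_model}. Under the full-row-rank assumption on $H_t$, we treat the $H_t$-block as satisfying the stationarity equation with the multiplier absorbed. Concretely, the stationarity in $W_t$ (on the active support) gives $W_t = XH_t^\top(H_tH_t^\top)^{-1}$, so $W_tH_t = XP_t$ with $P_t := H_t^\top(H_tH_t^\top)^{-1}H_t$ the orthogonal projector onto the row space of $H_t$. Substituting into the stationarity condition in $H_t$ yields, for interior indices,
\[
-2W_t^\top(X-W_tH_t) + 2\lambda_1 H_tL_t + 2\lambda_2(2H_t - H_{t-1} - H_{t+1}) + 2\lambda_3 H_t = 0 .
\]
At the endpoints the $\lambda_2$ term has only one neighbor; we adopt the convention $\Delta H_1 = \Delta H_{n+1} = 0$. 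We then subtract the equation at $t-1$ from the one at $t$. The smoothness terms give $2\lambda_2\Delta H_t - \lambda_2(\Delta H_{t-1} + \Delta H_{t+1})$ after rearrangement. We write $H_tL_t - H_{t-1}L_{t-1} = \Delta H_t L_t + H_{t-1}\Delta L_t$; the second piece is bounded by $C_H\|\Delta L_t\|_F$, and we drop the first by positivity, since its inner product with $\Delta H_t$ equals $\mathrm{Tr}(\Delta H_t L_t\Delta H_t^\top)\ge 0$. The anchoring term gives $\lambda_3\Delta H_t$. Taking the inner product of the differenced equation with $\Delta H_t$ and applying Cauchy--Schwarz then isolates $(2\lambda_2+\lambda_3)\|\Delta H_t\|_F^2$ on the left. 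It remains to control the difference of the data-fidelity gradients $G_t := W_t^\top(X - W_tH_t)$ by $C\|\Delta H_t\|_F$.

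That Lipschitz estimate on $H\mapsto G(H)$ is the main obstacle. We will write $G_t = (H_tH_t^\top)^{-1}H_tX^\top X(I-P_t)$ and perturb each factor separately. For the inverse, $\|(H_tH_t^\top)^{-1} - (H_{t-1}H_{t-1}^\top)^{-1}\| \le \ell^{-2}\|H_tH_t^\top - H_{t-1}H_{t-1}^\top\| \le 2C_H\ell^{-2}\|\Delta H_t\|$. Combining this with the $H_t$ factor gives the Lipschitz constant $C_1 = 1/\ell + 2C_H^2/\ell^2$ for $H\mapsto (HH^\top)^{-1}H$. The projector $I-P_t$ has rank $n-d$, so $\|I-P_t\|_F=\sqrt{n-d}$, which produces the term $C_1\sqrt{n-d}\,C_X$. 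The projector difference is controlled by $\|P_t - P_{t-1}\|_F \le C_2\|\Delta H_t\|_F$ via $P = H^\top\cdot (HH^\top)^{-1}H$, giving $C_2 = C_1C_H + C_H/\ell$. Multiplying by $\|(H_{t-1}H_{t-1}^\top)^{-1}H_{t-1}\|\le C_H/\ell$ and by $C_X$ gives the second term. Assembling everything produces $\|G_t - G_{t-1}\|_F\le C\|\Delta H_t\|_F$ with exactly the stated $C$; dividing by $\|\Delta H_t\|_F$ then gives \eqref{eq:recursive_inequality}. Norm-type bookkeeping (operator versus Frobenius) must be done carefully so that the constants match.

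For \eqref{eq:uniform_bound}, we take the maximum over $t$ in \eqref{eq:recursive_inequality}, writing $M:=\max_t\|\Delta H_t\|_F$. This gives $(2\lambda_2+\lambda_3-C)M\le \lambda_1C_H\max\|\Delta L_t\|_F + 2\lambda_2 M$, and we rearrange using $\lambda_3>C$. For \eqref{eq:pointwise_bound}, we rewrite the recursion as $u_t \le a\,f_t + b(u_{t-1}+u_{t+1})$ with $u_t=\|\Delta H_t\|_F$ and $f_t=\|\Delta L_t\|_F$; here $2b<1$ because $\lambda_3>C$. Equivalently, $(I-bT)u\le af$ entrywise, with $T$ the path adjacency operator. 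Since $\|bT\|_\infty\le 2b<1$, the Neumann series $(I-bT)^{-1}=\sum_m b^mT^m$ has nonnegative entries, and it preserves the inequality, so $u\le a(I-bT)^{-1}f$. The entries of this resolvent on $\mathbb{Z}$ are bounded by the Green's function $\rho^{|t-k|}/\sqrt{1-4b^2}$, where $\rho$ is the smaller root of $b\rho^2-\rho+b=0$. This follows either from the generating-function identity $\sum_m b^m(z+z^{-1})^m = 1/(1-b(z+z^{-1}))$ or from comparing the truncated path with the infinite line, where the path's resolvent is entrywise dominated. This yields \eqref{eq:pointwise_bound}.
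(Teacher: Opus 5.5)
Your proposal is correct and follows essentially the same route as the paper's proof: the same stationarity equations with $W_t = XH_t^\dagger$, the same differencing with $H_tL_t - H_{t-1}L_{t-1} = \Delta H_tL_t + H_{t-1}\Delta L_t$, and the same perturbation constants $C_1$, $C_2$, $C$. The uniform bound is again obtained by maximizing over $t$, and the pointwise bound uses the same Green's function $\rho^{|k|}/\sqrt{1-4b^2}$. Your inner-product step is a minor variant of the paper's lower bound $\|\Delta H_t(\lambda_1L_t+(2\lambda_2+\lambda_3)I)\|_F\ge(2\lambda_2+\lambda_3)\|\Delta H_t\|_F$, and your Neumann-series/walk-counting argument and endpoint convention $\Delta H_1=\Delta H_{n+1}=0$ make explicit what the paper leaves as an invoked comparison principle and a restriction to interior $t$.

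One point to tighten: ``multiplier absorbed'' is not a justification, and full row rank does not supply it. Like the paper, you must explicitly assume the nonnegativity constraints are inactive at scales $t$ and $t-1$. Otherwise $W_t\neq XH_t^\dagger$, and the multiplier difference $\Psi_t-\Psi_{t-1}$ enters the differenced equation uncontrolled.
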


\begin{proof}
\textbf{Proof of Recursive inequality.}
Fix  $1 < t < n$ and assume the nonnegativity constraints are inactive at scales $t$. 
The first-order stationary conditions with respect to $W_t$ and $H_t$ are 
\begin{align}
&(W_tH_t - X)H_t^\top = 0, \label{eq:FOC_Wt} \\
&W_t^\top(W_tH_t - X) + \lambda_1H_tL_t + \lambda_2(2H_t - H_{t-1} - H_{t+1}) + \lambda_3H_t = 0. \label{eq:FOC_Ht}
\end{align}
Since $\mathrm{rank}(H_t) = d$, \eqref{eq:FOC_Wt} implies $W_t = XH_t^\dagger$ and $H_t^\dagger = H_t^\top(H_tH_t^\top)^{-1}$. 
Define $R_t := W_t^\top(W_tH_t - X) = (H_t^\dagger)^\top X^\top X(H_t^\dagger H_t - I)$. 
Similarly, the stationary condition at scale $t-1 (2 < t < n)$ yields 
\begin{equation}\label{eq:FOC_Ht-1}
R_{t-1} + \lambda_1H_{t-1}L_{t-1} + \lambda_2(2H_{t-1} - H_{t-2} -H_t)+\lambda_3H_{t-1} = 0. 
\end{equation}
Subtracting \eqref{eq:FOC_Ht-1} from \eqref{eq:FOC_Ht}, we obtain 
\[
\Delta H_t(\lambda_1L_t + (2\lambda_2 + \lambda_3)I) = R_{t-1} - R_t - \lambda_1H_{t-1}\Delta L_t + \lambda_2(\Delta H_{t-1} + \Delta H_{t+1}). 
\]
Taking Frobenius norms on both sides and applying the triangle inequality gives 
\begin{align*}
&\|\Delta H_t(\lambda_1L_t + (2\lambda_2+\lambda_3)I)\|_F \\
\le &\|R_t - R_{t-1}\|_F + \lambda_1\|H_{t-1}\|_F\|\Delta L_t\|_F + \lambda_2(\|\Delta H_{t-1}\|_F + \|\Delta H_{t+1}\|_F).
\end{align*}

We first bound the residual difference by
\begin{align*}
&\|R_t - R_{t-1}\|_F \\ 
\le &\|(H_t^\dagger)^\top - (H_{t-1}^\dagger)^\top\|_F C_X \|H_t^\dagger H_t - I\|_F + \|(H_{t-1}^\dagger)^\top\|_F C_X \|H_t^\dagger H_t - H_{t-1}^\dagger H_{t-1}\|_F.
\end{align*}
Since $\mathrm{rank}(H_t) = d$, $H_t^\dagger H_t$ is an orthogonal projector in $\mathbb{R}^{n\times n}$ with rank $d$, thus $\|H_t^\dagger H_t-I\|_F = \sqrt{n-d}$. 
Moreover, under $\|H_t\|_F \le C_H$ and $\gamma_{\min}(H_tH_t^\top) \ge \ell$, one can bound
\[
\|(H_t^\dagger)^\top - (H_{t-1}^\dagger)^\top\|_F \le C_1\|\Delta H_t\|_F, \qquad \|(H_{t-1}^\dagger)^\top\|_F \le \frac{C_H}{\ell}, 
\]
and
\[
\|H_t^\dagger H_t - H_{t-1}^\dagger H_{t-1}\|_F \le (C_1C_H + \frac{C_H}{\ell}) \|\Delta H_t\|_F. 
\]
Combining the above estimates yields 
\[
\|R_t - R_{t-1}\|_F \le C\|\Delta H_t\|_F, \qquad \text{where }C=(C_1\sqrt{n-d} + C_2\frac{C_H}{\ell})C_X,
\]
with $C_1, C_2$ defined in the theorem statement. 

Since $L_t$ is a graph Laplacian, it is symmetric positive semidefinite and thus its smallest eigenvalue is zero. 
Consequently, $\lambda_1L_t + (2\lambda_2 + \lambda_3)I$ is symmetric positive semidefinite with all eigenvalues bounded below by $2\lambda_2 + \lambda_3$. 
Using the trace representation of the Frobenius norm, we have 
\[
\|\Delta H_t(\lambda_1L_t + (2\lambda_2 + \lambda_3)I)\|_F \ge (2\lambda_2 + \lambda_3)\|\Delta H_t\|_F, 
\]
where the inequality follows from the fact that
$(\lambda_1L_t + (2\lambda_2 + \lambda_3)I)^2 \succeq (2\lambda_2 + \lambda_3)^2 I$ and $\Delta H_t^\top\Delta H_t \succeq 0$. 
Combining the above inequalities, we arrive at 
\[
(2\lambda_2 + \lambda_3 - C)\|\Delta H_t\|_F \le \lambda_1C_H\|\Delta L_t\|_F + \lambda_2(\|\Delta H_{t-1}\|_F + \|\Delta H_{t+1}\|_F), 
\]
which proves \eqref{eq:recursive_inequality}. 

\textbf{Proof of Uniform bound.}
Taking the maximum over $t$ on both sides of \eqref{eq:recursive_inequality}, we obtain 
\[
\max_t\|\Delta H_t\|_F \le a\max_t\|\Delta L_t\|_F + b(\max_t\|\Delta H_{t-1}\|_F + \max_t\|\Delta H_{t+1}\|_F). 
\]
Since $\max_t\|\Delta H_t\|_F = \max_t\|\Delta H_{t-1}\|_F = \max_t\|\Delta H_{t+1}\|_F$, 
\[
(1-2b)\max_t\|\Delta H_t\|_F \le a\max_t\|\Delta L_t\|_F. 
\]
Under the assumption $\lambda_3 > C$, the coefficient $1 - 2b$ is strictly positive, and therefore 
\[
\max_{t}\|\Delta H_t\|_F \le \frac{a}{1-2b} \max_{t}\|\Delta L_t\|_F = \frac{\lambda_1 C_H}{\lambda_3 - C} \max_{t}\|\Delta L_t\|_F, 
\]
which proves \eqref{eq:uniform_bound}. 

\textbf{Proof of Pointwise bound.}
From \eqref{eq:recursive_inequality}, we have 
\[
\|\Delta H_t\|_F - b\|\Delta H_{t-1}\|_F - b\|\Delta H_{t+1}\|_F \le a\|\Delta L_t\|_F. 
\]
Consider the linear difference operator $(\mathcal{T} \cdot)_t := (\cdot)_t - b(\cdot)_{t-1} - b(\cdot)_{t+1}$. 
Then the above inequality can be written compactly as $(\mathcal{T}\|\Delta H\|)_t \le a\|\Delta L_t\|_F$. 

Since $0 < b < 1/2$, the operator $\mathcal{T}$ is strictly diagonally dominant and admits a unique decaying discrete Green’s function $\{h_k\}_{k \in \mathbb Z}$ satisfying $(\mathcal{T}h)_k = \delta_{k0}$ and $\lim_{|k|\to\infty} h_k = 0$. 
For $k \neq 0$, $h_k$ solves the homogeneous recurrence $h_k - b h_{k-1} - b h_{k+1} = 0$. 
Seeking solutions of the form $h_k = r^{|k|}$ with $|r| < 1$ yields the characteristic equation $br^2 - r + b = 0$, whose roots are $r_{\pm} = \frac{1\pm\sqrt{1 - 4b^2}}{2b}$. 
The decay condition selects $r_- \in (0,1)$, hence $h_k = C r_-^{|k|}$. 
Enforcing the equation at $k = 0$ gives $C = 1/\sqrt{1 - 4b^2}$. 
Therefore, 
\[
h_k = \frac{1}{\sqrt{1 - 4b^2}} \rho^{|k|}, \qquad \rho := \frac{1 - \sqrt{1 - 4b^2}}{2b} \in (0,1),
\]
and in particular $h_k \ge 0$ for all $k$. 

Extending $\|\Delta L_k\|_F$ by zero outside its natural index set and applying the comparison principle associated with $\mathcal{T}$ yields 
\[
\|\Delta H_t\|_F \le a(\mathcal{T}^{-1}\|\Delta L\|)_t = a\sum_{k} h_{t-k} \|\Delta L_k\|_F = \frac{a}{\sqrt{1 - 4b^2}} \sum_{k} \rho^{|t-k|} \|\Delta L_k\|_F, 
\]
which proves \eqref{eq:pointwise_bound}. 
\end{proof}

\section{Optimization Algorithm}
To minimize the objective function $\mathcal{O}$ in \eqref{eq:pnmf_model}, we employ a sequential alternating optimization scheme that iteratively updates the pairs $\{(W_t, H_t)\}_{t=1}^n$. 

For each $t$ with $1 \le t \le n$, let $H_{+1}$ and $H_{-1}$ denote the available temporal neighbors of $H_t$, that is, $H_{+1} = H_{t+1}$ if $t < n$ and undefined otherwise, and $H_{-1} = H_{t-1}$ if $t > 1$ and undefined otherwise. 
When a neighbor does not exist (i.e., $t = 1$ or $t = n$), the corresponding temporal term is omitted. 
Then, $(W_t, H_t)$ is updated by solving the following $t$-th subproblem: 
\begin{equation}\label{eq:opt}
\begin{aligned}
\min_{W, H} \quad
f_t(W,H) = &\|X - WH\|_F^2 + \lambda_1\mathrm{Tr}(HL_tH^\top)  \\
&+ \lambda_2(\|H_{+1} - H\|_F^2 + \|H - H_{-1}\|_F^2) + \lambda_3\|H\|_F^2 \\
\text{s.t.} \quad & W \ge 0, \ H \ge 0. 
\end{aligned}
\end{equation}

To solve this optimization problem, we introduce the Lagrange multipliers $\Phi \in \mathbb{R}^{p \times d}$ and $\Psi \in \mathbb{R}^{d \times n}$, corresponding to the nonnegativity constraints $W \ge 0$ and $H \ge 0$, respectively. 
The Lagrangian function $\mathcal{L}$ is defined as 
\begin{align*}
\mathcal{L}(W, H, \Phi, \Psi) = &\|X - WH\|_F^2 + \lambda_1\mathrm{Tr}(HL_tH^\top) + \lambda_2(\| H_{+1} - H \|_F^2 + \| H - H_{-1} \|_F^2) \\
&+ \lambda_3\|H\|_F^2 + \mathrm{Tr}(\Phi^\top W) + \mathrm{Tr}(\Psi^\top H).
\end{align*}
The Karush–Kuhn–Tucker (KKT) conditions yield the following necessary optimality conditions: 
\begin{equation}\label{eq:opt_KKT}
\begin{aligned}
\begin{cases}
\frac{\partial \mathcal{L}}{\partial W} = 2(WH - X)H^\top + \Phi = 0, \\
\frac{\partial \mathcal{L}}{\partial H} = 2W^\top(WH - X) + 2\lambda_1HL_t + 2\lambda_2(2H - H_{+1} - H_{-1}) + 2\lambda_3H + \Psi = 0, \\
W, H \ge 0, \quad \Phi, \Psi \le 0, \\
\Phi \odot W = 0, \quad \Psi \odot H = 0.
\end{cases}
\end{aligned} 
\end{equation}
Applying the KKT conditions leads to the following multiplicative update rules at the $r$-th iteration: 
\begin{align*}
&H^{(r+1)} = H^{(r)} \odot \frac{W^{(r)\top}X + \lambda_1H^{(r)}A_t + \lambda_2(H_{+1} + H_{-1})}{W^{(r)\top}W^{(r)}H^{(r)} + \lambda_1H^{(r)}D_t + (2\lambda_2 + \lambda_3)H^{(r)}}, \\
&W^{(r+1)} = W^{(r)} \odot \frac{XH^{(r+1)\top}}{W^{(r)}H^{(r+1)}H^{(r+1)\top}}. 
\end{align*}
Here, matrix division operations are performed \emph{element-wise} (Hadamard division), i.e., for any two matrices $A$ and $B$ of the same size, $(A/B)_{ij} = A_{ij}/B_{ij}$. 

We reinterpret the multiplicative update rules as a form of gradient descent with adaptive step sizes 
\begin{align*}
&H^{(r+1)} = H^{(r)} - \Gamma^{(r)} \odot  \nabla_H f_t(W^{(r)}, H^{(r)}), \\
&W^{(r+1)} = W^{(r)} - \Omega^{(r)} \odot  \nabla_W f_t(W^{(r)}, H^{(r+1)}).
\end{align*}
with step sizes 
\begin{align*}
&\Gamma^{(r)}= \frac{H^{(r)}}{2(W^{(r)\top}W^{(r)}H^{(r)}+\lambda_1H^{(r)}D_t + (2\lambda_2 + \lambda_3)H^{(r)})}, \\
&\Omega^{(r)} = \frac{W^{(r)}}{2W^{(r)}H^{(r+1)}H^{(r+1)\top}}.
\end{align*}
However, this formulation suffers from two issues:
\begin{itemize}
\item The denominator in the step size may be zero, causing numerical instability. 
\item If $(H^{(r)})_{ij} = 0$ (or $(W^{(r)})_{ij} = 0$) and the gradient is negative, the update is blocked due to zero step size.
\end{itemize}
To address these issues, we adopt the modification in \cite{lin2007convergence} 
\begin{equation}\label{eq:update_modify}
\begin{aligned}
&H^{(r+1)} = H^{(r)} - \bar{\Gamma}^{(r)} \odot \nabla_H f_t(W^{(r)}, H^{(r)}), \\
&W^{(r+1)} = W^{(r)} - \bar{\Omega}^{(r)} \odot \nabla_W f_t(W^{(r)}, H^{(r+1)}),
\end{aligned} 
\end{equation}
with step sizes 
\begin{align*}
&\bar{\Gamma}^{(r)} = \frac{\bar{H}^{(r)}}{2(W^{(r)\top}W^{(r)}\bar{H}^{(r)} + \lambda_1\bar{H}^{(r)}D_t + (2\lambda_2+\lambda_3)\bar{H}^{(r)})+\nu\mathbf{1}}, \\
&\bar{\Omega}^{(r)} = \frac{\bar{W}^{(r)}}{2(\bar{W}^{(r)}H^{(r+1)}H^{(r+1)\top})+\nu\mathbf{1}}.
\end{align*}
Here, $\bar{H}^{(r)}$ is defined element-wise as 
\begin{equation}\label{eq:H_modify_rewrite}
(\bar{H}^{(r)})_{ij} = 
\begin{cases}
(H^{(r)})_{ij}, & \nabla_H f_t(W^{(r)}, H^{(r)})_{ij} \ge 0, \\
\max\{(H^{(r)})_{ij}, \mu\}, & \nabla_H f_t(W^{(r)}, H^{(r)})_{ij} < 0, 
\end{cases}
\end{equation}
and $\bar{W}^{(r)}$ is defined analogously. 
Both $\nu$ and $\mu$ are predefined small positive constants (default $10^{-9}$). 

For the standard NMF objective function, it is easy to verify that if $W$ and $H$ form a solution, then so do $WS$ and $S^{-1}H$ for any $d \times d$ positive diagonal matrix $S$. 
This implies that the factorization is not unique. 
To eliminate this ambiguity and improve the interpretability of the learned factors, it is common to apply a normalization strategy, such as enforcing that the column sums of $W$ or the row sums of $H$ are equal to one. 

Our proposed pNMF model also adopts this strategy. 
After the optimization procedure converges, we normalize each matrix $W_t$ such that each of its columns sums to one. 
The corresponding matrix $H_t$ is then rescaled to preserve the product $W_tH_t$. 
The normalization step is defined as follows: 
\begin{equation}\label{eq:normalize}
(W_t)_{:,j} = \frac{(W_t)_{:,j}}{\sum\limits_{i=1}^p (W_t)_{ij}}, \quad (H_t)_{j,:} = (H_t)_{j,:} \cdot \sum_{i=1}^p (W_t)_{ij}. 
\end{equation}

The overall computational complexity of the proposed algorithm is $O(srn(dpn + d^2(p+n) + dn^2))$, where $s$ and $r$ denote the numbers of outer and inner iterations, respectively. 
A detailed complexity analysis is provided in the supplementary material \cref{Supp-sec:computational_complexity}. 
The detailed algorithm steps are outlined in \cref{alg:sequential_alternating_optimization}. 

\begin{algorithm}[tb]
\caption{Sequential Alternating Optimization}
\label{alg:sequential_alternating_optimization}
\textbf{Input}: $X \in \mathbb{R}^{p \times n}$ \\
\textbf{Parameters}: $\lambda_1, \lambda_2, \lambda_3$ \\
\textbf{Output}: $\{(W_{t},H_{t})\}_{t=1}^n$ \\
\textbf{Notation}: $(\cdot)^{(s,r)}$ denotes the value at outer iteration $s$ and inner iteration $r$.

\begin{algorithmic}[1]
\STATE Initialize $\{(W^{(0,0)}_t, H^{(0,0)}_t)\}_{t=1}^n$.
\STATE $s = 1$
\WHILE{$\mathcal{O}(W_1,H_1,W_2,H_2,...,W_n,H_n)$  not converge}
\FOR{$t=1$ \ to \ $n$}
\STATE $r = 1$
\WHILE{$f(W_t,H_t)$ not converge}
\STATE Update $\left(W^{(s,r)}_{t}, H^{(s,r)}_{t}\right)$ using \eqref{eq:update_modify}
\STATE $r += 1$
\ENDWHILE
\ENDFOR
\STATE $s += 1$
\ENDWHILE
\STATE Normalize $\{(W_{t},H_{t})\}_{t=1}^n$ using \eqref{eq:normalize}.
\STATE \textbf{return} $\{(W_{t},H_{t})\}_{t=1}^n$
\end{algorithmic}
\end{algorithm}

\section{Convergence Analysis}
We study the convergence behavior of the proposed update rules in \eqref{eq:update_modify} and establish several fundamental convergence properties. 

\begin{theorem}
\label{thm:subproblem_nonincreasing}
The objective function $f_t$ in \eqref{eq:opt} is nonincreasing under the update rules given in \eqref{eq:update_modify}. 
In particular, for all $1 \le t \le n$ and $r \ge 1$, it holds that 
\[
f_t(W_t^{(r+1)}, H_t^{(r+1)}) \le f_t(W_t^{(r)}, H_t^{(r+1)}) \le f_t(W_t^{(r)}, H_t^{(r)}).
\]
\end{theorem}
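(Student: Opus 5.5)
We use the standard auxiliary-function (majorization–minimization) argument of Lee--Seung, adapted to the modified step sizes of \cite{lin2007convergence}. Since the $H$-update uses $W^{(r)}$ fixed and the $W$-update uses $H^{(r+1)}$ fixed, the claim splits into two one-block inequalities: $f_t(W^{(r)},H^{(r+1)})\le f_t(W^{(r)},H^{(r)})$ and $f_t(W^{(r+1)},H^{(r+1)})\le f_t(W^{(r)},H^{(r+1)})$. With one block fixed, $f_t$ is a convex quadratic in the other block. For $H$ its Hessian acts as $\mathcal{Q}(H)=2(W^\top W H+\lambda_1 H L_t+(2\lambda_2+\lambda_3)H)$; for $W$ it acts as $2WHH^\top$. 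So each step is a scaled gradient step on a quadratic, and it suffices to show the step does not overshoot.

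For the $H$-step, write $G=\nabla_H f_t(W^{(r)},H^{(r)})$, $\bar H=\bar H^{(r)}$ and $\Delta=-\bar\Gamma^{(r)}\odot G$. Exact Taylor expansion gives
\[
f_t(H^{(r)}+\Delta)=f_t(H^{(r)})+\langle G,\Delta\rangle+\tfrac12\langle \Delta,\mathcal{Q}(\Delta)\rangle .
\]
Define the diagonal (entrywise) majorant $\mathcal{M}(\Delta)=\Delta\odot\bigl(2(W^\top W\bar H+\lambda_1\bar H D_t+(2\lambda_2+\lambda_3)\bar H)+\nu\mathbf 1\bigr)/\bar H$, so that $\bar\Gamma^{(r)}=\bar H/(\bar H\odot\mathcal{M}\text{-denominator})$, i.e.\ $\Delta=-\mathcal{M}^{-1}(G)$. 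The key lemma is $\langle\Delta,\mathcal{Q}(\Delta)\rangle\le\langle\Delta,\mathcal{M}(\Delta)\rangle$ for all $\Delta$.

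To prove the lemma, split $\mathcal{Q}$ into the part $2(W^\top W\,\cdot+\lambda_1\,\cdot\,D_t+(2\lambda_2+\lambda_3)\,\cdot)$, which has nonnegative entries, and $-2\lambda_1\,\cdot\,A_t$, which contributes $-2\lambda_1\langle\Delta,\Delta A_t\rangle$. The latter is dominated because $\langle\Delta,\Delta L_t\rangle\le\langle\Delta,\Delta D_t\rangle+\langle \Delta,\Delta A_t\rangle$... rather, since $A_t\ge0$ and $D_t\pm A_t\succeq0$, we have $\langle\Delta,\Delta L_t\rangle\le 2\langle\Delta,\Delta D_t\rangle$. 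For this term it is cleaner to majorize $L_t$ by $D_t$ via the Lee--Seung trick applied to $D_t$ alone. For the nonnegative symmetric pieces, such as $W^\top W$ acting on the left, we use the classical inequality $\sum_{ij}\Delta_{ij}^2 (K\bar H)_{ij}/\bar H_{ij}\ge \langle\Delta,K\Delta\rangle$ for $K\ge0$ symmetric, which follows from $\Delta_{ij}\Delta_{kj}\le\frac12(\Delta_{ij}^2\bar H_{kj}/\bar H_{ij}+\Delta_{kj}^2\bar H_{ij}/\bar H_{kj})$. The extra $\nu\mathbf1$ only enlarges $\mathcal{M}$. Note that $\bar H_{ij}>0$ wherever $\Delta_{ij}\neq0$, because entries with $G_{ij}<0$ are lifted to at least $\mu$, and entries with $\bar H_{ij}=0$ give $\Delta_{ij}=0$ and may be dropped.

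Given the lemma, substitute $\Delta=-\mathcal{M}^{-1}G$:
\[
f_t(H^{(r+1)})-f_t(H^{(r)})\le-\langle G,\mathcal{M}^{-1}G\rangle+\tfrac12\langle G,\mathcal{M}^{-1}G\rangle\le0 .
\]
The $W$-step is identical, with $\mathcal{Q}(\Delta)=2\Delta H H^\top$, $K=HH^\top\ge0$ acting on the right, and $\bar W$ in place of $\bar H$. Chaining the two inequalities gives the theorem.

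The main obstacle is the Laplacian term. Its off-diagonal part $-\lambda_1\,\cdot\,A_t$ is negative, while the step size only uses $D_t$ (the $A_t$ part went to the numerator in the original multiplicative rule). We must therefore show $\langle\Delta,\Delta L_t\rangle\le\sum_{ij}\Delta_{ij}^2(\bar H D_t)_{ij}/\bar H_{ij}=\langle\Delta,\Delta D_t\rangle$. This holds since $D_t$ is diagonal and $\Delta L_t\Delta^\top\preceq\Delta D_t\Delta^\top$ whenever $A_t\succeq0$. That condition may fail, so we would instead use $\langle\Delta,\Delta L_t\rangle\le 2\langle\Delta,\Delta D_t\rangle$. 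The factor $2$ is absorbed by the factor $\tfrac12$ in the Taylor expansion: comparing against $\mathcal{M}$, whose diagonal scale is $2(\cdot)$, leaves exactly enough slack. Checking this constant bookkeeping carefully, and noting that the rule with $\bar H$ differs from $H$ only where this is harmless, is where we expect the effort to go.
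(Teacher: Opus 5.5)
Your argument is correct once the key lemma carries the factor $2$ you reach in your last paragraph. It is genuinely different from the paper's proof, and it repairs a step that proof gets wrong.

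\textbf{The paper's route and where it breaks.} The paper uses the auxiliary function $G(H,H^{(r)})=\Upsilon(H^{(r)})+\langle\nabla_H\Upsilon(H^{(r)}),H-H^{(r)}\rangle+\tfrac12\|\sqrt{\Theta^{(r)}}\odot(H-H^{(r)})\|_F^2$. Its curvature $\Theta^{(r)}$ is exactly the coefficient in your $\mathcal{M}$. The paper asserts $\Upsilon\le G(\cdot,H^{(r)})$ for all $H$ ``by convexity of $\|W^{(r)}H\|_F^2$ and positive semidefiniteness of $L$.'' It then notes that the update is $\arg\min_H G(H,H^{(r)})$, closes with the standard MM chain, and treats $W$ ``by the same argument.''

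That global majorization is precisely your constant-$1$ lemma $\langle\Delta,\mathcal{Q}(\Delta)\rangle\le\langle\Delta,\mathcal{M}(\Delta)\rangle$. Its Laplacian part needs $\mathrm{Tr}(\Delta A_t\Delta^\top)\ge0$ for all $\Delta$. Positive semidefiniteness of $L_t$ gives a bound in the wrong direction. Moreover, $A_t$ has zero diagonal, so $\mathrm{Tr}(A_t)=0$ and $A_t\succeq0$ fails for every graph with at least one edge; it does not merely ``may fail.''

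A concrete failure: take $d=1$, $n=2$. The Lee--Seung slack then vanishes identically, and only the tiny $\nu$-term is left. The paper's inequality $\Upsilon(H^{(r+1)})\le G(H^{(r+1)},H^{(r)})$ fails at the iterate itself whenever the two entries of the step have opposite signs, even though $f_t$ does decrease.

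\textbf{Your route.} You avoid majorization. With $\Delta=-\mathcal{M}^{-1}(G)$, the exact expansion gives $f_t(W^{(r)},H^{(r+1)})-f_t(W^{(r)},H^{(r)})=-\langle\Delta,\mathcal{M}(\Delta)\rangle+\tfrac12\langle\Delta,\mathcal{Q}(\Delta)\rangle$. This is nonpositive under the Jacobi-type condition $\mathcal{Q}\preceq2\mathcal{M}$. That condition follows from two facts:
\begin{itemize}
\item the Lee--Seung inequality for $W^{(r)\top}W^{(r)}$;
\item $2D_t-L_t=D_t+A_t\succeq0$, the signless Laplacian, since $x^\top(D_t+A_t)x=\sum_{j<k}(A_t)_{jk}(x_j+x_k)^2$.
\end{itemize}

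\textbf{What each approach buys.} Had the MM picture held, it would give a guaranteed decrease of $\tfrac12\langle G,\mathcal{M}^{-1}(G)\rangle$. You can still get a quantitative decrease, because the identity part of $\mathcal{Q}$ has slack. The same bookkeeping gives $f_t(W^{(r)},H^{(r+1)})-f_t(W^{(r)},H^{(r)})\le-\lambda_3\|H^{(r+1)}-H^{(r)}\|_F^2$. This is what is needed later to justify $H^{(r)}-H^{(r+1)}\to0$ in the KKT theorem. For the $W$-step there is no sign issue, and your argument and the paper's coincide.

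\textbf{Fixes for the write-up.}
\begin{itemize}
\item State the lemma as $\langle\Delta,\mathcal{Q}(\Delta)\rangle\le2\langle\Delta,\mathcal{M}(\Delta)\rangle$ from the start.
\item Replace your displayed chain with $\le-\langle G,\mathcal{M}^{-1}(G)\rangle+\langle G,\mathcal{M}^{-1}(G)\rangle=0$.
\item Delete the sentence about majorizing $L_t$ by $D_t$ via the Lee--Seung trick; that is exactly the false constant-$1$ claim.
\item Say explicitly that the Lee--Seung step needs $W^{(r)\top}W^{(r)}$ and $H^{(r+1)}H^{(r+1)\top}$ to be entrywise nonnegative. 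This comes from $W^{(r)}\ge0$ and $H^{(r+1)}\ge0$, obtained by the induction on $r$ used in the paper's KKT theorem.
\end{itemize}
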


\begin{proof}
The proof is provided in the supplementary material \cref{Supp-sec:thm_subproblem_nonincreasing}. 
\end{proof}

Now, we are ready to prove that any limit point of the sequence $\{W^{(r)}_t, H^{(r)}_t\}_{r=1}^\infty$ generated by the update rules in \eqref{eq:update_modify} is a stationary point of the objective function, i.e., it satisfies the KKT optimality conditions defined in \eqref{eq:opt_KKT}. 
For simplicity in the subsequent theorem statement, we rewrite the KKT conditions for the $t$-th subproblem in an equivalent but simpler form as follows 
\begin{equation}
\label{eq:opt_KKT_rewrite}
\begin{aligned}
\begin{cases}
W \ge 0, \quad H \ge 0, \\
\nabla_{W}f_t(W,H) \ge 0, \quad \nabla_{H}f_t(W,H) \ge 0, \\
W \odot \nabla_{W}f_t(W,H) = 0, \quad H \odot \nabla_{H}f_t(W,H) = 0.
\end{cases}.
\end{aligned}
\end{equation}

\begin{theorem}
\label{thm:subproblem_kkt}
Let $W_t^{(1)} \ge 0$ and $H_t^{(1)} \ge 0$ be the initial matrices. 
Then any limit point $(W_t^*, H_t^*)$ of the sequence $\{W_t^{(r)}, H_t^{(r)}\}_{r=1}^\infty$, generated by the update rules in \eqref{eq:update_modify} satisfies the KKT conditions in \eqref{eq:opt_KKT_rewrite}. 
\end{theorem}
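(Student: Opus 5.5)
We follow the strategy of \cite{lin2007convergence} for modified multiplicative updates, adapted to the subproblem $f_t$ in \eqref{eq:opt}. Let $(W_t^*,H_t^*)$ be a limit point, attained along a subsequence $\{r_j\}$. Nonnegativity of $W_t^*$ and $H_t^*$ is immediate. Since $\bar{H}^{(r)}\ge 0$, $\nu>0$, and the update \eqref{eq:update_modify} is exactly the lifted-step form of Lin's update, every iterate stays nonnegative and a limit of nonnegative matrices is nonnegative. The work is in the sign and complementarity conditions of \eqref{eq:opt_KKT_rewrite}.

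\textbf{Step 1: vanishing of successive differences.} By \cref{thm:subproblem_nonincreasing}, $f_t(W^{(r)},H^{(r)})$ is nonincreasing and bounded below by $0$, so it converges. I would strengthen the majorization argument behind \cref{thm:subproblem_nonincreasing}. The auxiliary function built from the diagonal matrix $\bar{\Gamma}^{(r)}$ dominates the true quadratic with a margin that includes the $\nu\mathbf{1}$ term. Comparing curvatures gives a sufficient-decrease inequality
\[
f_t(W^{(r)},H^{(r)})-f_t(W^{(r)},H^{(r+1)})\ \ge\ c\,\bigl\|(\bar{\Gamma}^{(r)})^{-1/2}\odot(H^{(r+1)}-H^{(r)})\bigr\|_F^2,
\]
and an analogous inequality holds for $W$. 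Summing over $r$, and using that $\bar{\Gamma}^{(r)}$ is bounded above on bounded sets, gives $H^{(r+1)}-H^{(r)}\to 0$ and $W^{(r+1)}-W^{(r)}\to 0$. Boundedness of the iterates follows from the level set being bounded: $\lambda_3>0$ makes $f_t$ coercive in $H$. For $W$, I would invoke the usual normalization argument from Lin's paper or assume boundedness of the subsequence, since only the subsequence matters. Consequently $H^{(r_j+1)}\to H_t^*$ as well, and $f_t$ is continuously differentiable.

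\textbf{Step 2: KKT from the fixed-point form, the main obstacle.} Write $G^*=\nabla_H f_t(W^*,H^*)$ and fix an entry $(i,j)$. There are two cases.
\begin{itemize}
\item If $H^*_{ij}>0$: along the subsequence $\bar{H}_{ij}^{(r)}\ge H_{ij}^{(r)}$ stays bounded away from zero, so $\bar{\Gamma}^{(r)}_{ij}$ is bounded away from $0$. Then $\bar\Gamma^{(r)}_{ij}\,\nabla_H f_t(W^{(r)},H^{(r)})_{ij}=H^{(r)}_{ij}-H^{(r+1)}_{ij}\to 0$ forces $G^*_{ij}=0$.
\item If $H^*_{ij}=0$: we must exclude $G^*_{ij}<0$. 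Suppose it holds. Then for large $r$ in the subsequence the gradient entry is negative, so $\bar{H}^{(r)}_{ij}\ge\mu$ and the step size is bounded below by a positive constant depending on $\mu,\nu$ and the bounds on $W^{(r)}$. The increment $H^{(r+1)}_{ij}-H^{(r)}_{ij}=-\bar\Gamma^{(r)}_{ij}G^{(r)}_{ij}$ is therefore bounded below by a positive constant, contradicting Step 1.
\end{itemize}
This $\mu$-lifting argument is where the modification in \eqref{eq:H_modify_rewrite} is essential. It is the delicate part: one needs a uniform upper bound on the denominator $2(W^\top W\bar H+\lambda_1\bar H D_t+(2\lambda_2+\lambda_3)\bar H)+\nu$, which again uses boundedness of the iterates. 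The same argument for $W$ uses the pair $(W^{(r)},H^{(r+1)})$, whose limit is also $(W^*,H^*)$ by Step 1. Together the two cases yield $\nabla f_t\ge 0$ and complementarity at the limit point.

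I expect the main obstacle to be Step 1. Turning \cref{thm:subproblem_nonincreasing} into a quantitative sufficient-decrease bound with constants uniform in $r$, and securing boundedness of $W^{(r)}$, is where care is needed. I would handle it by mimicking Lin's auxiliary-function proof with the explicit diagonal majorizer $\mathrm{diag}(1/\bar\Gamma)$.
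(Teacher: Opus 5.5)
Your argument follows the same route as the paper's proof, which is Lin's scheme. The iterates stay nonnegative. At a limit point, $H^*_{ij}>0$ keeps $\bar\Gamma^{(r)}_{ij}$ away from zero and forces $\nabla_H f_t(W^*,H^*)_{ij}=0$. The case $H^*_{ij}=0$ with a negative gradient entry is excluded by the $\mu$-lift in \eqref{eq:H_modify_rewrite}, and $W$ is handled the same way.

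What you add is Step 1, and it is genuinely needed. The paper simply writes $0=\lim_{r\in\bar{\mathcal R}}\bigl((H^{(r)})_{ij}-(H^{(r+1)})_{ij}\bigr)$ without proof. It also treats $W$ by ``similarly'', although the $W$-gradient is taken at $(W^{(r)},H^{(r+1)})$. Convergence of $H^{(r)}$ along a subsequence says nothing about $H^{(r+1)}$ on that subsequence, so your sufficient-decrease step closes a real hole in the paper's argument.

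The reason you give for Step 1 is, however, false for the $H$-block: the diagonal model does not dominate the true quadratic. Its Laplacian part is $\lambda_1\mathrm{Tr}(\Delta D_t\Delta^\top)$. This exceeds $\lambda_1\mathrm{Tr}(\Delta L_t\Delta^\top)$ by $\lambda_1\mathrm{Tr}(\Delta A_t\Delta^\top)$, which is an indefinite form (e.g.\ $d=1$, two adjacent points, $\Delta\propto(1,-1)$). A margin $\nu/\bar H_{ij}$ with $\nu=10^{-9}$ cannot absorb it. The paper's proof of \cref{thm:subproblem_nonincreasing} makes the same overstatement.

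Your inequality survives because $H^{(r+1)}$ is the \emph{minimizer} of the diagonal model, so only a factor-two domination is needed. Take $\Delta:=H^{(r+1)}-H^{(r)}$ and $1<t<n$; entries with $\bar\Gamma^{(r)}_{ij}=0$ have $\Delta_{ij}=0$ and are omitted. The exact decrease of the quadratic $f_t(W^{(r)},\cdot)$ is then
\[
\sum_{ij}(\bar\Gamma^{(r)}_{ij})^{-1}\Delta_{ij}^2-\|W^{(r)}\Delta\|_F^2-\lambda_1\mathrm{Tr}(\Delta L_t\Delta^\top)-(2\lambda_2+\lambda_3)\|\Delta\|_F^2 .
\]
Apply the Lee--Seung lemma to $W^{(r)\top}W^{(r)}$ and use $2D_t-L_t=D_t+A_t\succeq0$. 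This bounds the decrease below by $(2\lambda_2+\lambda_3)\|\Delta\|_F^2$, and by at least $\lambda_3\|\Delta\|_F^2$ at every scale including $t\in\{1,n\}$. No boundedness of the iterates is needed for this.

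For $W$, the Lee--Seung lemma does give genuine domination with margin $\nu/\bar W_{ij}$. Boundedness along the convergent subsequence, which is automatic, is then enough. Drop the normalization option: rescaling $(WS,S^{-1}H)$ changes $f_t$ here because of the $\lambda$-terms, so it is not an admissible move inside the iteration.
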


\begin{proof}
The proof is provided in the supplementary material \cref{Supp-sec:thm_subproblem_kkt}. 
\end{proof}

\begin{theorem}
\label{thm:global_nonincreasing}
Let $\{(W_t^{(s,r)}, H_t^{(s,r)})\}_{t=1}^n$ denote the iterates at the $s$-th outer iteration and the $r$-th inner iteration, and let $(W_t^{(s,*)}, H_t^{(s,*)})$ denote a limit point of the inner iterations within the $s$-th outer iteration.
Then, under in \cref{alg:sequential_alternating_optimization}, the overall objective function $\mathcal{O}$ in \eqref{eq:pnmf_model} is nonincreasing for all outer iteration $s \ge 1$: 
\[
\mathcal{O}(W_1^{(s+1,*)}, H_1^{(s+1,*)}, \dots, W_n^{(s+1,*)}, H_n^{(s+1,*)})
\le \mathcal{O}(W_1^{(s,*)}, H_1^{(s,*)}, \dots, W_n^{(s,*)}, H_n^{(s,*)}).
\]
\end{theorem}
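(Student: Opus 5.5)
The approach is the standard block-coordinate descent argument, using \cref{thm:subproblem_nonincreasing} as the basic descent property. The key observation is that the global objective $\mathcal{O}$, viewed as a function of the single block $(W_t,H_t)$ with all other blocks fixed, equals $f_t(W_t,H_t)$ plus a constant. Here $f_t$ is the subproblem \eqref{eq:opt}, with $H_{+1},H_{-1}$ frozen at their current values. Indeed, the fidelity, Laplacian and anchoring terms are separable across $t$. The smoothness sum $\lambda_2\sum_{t'}\|H_{t'}-H_{t'-1}\|_F^2$ contains $H_t$ only through $\|H_t-H_{t-1}\|_F^2+\|H_{t+1}-H_t\|_F^2$, with the boundary cases $t=1,n$ dropping the missing neighbour. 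I would first write this decomposition explicitly, $\mathcal{O}=f_t(W_t,H_t)+c_t(\{(W_{t'},H_{t'})\}_{t'\ne t})$.

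Next, I fix an outer iteration $s$ and follow the Gauss--Seidel sweep $t=1,\dots,n$. I define the intermediate configurations $Z^{(s,t)}$, in which blocks $1,\dots,t$ carry their new values (the limit points $(W_{t'}^{(s+1,*)},H_{t'}^{(s+1,*)})$) and blocks $t+1,\dots,n$ carry the old values. Thus $Z^{(s,0)}$ is the old point and $Z^{(s,n)}$ is the new one. The sweep at block $t$ starts its inner iterations from the old block value $(W_t^{(s,*)},H_t^{(s,*)})$ (warm start), with neighbours fixed as in $Z^{(s,t-1)}$. By \cref{thm:subproblem_nonincreasing}, $f_t$ is nonincreasing along the inner iterates, so every inner iterate satisfies $f_t\le f_t$ (initial). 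The sequence is bounded below by $0$ and is nonincreasing, and $f_t$ is continuous. Hence, passing to the subsequence converging to the limit point, $f_t(W_t^{(s+1,*)},H_t^{(s+1,*)})\le f_t(W_t^{(s,*)},H_t^{(s,*)})$. By the decomposition, with $c_t$ unchanged, this gives $\mathcal{O}(Z^{(s,t)})\le\mathcal{O}(Z^{(s,t-1)})$. Chaining over $t=1,\dots,n$ yields the claim.

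The main obstacle is the bookkeeping around what "limit point" means and what the inner loop is initialized at. The limit of $f_t$ along the inner sequence exists by monotonicity and boundedness, and by continuity it equals $f_t$ at any limit point. So the inequality holds for any chosen limit point, provided each inner run is warm-started at the previous block value. I would state this warm-start convention explicitly, since it is implicit in \cref{alg:sequential_alternating_optimization}. The argument also requires the neighbours used in $f_t$ to be exactly the current values in $Z^{(s,t-1)}$, which holds in the sequential scheme.

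A secondary point is the final normalization \eqref{eq:normalize}, which is applied only after convergence and so does not affect the comparison. Even so, I would note that it leaves $\mathcal{O}$ unchanged only up to the rescaling of $H$. For that reason it should be excluded from the iterates being compared.
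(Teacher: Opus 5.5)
Your proposal is correct and follows the same block-coordinate-descent argument as the paper: apply the subproblem monotonicity theorem to each block with the others frozen, pass to the limit of the inner iterates, and chain the inequalities over the sweep $t=1,\dots,n$. The explicit decomposition $\mathcal{O}=f_t+c_t$, the warm-start convention, and the monotonicity-plus-continuity argument for evaluating at an arbitrary limit point make rigorous what the paper's proof leaves implicit.
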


\begin{proof}
The proof is provided in the supplementary material \cref{Supp-sec:thm_global_nonincreasing}. 
\end{proof}

\section{Simulation Experiment}
\subsection{Simulation Data Generation}
We construct a synthetic dataset with a deliberately designed multi-scale structure, while embedding it in a high-dimensional ambient space. 
The dataset consists of four concentric circles in three-dimensional space, each containing 20 data points. 
Each circle lies on a distinct plane whose normal vector is randomly generated, ensuring that the circles are not coplanar. 
This configuration naturally induces hierarchical geometric structures that become distinguishable at different scales. 

To account for measurement noise and model perturbations, independent Gaussian noise is added to each spatial coordinate. 
In addition, to emulate the high-dimensional observation settings commonly encountered in real-world applications, the noisy three-dimensional samples are further augmented with 97 additional dimensions of independent Gaussian noise. 
Consequently, the final dataset lies in $\mathbb{R}^{100}$, while its intrinsic structure remains governed by a low-dimensional, noisy manifold with a clear multi-scale organization.

\subsection{Multi-scale Embeddings Visualization}
We visualize the evolution of the latent embeddings learned by pNMF across scales on the simulated dataset described above, moving from coarse to fine representations along the scale index. 
This experiment is designed to provide an intuitive illustration of the multi-scale and persistent representation behavior of pNMF under controlled conditions. 

All factor matrices are initialized using nonnegative double singular value decomposition with average filling (NNDSVDA) \cite{boutsidis2008svd}. 
For visualization, the latent dimension is set to $d = 2$, enabling direct inspection of the learned embeddings in a two-dimensional space. 
Unless otherwise specified, the hyperparameters $\lambda_1$ and $\lambda_2$ are set to $100$ to emphasize geometric regularization and cross-scale smoothness, while the anchoring parameter $\lambda_3$ is fixed to $1$. 
In the construction of adjacency matrices in \eqref{eq:adjacency_matrix}, the decay parameter is set to $\alpha = 1.5$, and this setting is kept fixed for all experiments unless otherwise stated. 
This choice balances within-scale locality and cross-scale distinguishability. 

\cref{fig:multi_scale_vis} presents the simulation data in their intrinsic three-dimensional space along with the corresponding two-dimensional embeddings $\{H_t\}_{t=1}^{80}$ learned by pNMF across multiple scales. 
For clarity, embeddings from nine scales uniformly sampled along the scale index are displayed. 
At the coarsest scale ($t = 80$), all data points collapse into a single cluster with no discernible structure, reflecting a highly aggregated global representation. 
As the scale becomes finer, the embedded points progressively separate and organize into concentric circular structures in the latent space. 

Notably, these circular patterns emerge in a sequential and ordered manner, following the radii of the original concentric circles from the outermost ring to the innermost one. 
This gradual and structured emergence demonstrates the ability of pNMF to capture hierarchical geometric structures in a persistent manner across scales, refining the latent embedding from a coarse global configuration to a detailed geometric organization while maintaining consistency between neighboring scales. 
\begin{figure}[t]
\centering
\includegraphics[width=1\textwidth]{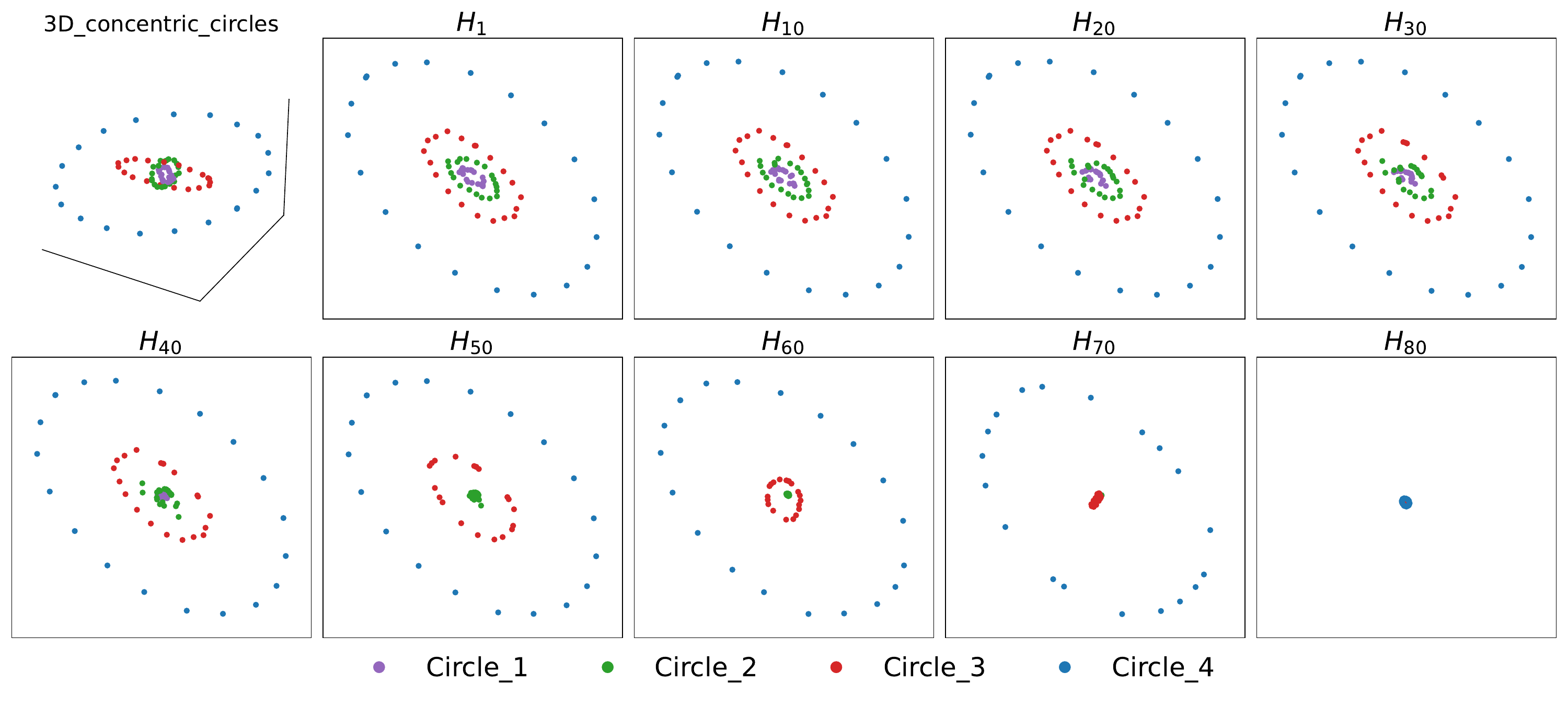}
\caption{Multi-scale embeddings visualization on simulation data. 
The first panel shows the intrinsic three-dimensional simulation data before embedding into a 100-dimensional ambient space. 
The remaining nine panels display the corresponding two-dimensional embeddings $\{H_t\}_{t=1}^{80}$ uniformly sampled across scales.}
\label{fig:multi_scale_vis}
\end{figure}

\subsection{Effect of Persistence-Based Scale Selection}
In this experiment, we investigate how different scale selection strategies affect the quality and interpretability of the multi-scale embeddings produced by pNMF. 
Recall that the scale parameters $\{\varepsilon_t\}_{t=1}^n$ used in pNMF are derived from the death times of the $0$-dimensional persistence diagram, yielding a scale set $\Lambda^\star$. 
As established in the \cref{thm:critical_scale_set}, $\Lambda^\star$ constitutes a canonical minimal sufficient scale set, capturing all essential connectivity transitions of the data while avoiding redundant or uninformative scales. 

To isolate the effect of scale selection, we construct three alternative strategies for comparison, all with the same cardinality as $\Lambda^\star$: 
\begin{itemize}
\item \textbf{pNMF-UDS}: scales are uniformly sampled from the predefined distance-scale set, including the minimum and maximum distances; 
\item \textbf{pNMF-RDS}: scales are randomly sampled from the distance-scale Set, again including the minimum and maximum distances; 
\item \textbf{pNMF-MSS}: scales are chosen as an arbitrary minimal sufficient scale set. 
Starting from the canonical set $\Lambda^\star = \{\varepsilon_t\}_{t=1}^n$, one scale is randomly selected from each interval $(0,\varepsilon_1], (\varepsilon_1,\varepsilon_2], \ldots, (\varepsilon_{n-1},\varepsilon_n]$, yielding a sufficient but non-canonical scale sequence. 
\end{itemize}
All variants share identical model configurations and differ only in the choice of scale parameters. 
To facilitate a direct qualitative comparison, we visualize the multi-scale embeddings learned by each method in \cref{fig:scale_selection_vis}. 

\begin{figure}[t]
\centering
\includegraphics[width=1\textwidth]{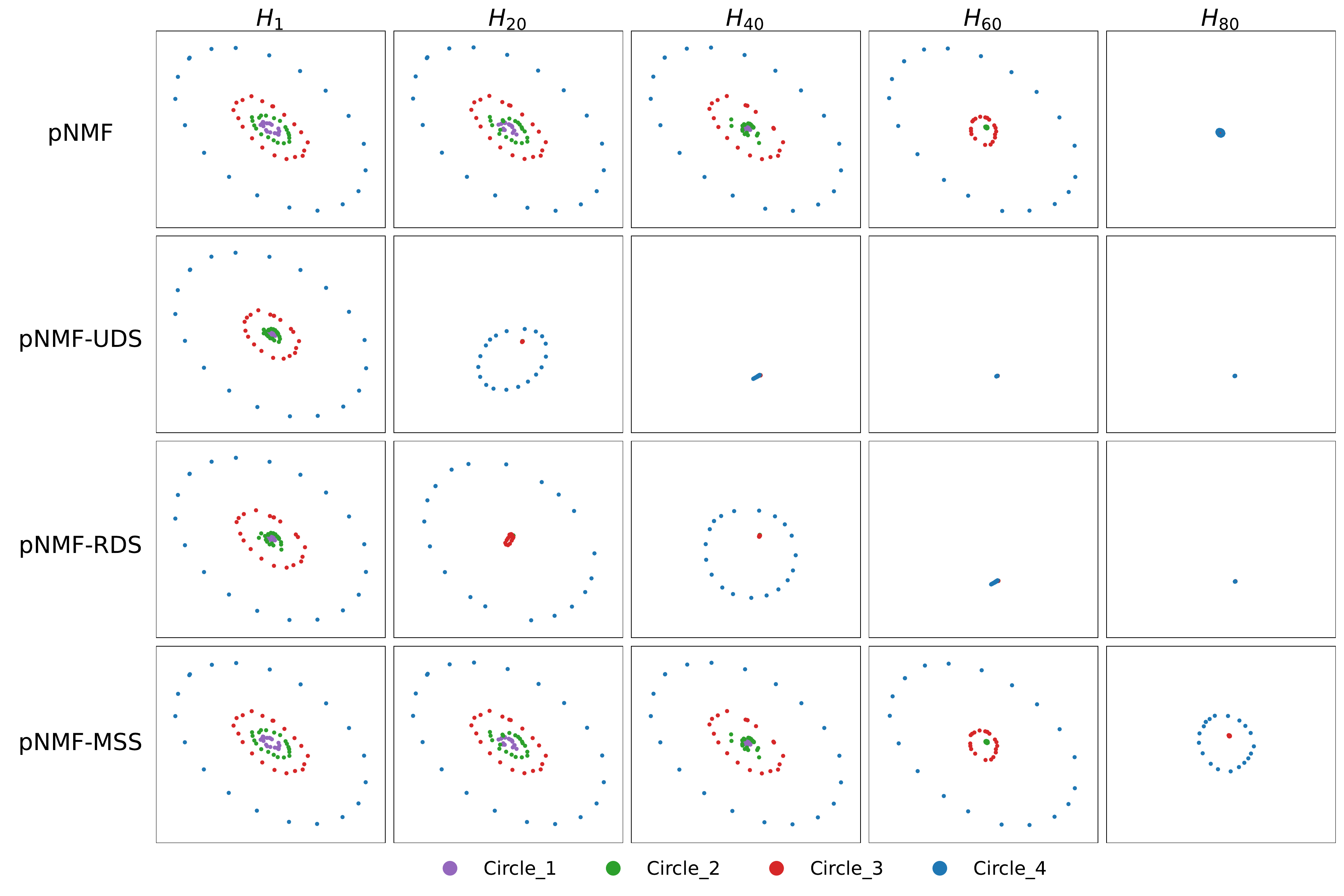}
\caption{Effect of persistence-based scale selection on multi-scale embeddings visualization. 
Each column corresponds to the same scale index, while each row represents a different scale selection strategy, ordered as pNMF, pNMF-UDS, pNMF-RDS, and pNMF-MSS.}
\label{fig:scale_selection_vis}
\end{figure}

For pNMF with persistence-based scales, the embeddings exhibit a clear and coherent evolution across scales. 
As the scale becomes finer, geometric structures emerge gradually and stably, with latent points progressively separating into concentric circular patterns that faithfully reflect the intrinsic organization of the simulated data. 
Structural transitions occur smoothly across neighboring scales, resulting in an interpretable and consistent multi-scale embedding trajectory. 
In contrast, both \textbf{pNMF-UDS} and \textbf{pNMF-RDS} fail to produce meaningful multi-scale embeddings. 
Across a large portion of the selected scales, the embeddings collapse into a single point or a highly concentrated cluster, indicating the absence of informative geometric structure. 
When separation does occur, it appears abruptly at isolated scales rather than evolving continuously, suggesting a misalignment between heuristic scale sampling and the intrinsic connectivity transitions of the data. 

The \textbf{pNMF-MSS} variant produces embedding trajectories qualitatively similar to those of pNMF, confirming that sufficiency of the scale set is crucial for preserving multi-scale structure. 
However, minimal sufficient scale sets are generally not unique, and constructing any such set still requires prior knowledge of the canonical set $\Lambda^\star$. 
Therefore, while sufficiency ensures structural preservation, the persistence-based construction further provides a canonical and principled selection mechanism, eliminating arbitrariness while retaining interpretability. 

Overall, these observations demonstrate that persistence-based scale selection is essential for realizing the full potential of pNMF. 
By selecting scales that are both sufficient and minimal, and anchoring them to topologically meaningful connectivity transitions, pNMF achieves stable, interpretable, and coherent multi-scale embeddings, in accordance with the theoretical guarantees established in \cref{thm:critical_scale_set}.

\subsection{Empirical Validation of Scale-Controlled Variations}
This experiment examines how key quantities in pNMF evolve across consecutive scales, focusing on their stability and structured variations as predicted by the theoretical analysis. 
In particular, \cref{thm:lipschitz_bound} establishes that, for consecutive persistence-guided scales, the increment of the graph Laplacian is Lipschitz bound with respect to the scale increment. 
Moreover, \cref{thm:embedding_increment_bounds} shows that the increments of the learned embeddings are explicitly controlled by the Laplacian increments, implying that variations across scales are structured rather than arbitrary. 

To empirically validate these properties, we analyze three types of quantities along the persistence-based scale sequence: the scale parameters themselves, the induced graph Laplacians, and the learned embedding matrices. 
Specifically, for each pair of consecutive scales, we compute the scale increment $|\Delta \varepsilon_t| = |\varepsilon_t - \varepsilon_{t-1}|$, the Laplacian increment $\|\Delta L_t\|_F = \|L_t - L_{t-1}\|_F$, and the embedding increment $\|\Delta H_t\| = \|H_t - H_{t-1}\|_F$.

\begin{figure}[t]
\centering
\includegraphics[width=1\textwidth]{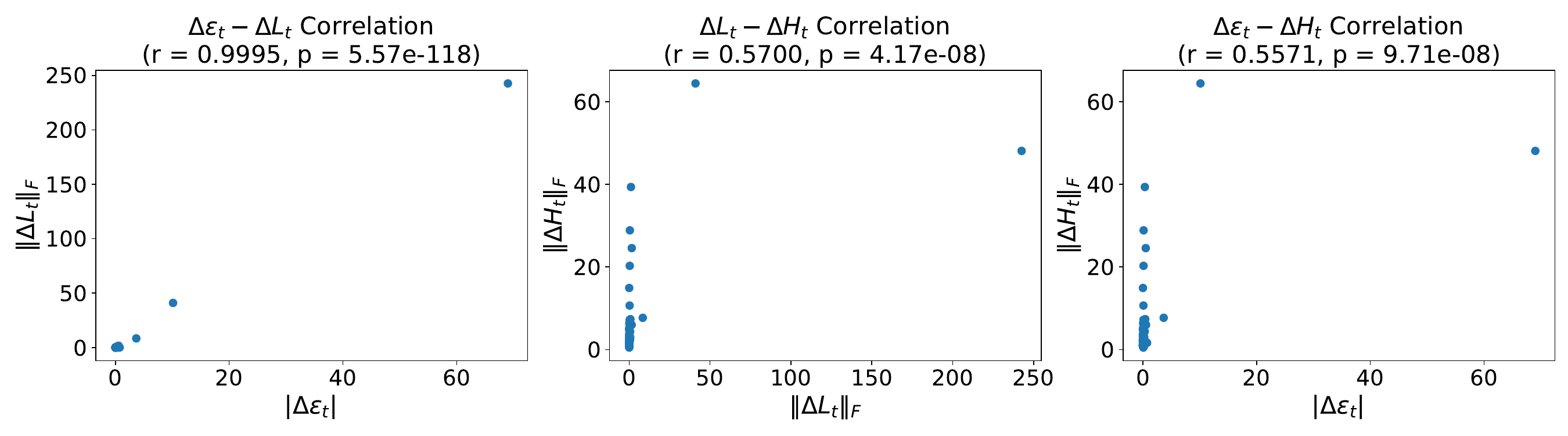}
\caption{Correlations among scale increments, Laplacian increments, and embedding increments for pNMF. 
Left: Correlation between $\Delta \varepsilon_t$ and $\Delta L_t$; 
Middle: Correlation between $\Delta L_t$ and $\Delta H_t$; 
Right: Correlation between $\Delta \varepsilon_t$ and $\Delta H_t$. 
Correlation coefficients and p-values are reported in each panel.}
\label{fig:increment_correlations}
\end{figure}

As shown in \cref{fig:increment_correlations}, the scale increments $\Delta \varepsilon_t$ are nearly perfectly correlated with the Laplacian increments $\Delta L_t$ ($r=0.9995$, $p=5.57\times 10^{-118}$), reflecting the persistence-based construction in which scales concentrate around critical connectivity transitions. 
Since the graph Laplacian is constructed directly from the scale parameter through adjacency matrix, its variation is an immediate structural response to changes in $\varepsilon_t$, leading to an almost deterministic linear relationship. 
Here, the Pearson correlation quantifies the degree to which the magnitude of variations in one quantity systematically co-varies with variations in another across consecutive scales: a high correlation indicates that large changes in one quantity coincide with large changes in the other, whereas a lower correlation indicates weaker coupling. 

In contrast, the embedding increments $\Delta H_t$ are moderately correlated with both the Laplacian increments ($r=0.5700$, $p=4.17\times 10^{-8}$) and the scale increments ($r=0.5571$, $p=9.71\times 10^{-8}$). 
Unlike the Laplacian, the embeddings are obtained by solving a constrained optimization problem that balances data reconstruction and graph regularization. 
Therefore, changes in $L_t$ influence $H_t$ indirectly through the optimization process rather than via a direct functional mapping, resulting in a moderate but statistically significant correlation. 

Taken together, these results empirically confirm that pNMF induces structured cross-scale variations: the graph Laplacian evolves in a controlled manner following the scale increments, and the learned embeddings respond accordingly through the regularized optimization mechanism, in line with the theoretical bounds on embedding increments. 

\emph{Remark.}
The empirical convergence behavior and an ablation study on the regularization terms are reported in the supplementary material \cref{Supp-sec:empirical_convergence_behavior} and \cref{Supp-sec:ablation_study_on_regularization_terms}.

\section{Application to scRNA-seq Data Clustering}
In this study, we evaluate the proposed method on five publicly available scRNA-seq datasets, three of which are obtained from the Gene Expression Omnibus (GEO) database, namely GSE75748time, GSE94820, and GSE75140. 
The GSE75748time dataset \cite{chu2016single} is a snapshot and temporal scRNA-seq dataset comprising 758 human progenitor cells, encompassing 6 cell types and 19,189 genes. 
The GSE94820 dataset \cite{villani2017single} consists of 1,140 single human blood dendritic cells and monocytes, covering 5 cell types and 26,593 genes. 
The GSE75140 dataset \cite{camp2015human} includes 734 cells from human fetal neocortex or human cerebral organoids collected at multiple time points, classified into 9 cell types and containing 18,927 genes. 
In addition, two datasets of human blood dendritic cells, referred to as Dendritic\_batch1 and Dendritic\_batch2 \cite{villani2017single}, are also included. 
Each dataset contains 384 cells from 4 cell types and 15,426 genes.

\subsection{Baselines}
We compare the proposed pNMF with several NMF-based approaches, including NMF, GNMF, TNMF, and $k$-TNMF. 
NMF \cite{lee1999learning} approximates the data matrix $X$ by factorizing it into two nonnegative matrices $W$ and $H$. 
GNMF \cite{cai2010graph} extends NMF by incorporating a graph regularization term to preserve the geometric structure of the data. 
TNMF \cite{hozumi2024analyzing} further generalizes GNMF by replacing the single graph Laplacian with a sum of graph Laplacians induced by multiple weighted graphs, while $k$-TNMF \cite{hozumi2024analyzing} substitutes the weighted graph with $k$-NN graph. 

To ensure a fair comparison, all methods use NNDSVDA for initialization. 
Following common practice in NMF-based clustering methods, the latent dimension is set to $d = \sqrt{n}$, which avoids using label information and ensures a fair unsupervised setting across datasets. 
Based on the parameter analysis in \cref{Supp-sec:parameter_analysis}, the default values of $\lambda_1$, $\lambda_2$, and $\lambda_3$ are all set to $1$ unless otherwise specified.

\subsection{Clustering Results}
Clustering performance is evaluated using Adjusted Rand Index (ARI), Normalized Mutual Information (NMI), Purity, and Accuracy; formal definitions are provided in the supplementary material \cref{Supp-sec:evaluation_metrics}. 
We evaluate clustering performance using the low-rank embeddings obtained from all methods. 
For baseline approaches, a single embedding matrix is directly used for $k$-means clustering. 
In contrast, the proposed pNMF generates a sequence of embeddings $\{H_t\}_{t=1}^n$ across multiple scales. 
To provide a fair comparison, we report results for two variants of pNMF: 
\begin{itemize}
\item \textbf{pNMF}: clustering based on the finest-scale embedding $H_1$. 
\item \textbf{pNMF-best}: clustering performed on all embeddings $\{H_t\}_{t=1}^n$, where the embedding achieving the highest average score across ARI, NMI, Purity, and Accuracy is selected. 
\end{itemize}

\cref{tab:clustering_results} summarizes the quantitative clustering results across all five scRNA-seq datasets, where the best-performing result for each metric is highlighted in bold and the second-best result is underlined. 

Overall, pNMF demonstrates consistently competitive or superior performance compared to baseline methods. 
In the Dendritic\_batch1 dataset, both pNMF and pNMF-best substantially outperform other baselines across all metrics, with notable relative improvements, e.g., 21.7\% in ARI, 12.4\% in NMI, and 22.2\% in both Purity and Accuracy over the second-best method. 
For the GSE75748time and GSE94820 datasets, pNMF and pNMF-best also achieve the highest performance in all metrics. 
On GSE75140, although GNMF attains the highest ARI and NMF achieves the highest NMI, pNMF-best improves Purity and Accuracy, suggesting a better balance between cluster compactness and label alignment on this complex developmental dataset. 
For Dendritic\_batch2, pNMF-best remains highly competitive and achieves top-tier performance in most metrics, demonstrating robustness across datasets with varying structural characteristics.
\begin{table}[t]
\footnotesize
\caption{Clustering results of different methods on scRNA-seq datasets.}
\label{tab:clustering_results}
\begin{center}
\begin{tabular}{|c|c|c|c|c|c|c|c|}
\hline
Dataset &Metric &NMF &GNMF &TNMF &$k$-TNMF &pNMF &pNMF-best \\
\hline
\multirow{4}{*}{GSE75748time}
&ARI$\uparrow$ &0.611 &0.602 &0.644 &0.592 &\underline{0.722} &\textbf{0.727} \\
& NMI$\uparrow$ &0.735 &0.731 &0.768 &0.711 &\underline{0.854} &\textbf{0.860} \\
& Purity$\uparrow$ &0.745 &0.720 &0.749 &0.751 &\underline{0.850} &\textbf{0.852} \\
& Accuracy$\uparrow$ &0.693 &0.679 &0.744 &0.736 &\underline{0.850} &\textbf{0.852} \\
\hline
\multirow{4}{*}{GSE94820} 
&ARI$\uparrow$ &0.547 &0.615 &0.617 &0.607 &\textbf{0.629} &\textbf{0.629} \\
&NMI$\uparrow$ &0.665 &0.710 &0.705 &0.699 &\textbf{0.724} &\textbf{0.724} \\
&Purity$\uparrow$ &0.750 &0.788 &0.788 &0.787 &\textbf{0.793} &\textbf{0.793} \\
&Accuracy$\uparrow$ &0.714 &0.757 &0.759 &0.755 &\textbf{0.765} &\textbf{0.765} \\
\hline
\multirow{4}{*}{GSE75140} 
&ARI$\uparrow$ &0.260 &\textbf{0.286} &0.229 &0.253 &0.239 &\underline{0.262} \\
&NMI$\uparrow$ &\textbf{0.432} &0.406 &0.401 &0.353 &0.393 &\underline{0.415} \\
&Purity$\uparrow$ &\underline{0.495} &0.467 &0.455 &0.439 &0.478 &\textbf{0.507} \\
&Accuracy$\uparrow$ &0.406 &0.410 &\underline{0.411} &0.388 &0.402 &\textbf{0.456} \\
\hline
\multirow{4}{*}{\shortstack{Dendritic\\\_batch1}}
&ARI$\uparrow$ &0.567 &0.617 &0.603 &0.618 &\textbf{0.835} &\textbf{0.835} \\
&NMI$\uparrow$ &0.673 &0.715 &0.700 &0.715 &\textbf{0.839} &\textbf{0.839} \\
&Purity$\uparrow$ &0.701 &0.716 &0.711 &0.714 &\textbf{0.938} &\textbf{0.938} \\
&Accuracy$\uparrow$ &0.701 &0.716 &0.708 &0.714 &\textbf{0.938} &\textbf{0.938} \\
\hline
\multirow{4}{*}{\shortstack{Dendritic\\\_batch2}}
&ARI$\uparrow$ &0.587 &\textbf{0.845} &0.820 &0.826 &0.830 &\underline{0.837} \\
&NMI$\uparrow$ &0.696 &0.815 &0.809 &0.814 &\underline{0.827} &\textbf{0.834} \\
&Purity$\uparrow$ &0.711 &\textbf{0.940} &0.930 &0.932 &0.935 &\underline{0.938} \\
&Accuracy$\uparrow$ &0.703 &\textbf{0.940} &0.930 &0.932 & 0.935 &\underline{0.938} \\
\hline
\end{tabular}
\end{center}
\end{table}

\cref{fig:multi_scale_clustering_accuracy} further illustrates how clustering 
accuracy evolves across scales. 
The results show that pNMF achieves its highest accuracy at fine scales, where the embeddings capture detailed local structure. 
As the scale becomes coarser, accuracy gradually decrease, reflecting the smoothing of fine-grained distinctions in the latent embedding. 
This trend confirms that pNMF effectively encodes multi-scale structure: fine-scale embeddings are most informative for clustering, while coarser embeddings provide progressively more global, less discriminative representations, consistent with the theoretical analysis of scale-dependent embedding stability. 
\begin{figure}[t]
\centering
\includegraphics[width=1\textwidth]{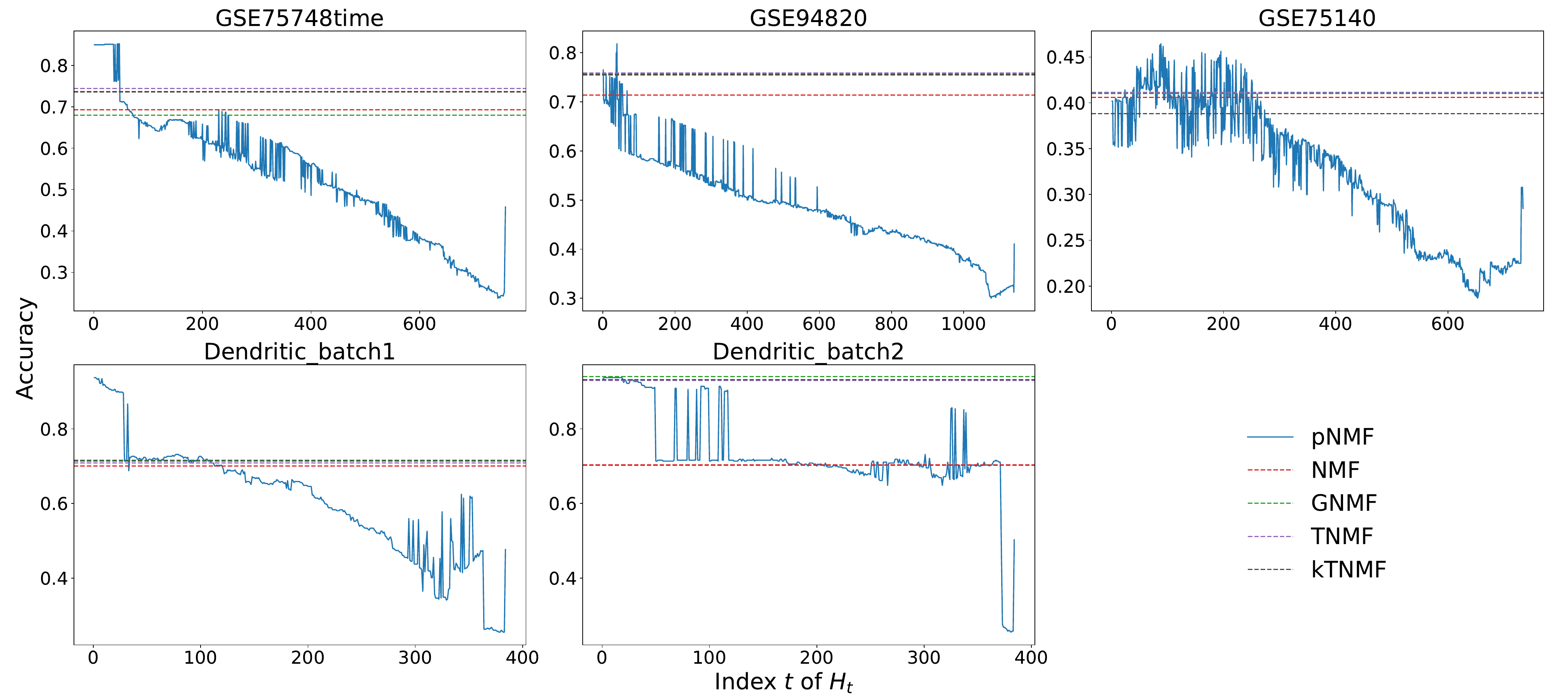}
\caption{Multi-scale clustering Accuracy across different scales $t$. 
The curves compare pNMF and baseline methods in terms of Accuracy as the scale index varies.}
\label{fig:multi_scale_clustering_accuracy}
\end{figure}

Qualitative visualizations of the learned embeddings using t-SNE and UMAP are provided in the supplementary material \cref{Supp-sec:clustering_visualization}.

\subsection{Multi-scale Clustering with Hierarchical Label Granularity}
Building on the multi-scale embedding structure revealed in the preceding simulation experiments, we further investigate whether the sequence of embeddings $\{H_t\}_{t=1}^n$ produced by pNMF naturally aligns with different levels of semantic label granularity. 
To this end, we conduct clustering experiments on the GSE75748time dataset, which provides temporally ordered cell populations and thus supports a hierarchical organization of labels. 
The dataset contains six time-point labels corresponding to distinct developmental stages: 00hr, 12hr, 24hr, 36hr, 72hr, and 96hr. 
This temporal structure enables us to systematically construct label configurations ranging from fine-grained to coarse-grained semantics. 

To guide the construction of label hierarchies, we first inspect the t-SNE and UMAP visualizations of the original data matrix $X$ (see \cref{Supp-fig:tSNE_umap_vis}). 
Based on the observed spatial proximity and temporal continuity among cell populations, we progressively merge biologically adjacent time points to form a sequence of fine-to-coarse label configurations: 
\begin{itemize}
\item \textbf{6-class labels}: 00hr, 12hr, 24hr, 36hr, 72hr, and 96hr; 
\item \textbf{5-class labels}: merging 72hr and 96hr into a single class; 
\item \textbf{4-class labels}: further merging 36hr, 72hr, and 96hr into a single class; 
\item \textbf{3-class labels}: additionally merging 12hr and 24hr, resulting in three classes: 
\begin{itemize}
\item Class 1: 00hr;
\item Class 2: 12hr and 24hr;
\item Class 3: 36hr, 72hr, and 96hr.
\end{itemize}
\end{itemize}
For each label configuration, we perform $k$-means clustering on every embedding $H_t$ in the multi-scale sequence and evaluate the results using ARI, NMI, Purity, and Accuracy. 
The clustering performance as a function of the scale index $t$ is reported in \cref{fig:label_granularity_clustering_scores_vis}. 

\begin{figure}[t]
\centering
\includegraphics[width=1\textwidth]{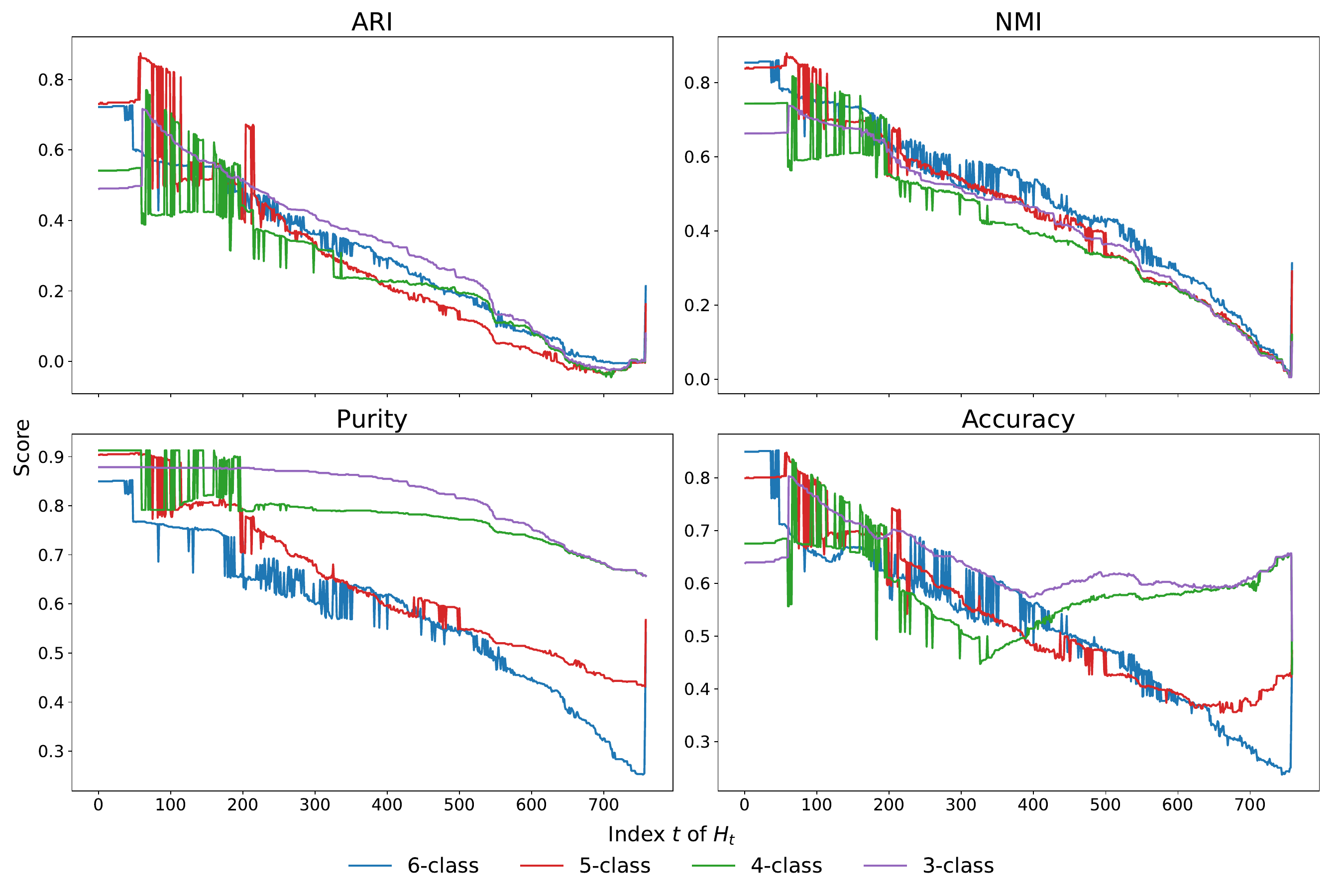}
\caption{Clustering performance across different label granularities on the GSE75748time dataset. 
Each subfigure corresponds to a clustering metric (ARI, NMI, Purity, or Accuracy) and shows the performance across different label granularities (6-, 5-, 4-, and 3-class labels) as a function of the scale index $t$.}
\label{fig:label_granularity_clustering_scores_vis}
\end{figure}

Across all label granularities, a clear and consistent pattern emerges. 
Under fine-grained label settings (e.g., the original 6-class configuration), clustering performance peaks at small values of $t$, corresponding to fine-scale embeddings that preserve detailed cell-state distinctions. 
As the scale becomes coarser, performance gradually degrades, reflecting the progressive loss of fine-grained information. 

In contrast, as the label granularity becomes coarser, the scale at which clustering performance peaks shifts toward larger values of $t$. 
For instance, in the 3-class setting, clustering scores are already high at relatively small scales and remain stable or slightly improve as the scale increases, before eventually degrading when excessive coarsening leads to information loss. 
This behavior indicates that coarse-scale embeddings are more effective at capturing global semantic structure, while overly coarse scales may oversmooth discriminative information. 

Taken together, these results demonstrate that pNMF induces a coherent hierarchy of embeddings across scales, where different regions of the scale sequence naturally correspond to different levels of semantic abstraction. 
Without requiring explicit scale selection or additional supervision, pNMF enables effective clustering across fine-to-coarse label resolutions, thereby highlighting its flexibility and interpretability for multi-resolution data analysis. 

\emph{Remark.}
The proposed pNMF exhibits stable performance over a wide range of regularization parameters; detailed parameter analysis is provided in the supplementary material \cref{Supp-sec:parameter_analysis}.

\section{Conclusion}
In this work, we developed persistent nonnegative matrix factorization (pNMF) as a principled framework for learning interpretable and stable multi-scale embeddings from high-dimensional nonnegative data. 
By grounding the construction of multi-scale graphs in $0$-dimensional persistent homology, pNMF provides a theoretically justified mechanism for capturing essential connectivity transitions and for coherently coupling low-rank embeddings across scales. 
Through the integration of scale-wise geometric regularization and explicit cross-scale coupling, the proposed framework enables the analysis of data structure evolution while preserving the parts-based interpretability inherent to nonnegative matrix factorization. 

Despite these promising results, several limitations and open questions naturally arise from the present study. 
The current framework primarily focuses on $0$-dimensional topological information, corresponding to connectivity and connected components. 
While this choice enables a principled and tractable construction of multi-scale graphs, extending pNMF to incorporate higher-dimensional topological features such as loops or voids remains challenging. 
In particular, the non-uniform birth times of topological features and the lack of a natural total ordering of death times across dimensions make it difficult to preserve consistent nesting relationships using a single scalar filtration parameter. 
Developing principled strategies to integrate higher- or mixed-dimensional persistent homology into multi-scale graph constructions therefore remains an important and challenging direction for future work. 

From an optimization perspective, the proposed pNMF model is currently solved using a sequential alternating scheme with theoretical convergence guarantees. 
Exploring alternative optimization strategies, including accelerated methods or hybrid approaches that combine model-based optimization with deep learning techniques, may further improve scalability and adaptability, particularly in large-scale or streaming data scenarios. 

Looking ahead, while this work primarily evaluates the learned multi-scale embeddings through clustering performance at selected scales, many downstream tasks could benefit from a more comprehensive use of the entire family of embeddings. 
Applications such as hierarchical clustering, trajectory inference, or scale-aware downstream learning offer promising opportunities for future work that more fully exploits the persistent and multi-scale nature of embeddings learned by pNMF.


\newpage
\appendix






\section{Background on Persistent Homology}
\label{Supp-sec:ph}
Persistent homology~\cite{zomorodian2004computing, edelsbrunner2008persistent} is a fundamental tool in \textit{Topological Data Analysis} (TDA)~\cite{bubenik2015statistical,wasserman2018topological,chazal2021introduction}, designed to capture the multi-scale topological structure of data. 
By tracking the birth and death of topological features—such as connected components, cycles, and voids—across varying scales, persistent homology provides a coordinate-invariant and noise-robust summary of the intrinsic shape of a dataset.

\subsection{Simplicial Complexes}
In algebraic topology, a \emph{simplex} is the fundamental geometric building block of a simplicial complex, defined over a finite point set $V$ in Euclidean space. 
Formally, a $k$ dimensional simplex ($k$-simplex) $\sigma$ spanned by geometrically independent vertices $v_0, v_1, \ldots, v_k \in V$ is defined as the convex hull of these vertices, i,e, 
\[
\sigma = \{\sum\limits_{i=0}^k a_i v_i|a_i \ge 0,\sum\limits_{i=0}^ka_i = 1\}, 
\]
denoted by $\sigma=(v_0, v_1, \cdots, v_k)$. 
Typical examples include a 0-simplex (a point), a 1-simplex (a line segment), a 2-simplex (a filled triangle), and a 3-simplex (a solid tetrahedron). 
We denote by $\mathcal{K}_k$ the set of all possible $k$-simplices on $V$. 

A \emph{face} of a simplex $\sigma$ is any simplex spanned by a nonempty proper subset of its vertices. 
If a simplex $\tau \subsetneq \sigma$ and $\tau \neq \emptyset$, then $\tau$ is called a face of $\sigma$, and $\sigma$ is a coface of $\tau$. 
For instance, a 2-simplex $(v_0, v_1, v_2)$ has as its faces the three 1-simplices $(v_0, v_1)$, $(v_1,v_2)$, and $(v_2,v_0)$, along with its three 0-simplicies $v_0,v_1$ and $v_2$. 
Similarly, a $3$-simplex $(v_0, v_1, v_2, v_3)$ has as its faces the four $2$-simplices $(v_1, v_2, v_3)$, $(v_0, v_2, v_3)$, $(v_0, v_1, v_3)$, and $(v_0, v_1, v_2)$, together with its six $1$-simplices and four $0$-simplices. 

To facilitate the definition of homology, a simplex $\sigma$ could be assigned an \emph{orientation}, and the oriented simplex is denoted by $[v_0, v_1, \ldots, v_k]$, where the ordering of vertices determines the orientation. 
Two orderings correspond to the same orientation if they differ by an even permutation; odd permutations yield the opposite orientation. 
For example, the oriented 2-simplex $[v_0, v_1, v_2]$ has the opposite orientation of $[v_0, v_2, v_1]$. 
Orientation plays a crucial role in defining the boundary operator and, consequently, the homology groups. 

A \emph{simplicial complex} $\mathcal{K}$ on a finite vertex set $V$ is a collection of simplices satisfying: 
\begin{itemize}
\item \textbf{Face closure:} if $\sigma \in \mathcal{K}$ and $\tau \subseteq \sigma$, then $\tau \in \mathcal{K}$. 
This condition ensures that every face of a simplex in $\mathcal{K}$ is also contained in $\mathcal{K}$. 
\item \textbf{Geometric consistency:} if $\sigma, \sigma' \in \mathcal{K}$, then $\sigma \cap \sigma'$ is either empty or a common face of both. 
This condition guarantees that simplices intersect only in a way consistent with their faces. 
\end{itemize}
Equivalently, $\mathcal{K}$ can be viewed as a subset of $\bigcup_{k\ge 0} \mathcal{K}_k$, where $\mathcal{K}_k$ denotes the set of $k$-simplices on $V$. 
Intuitively, a simplicial complex is a finite family of simplices consistently glued along shared faces; for instance, several triangles sharing edges can form a simplicial complex approximating a surface.

\subsection{Homology}
Building upon simplicial complexes, one introduces algebraic structures that encode their topological features. 
For a simplicial complex $\mathcal{K}$, the $k$-th \emph{chain group} $\mathcal{C}_k(\mathcal{K})$ is defined as the free abelian group generated by all oriented $k$-simplices of $\mathcal{K}$. 
Elements of $\mathcal{C}_k(\mathcal{K})$, called $k$-chains, are formal finite sums of oriented simplices with integer coefficients: $\sum_i a_i\sigma_i$, where each $\sigma_i$ is an oriented $k$-simplex and $a_i \in \mathbb{Z}$. 

A key operator on chain groups is the \emph{boundary operator} $\partial_k: \mathcal{C}_k(\mathcal{K}) \to \mathcal{C}_{k-1}(\mathcal{K})$, which acts linearly on $k$-chains and is defined on any oriented $k$-simplex by 
\[
\partial_k [v_0, v_1, \ldots, v_k] = \sum_{i=0}^k (-1)^i [v_0, \ldots, \hat{v}_i, \ldots, v_k],
\]
where $\hat{v}_i$ indicates omission of the vertex $v_i$. 
Using the boundary operator, one defines two important subgroups of $\mathcal{C}_k(\mathcal{K})$: the $k$-\emph{cycle group} $\ker(\partial_k )$ defined by the kernel of $\partial_k$ and the $k$-\emph{boundary group} $\operatorname{im}(\partial_{k+1})$ defined by the image of $\partial_{k+1}$. 
An important property of the boundary operators is that the boundary of a boundary is always empty, $\partial_k \circ \partial_{k+1} = 0$, which implies the nesting $\operatorname{im}(\partial_{k+1}) \subseteq \ker(\partial_k ) \subseteq \mathcal{C}_k(\mathcal{K})$. 

The $k$-th \emph{homology group} of $\mathcal{K}$ is then defined as the quotient
\[
\mathcal{H}_k(\mathcal{K}) = \frac{\ker(\partial_k)}{\operatorname{im}(\partial_{k+1})}. 
\]
Its elements are equivalence classes of $k$-cycles modulo boundaries. 
Intuitively, $\mathcal{H}_k(\mathcal{K})$ characterizes $k$-dimensional “holes” in $\mathcal{K}$, such as connected components ($k=0$), cycles ($k=1$), and voids ($k=2$). 
The \emph{Betti numbers}~\cite{milnor1964betti} provide a quantitative measure of the size of homology groups. 
The $k$-th Betti number is defined as 
\[
\beta_k(\mathcal{K}) = \operatorname{rank}(\mathcal{H}_k(\mathcal{K})), 
\]
giving a concise numerical summary of the $k$-dimensional topological features of $\mathcal{K}$. 
For example, $\beta_0$ counts the number of connected components, $\beta_1$ counts the number of independent one-dimensional holes (loops), and $\beta_2$ counts the number of enclosed voids in three dimensions.

\subsection{Persistent Diagram}
Given a finite point set $V$ with a chosen distance metric, one can construct a simplicial complex to capture the topology of the data. 
A standard choice is the Vietoris--Rips complex~\cite{zomorodian2010fast}. 
For a scale parameter $\varepsilon \ge 0$, the Vietoris--Rips complex $\mathcal{V}_\varepsilon$ is the simplicial complex whose simplices are all finite simplices $\sigma$ with vertices in $V$ such that 
\[
\text{dist}(v_i, v_j) < \varepsilon \quad \text{for all vertices } v_i, v_j \in \sigma. 
\]
In particular, 0-simplices correspond to points in $V$, 1-simplices (edges) connect pairs of points within $\varepsilon$, 2-simplices (triangles) form whenever three points are pairwise within $\varepsilon$, and higher-dimensional simplices are defined analogously. 

The homology groups $\mathcal{H}_k(\mathcal{V}_\varepsilon)$ computed on these complexes depend strongly on the choice of $\varepsilon$. 
For small $\varepsilon$, the complex consists primarily of isolated vertices, whereas for large $\varepsilon$, all points may merge into a single connected component. 
Since selecting a single, optimal scale is generally infeasible without prior knowledge of the underlying space, one considers a nested sequence of complexes, called a \emph{filtration}: 
\[
\cdots \subseteq \mathcal{V}_{\varepsilon_i} \subseteq \mathcal{V}_{\varepsilon_j} \subseteq \cdots, \quad \text{with } \cdots < \varepsilon_i < \varepsilon_j < \cdots. 
\]
This filtration allows one to track the evolution of topological features across multiple scales. 

At each scale $\varepsilon_i$, the simplicial complex $\mathcal{V}_{\varepsilon_i}$ is constructed, and its homology groups $\mathcal{H}_k(\mathcal{V}_{\varepsilon_i})$ are computed. 
As the scale parameter increases, new $k$-dimensional topological features (connected component, cycles) may \emph{appear} (birth), and existing ones may \emph{disappear} (death). 
These events are recorded as ordered pairs $(b_i, d_i)$, where $b_i$ and $d_i$ denote the birth and death scales of the $i$-th feature, respectively. 
The collection of all such pairs forms the \emph{persistence diagram}: 
\[
\mathcal{D}_k = \left\{ (b_i, d_i) \mid b_i < d_i, \ i = 1, 2, \dots \right\}, 
\]
which compactly encodes the multiscale evolution of $k$-dimensional topological structures across the filtration. 
Features with long lifespans ($d_i - b_i$ large) are typically interpreted as topologically significant, whereas short-lived features are often treated as noise~\cite{cohen2005stability}. 

Persistence diagrams provide a concise, multi-scale summary of the topological structure of the data. 
They are stable under small perturbations of the input points and can be incorporated into downstream data analysis or machine learning workflows via vectorization techniques, including persistence images, landscapes, or kernel methods.

\section{Additional Proofs and Complexity Analysis}
\subsection{Proof of Theorem~\ref{thm:critical_scale_set}}
\label{Supp-sec:thm_critical_scale_set}
\begin{theorem}
\label{Supp-thm:critical_scale_set}
The scale set $\Lambda^\star$ is the canonical minimal sufficient scale set. 
\end{theorem}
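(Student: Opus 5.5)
The plan is to make everything explicit by describing $F$ through $\beta_0$. Since $\mathcal{H}_0(\mathcal{X}(\varepsilon))\cong\mathbb{Z}^{\beta_0(\varepsilon)}$ is free abelian, $F(\varepsilon)=F(\varepsilon')$ holds if and only if $\beta_0(\varepsilon)=\beta_0(\varepsilon')$. Hence sufficiency of a set $\Lambda$ means exactly that $\Lambda$ meets every level set of the integer-valued function $\beta_0:\mathbb{R}^+\to\{1,\dots,n\}$.

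\textbf{Level sets of $\beta_0$.} First I will describe these level sets using $\mathcal{D}_0$.
\begin{itemize}
\item The VR complex uses the strict inequality $\mathrm{dist}<\varepsilon$, so a merge caused by an edge of length $\ell$ is present in $\mathcal{X}(\varepsilon)$ exactly when $\varepsilon>\ell$.
\item Under the generic-position assumption the finite death times are distinct. Each death time is the length of a minimum-spanning-tree edge, and each death reduces $\beta_0$ by one.
\item Consequently $\beta_0$ is a nonincreasing step function taking each value in $\{1,\dots,n\}$ on exactly one interval.
\item The endpoints of these intervals are the consecutive elements of $\Lambda^\star$. The largest element $\varepsilon_n=\delta_{\max}$ sits in the final interval, where $\beta_0=1$.
\end{itemize}
I must be careful with the indexing convention. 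With $\varepsilon_1,\dots,\varepsilon_n$ the $n$ points of $\Lambda^\star$, I will check that each level set $\{\beta_0=j\}$ contains exactly one $\varepsilon_t$. The intended reading is that the level set containing $\varepsilon_t$ is the half-open interval $(\varepsilon_{t-1},\varepsilon_t]$, with $\varepsilon_0:=0$. This is the reading suggested by the pNMF-MSS construction and by the càdlàg remark.

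\textbf{Sufficiency, minimality and canonicity.}
\begin{itemize}
\item \emph{Sufficiency.} Given $\varepsilon>0$, pick the unique $t$ with $\varepsilon\in(\varepsilon_{t-1},\varepsilon_t]$. Then $F(\varepsilon)=F(\varepsilon_t)$.
\item \emph{Minimality.} The $n$ level sets are pairwise distinct isomorphism classes, $\mathbb{Z}^j$ for $j=1,\dots,n$. A sufficient set must therefore hit each of them. Since $\Lambda^\star$ has exactly one point in each, removing any $\varepsilon_t$ leaves the class $F(\varepsilon_t)$ unrepresented. This also shows every minimal sufficient set has cardinality $n$ and contains exactly one point per level set.
\item \emph{Canonicity.} For each $\varepsilon_t$ I take $\varepsilon^-=\varepsilon_{t-1}$. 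On $(\varepsilon_{t-1},\varepsilon_t]$ the function $F$ is constant.
\end{itemize}

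\textbf{Uniqueness.} The statement says \emph{the} canonical set, so I will also show uniqueness.
\begin{itemize}
\item The definition of canonical as written is satisfied by any minimal sufficient set, because every point has a left neighbourhood on which $F$ is constant.
\item So I will interpret canonicity via the identification stated just before the theorem: the canonical set is the set of right endpoints of the constancy intervals, i.e.\ the points $\varepsilon$ at which $F$ is not constant on any interval $(\varepsilon^-,\varepsilon+\eta)$.
\item A minimal sufficient set with one representative per level set consists of such right endpoints exactly when it equals $\Lambda^\star$.
\end{itemize}

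\textbf{Main obstacle.} The hard part is not mathematical depth but reconciling the conventions: the strict versus non-strict VR inequality, the indexing of death times (there are $n-1$ finite deaths plus $\delta_{\max}$), and the exact meaning of ``canonical.'' I would first fix the convention so that each $\varepsilon_t$ is the right endpoint of its constancy interval, and then run the three verifications above.
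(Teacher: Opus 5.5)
Your proposal is correct and follows the same three-step route as the paper. Sufficiency holds because $\mathcal{H}_0$ changes only when $\varepsilon$ crosses a death time. Minimality holds because each $\varepsilon_t$ is the only element of $\Lambda^\star$ in its isomorphism class. Canonicity holds because $F$ is constant on $(\varepsilon_{t-1},\varepsilon_t]$. Your version is more explicit than the paper's: you reduce $F$ to $\beta_0$ via freeness of $\mathcal{H}_0$, and you pin down the left-open, right-closed level sets forced by the strict VR inequality. You also show that every minimal sufficient set has exactly one point per level set.

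There is one slip in your extra uniqueness step. The level set $\{\beta_0=1\}=(\varepsilon_{n-1},\infty)$ is unbounded, so $F$ is constant on a two-sided neighbourhood of $\delta_{\max}=\varepsilon_n$. Hence $\delta_{\max}$ is not a jump point in your sense. The jump points are only $\varepsilon_1,\dots,\varepsilon_{n-1}$, and no minimal sufficient set consists solely of them. Your claim that a minimal sufficient set ``consists of such right endpoints exactly when it equals $\Lambda^\star$'' is therefore false as written.

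Uniqueness holds only after the representative of the last level set is fixed by convention. One way is to restrict scales to $(0,\delta_{\max}]$, so that $\delta_{\max}$ becomes the right endpoint of $(\varepsilon_{n-1},\delta_{\max}]$. The paper's identification of the canonical set with ``the set of jump points of $F$'' has the same off-by-one.
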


\begin{proof}
We prove the claim by showing that $\Lambda^\star$ is (i) sufficient, (ii) minimal, and (iii) canonical. 

\textbf{(i) Sufficiency.} 
Consider the VR filtration $\{\mathcal{X}(\varepsilon)\}_{\varepsilon \ge 0}$ and the associated 0-dimensional persistence diagram $\mathcal{D}_0$. 
By construction of the VR filtration, new connected components are created only at $\varepsilon = 0$, and components merge only when an edge appears. 
Consequently, the isomorphism class of $\mathcal{H}_0(\mathcal{X}(\varepsilon))$ changes if and only if $\varepsilon$ crosses a death time $d_t$ in the persistence diagram $\mathcal{D}_0$. 
Between any two consecutive death times, $\mathcal{H}_0(\mathcal{X}(\varepsilon))$ remains unchanged. 
Therefore, for any $\varepsilon > 0$, there exists $\varepsilon_t \in \Lambda^\star$ such that 
\[
\mathcal{H}_0(\mathcal{X}(\varepsilon)) \cong \mathcal{H}_0(\mathcal{X}(\varepsilon_t)), 
\]
which shows that $\Lambda^\star$ is a sufficient scale set. 

\textbf{(ii) Minimality.}
Each scale $\varepsilon_t \in \Lambda^\star$ corresponds to a death time at which two connected components merge, resulting in a strict change in the isomorphism class of $\mathcal{H}_0(\mathcal{X}(\varepsilon))$ and a decrease of the Betti number $\beta_0$ by one. 
If any $\varepsilon_t$ were removed from $\Lambda^\star$, the corresponding homological transition could not be represented by any remaining scale in the set. 
Hence, no proper subset of $\Lambda^\star$ is sufficient, and $\Lambda^\star$ is minimal. 

\textbf{(iii) Canonicity.}
The 0-dimensional persistence diagram $\mathcal{D}_0$ is uniquely determined by the metric structure of the dataset $X$. 
Since $\Lambda^\star$ is defined as the collection of death times in $\mathcal{D}_0$, it is uniquely and canonically determined. 
Moreover, each element of $\Lambda^\star$ represents exactly one maximal interval of scale values over which $\mathcal{H}_0(\mathcal{X}(\varepsilon))$ remains unchanged, selected at the critical transition point. 
Thus, $\Lambda^\star$ is a canonical minimal sufficient scale set. 

This completes the proof.
\end{proof}

\subsection{Proof of Theorem~\ref{thm:piecewise_lipschitz_continuity}}
\label{Supp-sec:thm_piecewise_lipschitz_continuity}
\begin{theorem}[Piecewise Lipschitz Continuity of Laplacian]
\label{Supp-thm:piecewise_lipschitz_continuity}
Let $\Delta^\star = \{\delta_1 < \delta_2 < \dots < \delta_m\}$ be the distance-scale set on a finite point set $X = \{x_1, x_2, \dots, x_n\} \subset \mathbb{R}^p$. 
For each $k = 1, 2, \dots, m-1$, there exists a constant $C > 0$, such that for all $\varepsilon, \varepsilon' \in  (\delta_k, \delta_{k+1}]$, the Laplacian satisfies
\[
\|L(\varepsilon) - L(\varepsilon')\|_F \le C|\varepsilon - \varepsilon'|. 
\]
\end{theorem}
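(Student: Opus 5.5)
The key observation is that on a half-open interval $(\delta_k,\delta_{k+1}]$ the \emph{support} of $A(\varepsilon)$ does not change. The edge condition in \eqref{eq:adjacency_matrix} is $\|x_i-x_j\|_2<\varepsilon$, and no pairwise distance lies strictly between $\delta_k$ and $\delta_{k+1}$. So for every $\varepsilon\in(\delta_k,\delta_{k+1}]$ the edge set is the fixed set
\[
E_k:=\{(i,j): i\neq j,\ \|x_i-x_j\|_2\le\delta_k\}.
\]
This holds because $\|x_i-x_j\|_2<\varepsilon$ with $\varepsilon>\delta_k$ forces $\|x_i-x_j\|_2\le \delta_k$, and equality $\|x_i-x_j\|_2=\delta_{k+1}$ is excluded by the strict inequality. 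Thus on this interval each entry $A_{ij}(\varepsilon)$ is either identically zero or equal to the smooth function $g_{ij}(\varepsilon)=\exp(-r_{ij}^2/\varepsilon^{\alpha})$, where $r_{ij}:=\|x_i-x_j\|_2$. The plan is therefore to reduce the claim to a scalar Lipschitz estimate for each $g_{ij}$ and then assemble the estimates.

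The first step is the derivative bound. For $(i,j)\in E_k$ we have
\[
g_{ij}'(\varepsilon)=\alpha r_{ij}^2\varepsilon^{-\alpha-1}\exp(-r_{ij}^2/\varepsilon^{\alpha}).
\]
On $(\delta_k,\delta_{k+1}]$ we have $\varepsilon>\delta_k\ge\delta_1>0$, assuming the points are distinct so that $\delta_1>0$. Bounding the exponential by $1$ gives
\[
|g_{ij}'(\varepsilon)|\le \alpha r_{ij}^2\delta_k^{-\alpha-1}\le \alpha\,\delta_k^{2}\,\delta_k^{-\alpha-1}=:c_k.
\]
The mean value theorem then gives $|A_{ij}(\varepsilon)-A_{ij}(\varepsilon')|\le c_k|\varepsilon-\varepsilon'|$ for all entries. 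Off-support entries contribute zero.

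The second step assembles these entrywise bounds. There are at most $n(n-1)$ nonzero entries, so
\[
\|A(\varepsilon)-A(\varepsilon')\|_F\le \sqrt{n(n-1)}\,c_k|\varepsilon-\varepsilon'|.
\]
For the degree matrix, Cauchy--Schwarz gives $|D_{ii}(\varepsilon)-D_{ii}(\varepsilon')|\le\sum_j|\Delta A_{ij}|\le\sqrt{n}\,\|\Delta A_{i,:}\|_2$, and hence $\|\Delta D\|_F\le\sqrt n\|\Delta A\|_F$. The triangle inequality applied to $L=D-A$ then yields
\[
\|L(\varepsilon)-L(\varepsilon')\|_F\le (1+\sqrt n)\sqrt{n(n-1)}\,c_k\,|\varepsilon-\varepsilon'|,
\]
which gives the constant $C$.

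I expect no serious obstacle here. The one point requiring care is the support-constancy argument, since it depends on the strict inequality in the definition and on the half-open interval; the jump at $\delta_k$ itself is excluded precisely because the interval is open on the left. A secondary point is keeping $\varepsilon$ bounded away from $0$ so that $\varepsilon^{-\alpha-1}$ stays bounded. This works because the interval's left endpoint is $\delta_k\ge\delta_1>0$.
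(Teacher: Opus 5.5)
Your proposal is correct and takes essentially the same route as the paper's proof: the edge support is constant on $(\delta_k,\delta_{k+1}]$, each entry $A_{ij}(\varepsilon)$ gets a mean value theorem bound, then $\|\Delta D\|_F\le\sqrt{n}\,\|\Delta A\|_F$ and the triangle inequality for $L=D-A$ give a constant of the form $(\sqrt{n}+1)\sqrt{|E_k|}\,M_k$. Your version is slightly more careful in two places. You give the explicit derivative bound $c_k=\alpha\delta_k^{1-\alpha}$, where the paper only asserts boundedness. You also attribute support constancy at the endpoint $\delta_{k+1}$ to the strict edge inequality, whereas the paper claims $I_k$ contains no element of $\Delta^\star$, which is not literally true since $\delta_{k+1}\in I_k$.
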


\begin{proof}
Fix $k \in \{1, 2, \dots, m-1\}$ and consider $\varepsilon, \varepsilon' \in I_k := (\delta_k, \delta_{k+1}]$. 
By definition of $\Delta^\star$, the interval $I_k$ does not contain any element of $\Delta^\star$, so the support of the adjacency matrix $A(\varepsilon)$ remains unchanged for all $\varepsilon \in I_k$. 
Let
\[
\Delta A := A(\varepsilon) - A(\varepsilon'), \quad
\Delta D := D(\varepsilon) - D(\varepsilon'), \quad
\Delta L := L(\varepsilon) - L(\varepsilon') = \Delta D - \Delta A.
\]

For each edge $(i, j)$ corresponding to a nonzero entry of $A(\varepsilon)$, the weight is defined by 
\[
A_{ij}(\varepsilon) = \exp(-\dfrac{\|x_i - x_j\|_2^2}{\varepsilon^\alpha}), \quad \alpha > 0. 
\]
Since $X$ is finite and $\varepsilon$ is bounded away from zero on $I_k$, the derivative 
\[
\frac{d}{d\varepsilon}A_{ij}(\varepsilon) = \frac{\alpha\|x_i - x_j\|_2^2}{\varepsilon^{\alpha + 1}} \exp(-\dfrac{\|x_i - x_j\|_2^2}{\varepsilon^\alpha})
\]
is continuous and therefore uniformly bounded on $I_k$. 
Denote by $M_k > 0$ a uniform bound on $|dA_{ij}/d\varepsilon|$ for all edges $(i, j)$ in $I_k$. 

By the mean value theorem, for each edge $(i, j)$ there exists some $\tilde \varepsilon$ between $\varepsilon$ and $\varepsilon'$ such that 
\[
|(\Delta A)_{ij}| = |A_{ij}(\varepsilon) - A_{ij}(\varepsilon')| = |\frac{d}{d\varepsilon}A_{ij}(\tilde \varepsilon)||\varepsilon - \varepsilon'| \le M_k |\varepsilon - \varepsilon'|. 
\]
Since the number of edges is finite, the Frobenius norm satisfies
\[
\|\Delta A\|_F \le \sqrt{|E_k|} M_k |\varepsilon - \varepsilon'|,
\]
where $E_k$ denotes the set of edges with nonzero weights on $I_k$. 

For the degree matrix, we have
\[
(\Delta D)_{ii} = \sum_{j=1}^n (\Delta A)_{ij}, 
\]
which implies 
\[
\|\Delta D\|_F \le \sqrt{n} \|\Delta A\|_F \le \sqrt{n}\sqrt{|E_k|} M_k |\varepsilon - \varepsilon'|. 
\]

Finally, using the triangle inequality for the Laplacian, 
\[
\|\Delta L\|_F = \|\Delta D - \Delta A\|_F \le \|\Delta D\|_F + \|\Delta A\|_F \le (\sqrt{n}+1) \sqrt{|E_k|} M_k |\varepsilon - \varepsilon'|. 
\]
Setting $C := (\sqrt{n} + 1) \sqrt{|E_k|} M_k$ completes the proof. 
\end{proof}

\subsection{Proof of Theorem~\ref{thm:lipschitz_bound}}
\label{Supp-sec:thm_lipschitz_bound}
\begin{theorem}[Lipschitz Bound Between Adjacent Scales]
\label{Supp-thm:lipschitz_bound}
For any two consecutive scales $\delta_k, \delta_{k+1} \in \Delta^\star$, there exists a constant $C > 0$, depending only on $X$ and $n$, such that 
\[
\|L(\delta_{k+1}) - L(\delta_k)\|_F \le C(\delta_{k+1} - \delta_k). 
\]
Similarly, for any two consecutive scales $\varepsilon_t, \varepsilon_{t+1} \in \Lambda^\star$, there exists a constant $C > 0$, depending only on $X$ and $n$, such that 
\[
\|L(\varepsilon_{t+1}) - L(\varepsilon_t) \|_F \le C(\varepsilon_{t+1} - \varepsilon_t).
\]
\end{theorem}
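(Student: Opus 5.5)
Our plan for \cref{Supp-thm:lipschitz_bound} is to decompose the Laplacian increment over one step of $\Delta^\star$ into the drift of weights on edges already present and the jump from newly activated edges. The two bounds are handled separately, and the $\Lambda^\star$ statement is then obtained by telescoping. Fix consecutive $\delta_k<\delta_{k+1}$ in $\Delta^\star$ and write $\Delta A=A(\delta_{k+1})-A(\delta_k)$, $\Delta D$, and $\Delta L=\Delta D-\Delta A$. We split the edge set into
\[
E_{\mathrm{old}}=\{(i,j):\|x_i-x_j\|_2<\delta_k\},\qquad E_{\mathrm{new}}=\{(i,j):\|x_i-x_j\|_2=\delta_k\},
\]
the latter being exactly the edges switched on when passing from $\delta_k$ to $\delta_{k+1}$, since no pairwise distance lies strictly between consecutive elements of $\Delta^\star$.

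\textbf{Old edges.} On $E_{\mathrm{old}}$ the weight $\exp(-\|x_i-x_j\|_2^2/\varepsilon^\alpha)$ is a smooth function of $\varepsilon$ on the closed interval $[\delta_k,\delta_{k+1}]$. We apply the mean value theorem as in the proof of \cref{Supp-thm:piecewise_lipschitz_continuity}, but with a derivative bound that is uniform in $k$. The bound we use is $|dA_{ij}/d\varepsilon|\le \alpha\|x_i-x_j\|^2/\varepsilon^{\alpha+1}\le \alpha\delta_{\max}^2/\delta_1^{\alpha+1}$, since every $\varepsilon$ involved satisfies $\varepsilon\ge\delta_1>0$. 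This gives $\|(\Delta A)_{\mathrm{old}}\|_F\le C_1(\delta_{k+1}-\delta_k)$, where $C_1$ depends only on $X$ and $n$.

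\textbf{New edges.} This step is the main obstacle. The entries of $(\Delta A)_{\mathrm{new}}$ are $A_{ij}(\delta_{k+1})\in(0,1)$. They do not shrink with $\delta_{k+1}-\delta_k$, so the jump is not a genuine Lipschitz effect. It can still be absorbed into a constant because $\Delta^\star$ is a finite set with minimal gap $g:=\min_t(\delta_{t+1}-\delta_t)>0$. Concretely,
\[
\|(\Delta A)_{\mathrm{new}}\|_F\le \sqrt{n(n-1)}\le \frac{\sqrt{n(n-1)}}{g}\,(\delta_{k+1}-\delta_k).
\]
Adding the two parts gives $\|\Delta A\|_F\le C_3(\delta_{k+1}-\delta_k)$.

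\textbf{Degree matrix and Laplacian.} Cauchy--Schwarz applied row by row gives $\|\Delta D\|_F\le\sqrt n\|\Delta A\|_F$. Hence
\[
\|\Delta L\|_F\le\|\Delta D\|_F+\|\Delta A\|_F\le(\sqrt n+1)C_3(\delta_{k+1}-\delta_k),
\]
which proves the claim on $\Delta^\star$.

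\textbf{The statement on $\Lambda^\star$.} Each death time is a pairwise distance (or $\delta_{\max}$), so $\Lambda^\star\subseteq\Delta^\star$. Given consecutive $\varepsilon_t<\varepsilon_{t+1}$ in $\Lambda^\star$, we list the elements $\delta_{k_1}=\varepsilon_t<\dots<\delta_{k_r}=\varepsilon_{t+1}$ of $\Delta^\star$ lying between them. We then telescope $L(\varepsilon_{t+1})-L(\varepsilon_t)$ over these steps. Because the constant obtained above is uniform in $k$, the triangle inequality yields the same constant times $\sum_j(\delta_{k_{j+1}}-\delta_{k_j})=\varepsilon_{t+1}-\varepsilon_t$. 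The key point throughout is that every constant must be uniform in $k$; this is exactly why we use global bounds ($\delta_1$, $\delta_{\max}$, $g$) rather than the interval-dependent $M_k$ of \cref{Supp-thm:piecewise_lipschitz_continuity}.
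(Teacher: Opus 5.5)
Your proposal is correct and follows the paper's proof essentially step for step. It uses the same split of $\Delta A$ into old-edge drift and newly activated edges, absorbs the jump via the minimal gap of $\Delta^\star$, bounds $\|\Delta D\|_F\le\sqrt n\,\|\Delta A\|_F$, and telescopes over the elements of $\Delta^\star$ between consecutive elements of $\Lambda^\star$. If anything, you are more careful than the paper on details it glosses over: your explicit bound $\alpha\delta_{\max}^2/\delta_1^{\alpha+1}$ makes the old-edge constant genuinely uniform in $k$ (the paper only cites the interval-dependent $M_k$ of \cref{Supp-thm:piecewise_lipschitz_continuity}), your count $\sqrt{n(n-1)}$ for the new entries needs no genericity assumption where the paper uses $\sqrt n$, and you correctly identify those entries as $A_{ij}(\delta_{k+1})$ where the paper writes $A_{ij}(\delta_k)$.
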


\begin{proof}
\textbf{(i) Proof on $\Delta^\star$.}
Fix two consecutive scales $\delta_k, \delta_{k+1} \in \Delta^\star$. 
Denote 
\[
\Delta A := A(\delta_{k+1}) - A(\delta_k), \
\Delta D := D(\delta_{k+1}) - D(\delta_k), \
\Delta L := L(\delta_{k+1}) - L(\delta_k) = \Delta D - \Delta A. 
\]

Introduce the edge sets 
\[
E_{\mathrm{old}} := \{(i, j) | \|x_i - x_j\|_2 < \delta_k\}, \quad
E_{\mathrm{new}} := \{(i, j) | \delta_k \le \|x_i - x_j\|_2 < \delta_{k+1} \}. 
\]
and decompose 
\[
\Delta A = (\Delta A)_{\mathrm{old}} + (\Delta A)_{\mathrm{new}}, 
\]
where $(\Delta A)_{\mathrm{old}}$ corresponds to existing edges in $E_{\mathrm{old}}$ and $(\Delta A)_{\mathrm{new}}$ corresponds to newly added edges in $E_{\mathrm{new}}$. 

For edges in $E_{\mathrm{old}}$, the support of the adjacency matrix remains unchanged on $(\delta_k, \delta_{k+1}]$. 
By \cref{Supp-thm:piecewise_lipschitz_continuity}, there exists a constant $C_1 > 0$, independent of $k$, such that 
\[
\|(\Delta A)_{\mathrm{old}}\|_F \le C_1 (\delta_{k+1}-\delta_k).
\]

For edges in $E_{\mathrm{new}}$, we have 
\[
(\Delta A)_{ij} = A_{ij}(\delta_k) = \exp(-\dfrac{\|x_i-x_j\|_2^2}{\delta_k^{\alpha}}), 
\]
and hence
\[
0 < (\Delta A)_{ij} < 1. 
\]
Since $X$ is finite, the number of newly added edges $|E_{\mathrm{new}}|$ is finite. 
Therefore, 
\[
\|(\Delta A)_{\mathrm{new}}\|_F < \sqrt{|E_{\mathrm{new}}|} \le C_2(\delta_{k+1} - \delta_k), 
\]
where $C_2 = \frac{\sqrt{n}}{\min_t (\delta_{t+1} - \delta_t)}$ is a constant, independent of $k$. 
Combining the two parts yields 
\[
\|\Delta A\|_F \le C_3(\delta_{k+1} - \delta_k), 
\]
with $C_3$ depending only on $X$ and $n$. 

Moreover,
\[
\|\Delta D\|_F \le \sqrt{n}\|\Delta A\|_F \le C_4(\delta_{k+1}-\delta_k). 
\]
Finally,
\[
\|\Delta L\|_F \le \|\Delta D\|_F + \|\Delta A\|_F \le C(\delta_{k+1}-\delta_k),
\]
where $C = C_3 + C_4$ depends only on $X$ and $n$. 
This proves the claim on $\Delta^\star$. 

\textbf{(ii) Proof on $\Lambda^\star$.}
Let $\varepsilon_t, \varepsilon_{t+1} \in \Lambda^\star$. 
Denote by $\delta_{k_1}, \delta_{k_2}, \dots, \delta_{k_r}$ all consecutive scales in $\Delta^\star$ lying in $[\varepsilon_t,\varepsilon_{t+1}]$, with
$\delta_{k_1}=\varepsilon_t$ and $\delta_{k_r}=\varepsilon_{t+1}$. 
Then
\[
L(\varepsilon_{t+1}) - L(\varepsilon_{t}) = \sum_{j = 1}^{r-1}( L(\delta_{k_{j+1}}) - L(\delta_{k_j})). 
\]
Applying the triangle inequality and the first part of the proof yields 
\[
\|L(\varepsilon_{t+1}) - L(\varepsilon_t)\|_F \le \sum_{j = 1}^{r-1} \|L(\delta_{k_{j+1}}) - L(\delta_{k_j})\|_F \le \sum_{j = 1}^{r-1} C(\delta_{k_{j+1}} - \delta_{k_j}) = C(\varepsilon_{t+1} - \varepsilon_t). 
\]
This completes the proof. 
\end{proof}

\subsection{Proof of Theorem~\ref{thm:spectral_stability_monotonicity}}
\label{Supp-sec:thm_spectral_stability_monotonicity}
\begin{theorem}[Spectral stability and monotonicity on the canonical minimal sufficient scale set]
\label{Supp-thm:spectral_stability_monotonicity}
Let $\Lambda^\star = \{\varepsilon_1 < \varepsilon_2 < \cdots < \varepsilon_m\}$ 
be the canonical minimal sufficient scale set on a finite point set $X = \{x_1, x_2, \dots, x_n\} \subset \mathbb{R}^p$. 
Let $0 = \gamma_1(\varepsilon) \le \gamma_2(\varepsilon) \le \cdots \le \gamma_n(\varepsilon)$ denote the eigenvalues of $L(\varepsilon)$. 
Then for any two consecutive scales $\varepsilon_t, \varepsilon_{t+1} \in \Lambda^\star$, the following statements hold. 

\smallskip
\textbf{(1) Lipschitz bound for eigenvalues.}
There exists a constant $C > 0$, depending only on $X$ and $n$, such that for all $k = 1, 2, \dots, n$, 
\[
|\gamma_k(\varepsilon_{t+1}) - \gamma_k(\varepsilon_t)| \le C(\varepsilon_{t+1} - \varepsilon_t). 
\]

\smallskip
\textbf{(2) Monotonicity of eigenvalues.} 
\[
\gamma_k(\varepsilon_t) \le \gamma_k(\varepsilon_{t+1}), \quad \forall k = 1, 2, \dots, n.
\]
Moreover, letting $k = \dim\ker L(\varepsilon_t)$, we have 
\[
0 = \gamma_k(\varepsilon_t) < \gamma_k(\varepsilon_{t+1}). 
\]

\smallskip
\textbf{(3) Strict decrease of kernel dimension.}
As a direct consequence of (2), 
\[
\dim\ker L(\varepsilon_t) > \dim\ker L(\varepsilon_{t+1}). 
\]
\end{theorem}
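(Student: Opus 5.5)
The plan is to prove the three statements in order, taking the earlier operator-level results as given. Throughout, write $\Delta L := L(\varepsilon_{t+1}) - L(\varepsilon_t)$ and $\Delta A := A(\varepsilon_{t+1}) - A(\varepsilon_t)$.

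For \textbf{(1)}, I will apply Weyl's inequality for symmetric matrices. Both $L(\varepsilon_t)$ and $L(\varepsilon_{t+1})$ are real symmetric, so for every $k$
\[
|\gamma_k(\varepsilon_{t+1}) - \gamma_k(\varepsilon_t)| \le \|\Delta L\|_2 \le \|\Delta L\|_F .
\]
The second part of \cref{thm:lipschitz_bound} bounds $\|\Delta L\|_F$ by $C(\varepsilon_{t+1}-\varepsilon_t)$, with $C$ depending only on $X$ and $n$. This gives (1) directly.

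For the first claim of \textbf{(2)}, I will show that $\Delta L \succeq 0$ and then use the Courant--Fischer min--max characterization, which gives $\gamma_k(L+M)\ge\gamma_k(L)$ whenever $M\succeq 0$.

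\begin{itemize}
\item \emph{Existing edges.} Take a pair with $\|x_i-x_j\|_2<\varepsilon_t$. Its weight $\exp(-\|x_i-x_j\|_2^2/\varepsilon^\alpha)$ is nondecreasing in $\varepsilon$ because $\alpha>0$.
\item \emph{New edges.} A pair with $\varepsilon_t\le\|x_i-x_j\|_2<\varepsilon_{t+1}$ enters with a positive weight.
\end{itemize}

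So $\Delta A$ is entrywise nonnegative and symmetric. Because $L = D - A$ is linear in $A$, the difference $\Delta L$ is exactly the graph Laplacian of the weighted graph with adjacency $\Delta A$. Therefore $x^\top \Delta L\, x = \sum_{i<j} (\Delta A)_{ij}(x_i-x_j)^2 \ge 0$.

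For the strict part of (2) and for (3), I will use the standard fact that, for a weighted graph with strictly positive edge weights, $\dim\ker L$ equals the number of connected components. The kernel consists of the vectors that are constant on each component, by the same quadratic-form identity. So $\dim\ker L(\varepsilon)=\beta_0(\mathcal{X}(\varepsilon))$, since the graph $G(X,A(\varepsilon))$ is the $1$-skeleton of $\mathcal{X}(\varepsilon)$.

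It then suffices to show that $\beta_0$ drops strictly from $\varepsilon_t$ to $\varepsilon_{t+1}$. The argument is as follows.

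\begin{itemize}
\item The death time $\varepsilon_t$ is the length of an edge $e$ that merges two distinct components. Under the strict inequality in the VR definition, $e$ is absent at scale $\varepsilon_t$.
\item The edge $e$ is present at every scale $\varepsilon>\varepsilon_t$, in particular at $\varepsilon_{t+1}$.
\item Hence $\mathcal{X}(\varepsilon_{t+1})$ has strictly fewer components than $\mathcal{X}(\varepsilon_t)$; by the genericity assumption, exactly one fewer.
\end{itemize}

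With $k=\dim\ker L(\varepsilon_t)$, we have $\gamma_k(\varepsilon_t)=0$. Since only $k-1$ eigenvalues of $L(\varepsilon_{t+1})$ vanish, $\gamma_k(\varepsilon_{t+1})>0$. Statement (3) is then immediate: $\dim\ker L(\varepsilon_{t+1})\le k-1<k$.

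The step I expect to need the most care is the strict inequality. It requires an exact match between the half-open conventions: the edge set uses $\|x_i-x_j\|_2<\varepsilon$, and the merge edge of length $\varepsilon_t$ must be absent at $\varepsilon_t$ but present at $\varepsilon_{t+1}$. The boundary case $t=n$ involves $\delta_{\max}$ and is excluded, since only consecutive pairs are considered. I will also make explicit that the weights on present edges are strictly positive, which holds because the exponential never vanishes. This is what makes the kernel dimension equal the component count.
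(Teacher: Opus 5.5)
Your proposal is correct and follows the same route as the paper. For (1) you use Weyl's inequality together with the Frobenius Lipschitz bound on $L(\varepsilon_{t+1})-L(\varepsilon_t)$. For (2) you use the Loewner ordering $L(\varepsilon_{t+1})\succeq L(\varepsilon_t)$ with Courant--Fischer, and for the strict inequality and (3) you use the loss of a connected component, hence of nullity.

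Your version is in fact more careful than the paper's in two places:
\begin{itemize}
\item The paper attributes $\Delta L$ solely to newly added edges. Your remark that the existing weights $\exp(-\|x_i-x_j\|_2^2/\varepsilon^\alpha)$ are nondecreasing in $\varepsilon$ is exactly what guarantees $\Delta A\ge 0$ entrywise.
\item You make explicit that $\dim\ker L(\varepsilon)=\beta_0(\mathcal{X}(\varepsilon))$ and that the merge edge is absent at $\varepsilon_t$ and present at $\varepsilon_{t+1}$ under the strict-inequality convention. This also avoids the paper's circularity of deriving (3) from a strict inequality in (2) that was itself justified by the nullity drop.
\end{itemize}
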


\begin{proof}
\textbf{Proof of (1).}
By \cref{Supp-thm:lipschitz_bound}, there exists a constant $C_L > 0$, depending only on $X$ and $n$, such that
\[
\|L(\varepsilon_{t+1}) - L(\varepsilon_t)\|_F \le C_L(\varepsilon_{t+1} - \varepsilon_t). 
\]
Since the operator norm is bounded by the Frobenius norm, 
\[
\|L(\varepsilon_{t+1})-L(\varepsilon_t)\|_2 \le \|L(\varepsilon_{t+1})-L(\varepsilon_t)\|_F, 
\]
Weyl's inequality for symmetric matrices yields, for all $k = 1, 2, \dots, n$, 
\[
|\gamma_k(\varepsilon_{t+1}) - \gamma_k(\varepsilon_t)| \le \|L(\varepsilon_{t+1}) - L(\varepsilon_t)\|_2 \le \|L(\varepsilon_{t+1})-L(\varepsilon_t)\|_F \le C_L(\varepsilon_{t+1} - \varepsilon_t). 
\]
Setting $C = C_L$ proves (1). 

\textbf{Proof of (2).}
For two consecutive scales $\varepsilon_t < \varepsilon_{t+1}$ in $\Lambda^\star$, the Laplacian update can be written as
\[
L(\varepsilon_{t+1}) = L(\varepsilon_t) + \Delta L, 
\]
where $\Delta L$ is the Laplacian contribution induced by newly added edges. 
Each added edge contributes a positive semidefinite matrix, and hence
\[
\Delta L \succeq 0, 
\]
where $\succeq$ denotes the positive semidefinite order. 
Therefore,
\[
L(\varepsilon_{t+1}) \succeq L(\varepsilon_t)
\]
in the Loewner partial order. 

By the Courant--Fischer min--max principle~\cite{golub2013matrix}, this implies 
\[
\gamma_k(\varepsilon_t) \le \gamma_k(\varepsilon_{t+1}), \quad \forall k = 1, 2, \dots, n.
\]

Moreover, since $\Lambda^\star$ is the canonical minimal sufficient scale set, the graph topology changes at each transition $\varepsilon_t \to \varepsilon_{t+1}$, in the sense that at least one newly added edge connects two previously disconnected components. 
Consequently, the nullity of $L(\varepsilon)$ strictly decreases by at least one. 
In particular, letting $k = \dim\ker L(\varepsilon_t)$, we obtain 
\[
0 = \gamma_k(\varepsilon_t) < \gamma_k(\varepsilon_{t+1}).
\]
This completes the proof of (2). 

\textbf{Proof of (3).}
By definition, $\dim\ker L(\varepsilon)$ equals the multiplicity of the zero eigenvalue. 
From (2), the zero eigenvalue at scale $\varepsilon_t$ becomes strictly positive at scale $\varepsilon_{t+1}$, implying a strict decrease in the kernel dimension. 

This completes the proof. 
\end{proof}

\emph{Remark.}
While \cref{Supp-thm:spectral_stability_monotonicity} is stated for the canonical minimal sufficient scale set, the conclusions hold for any minimal sufficient scale set, as the proof relies only on the minimal sufficiency property rather than the uniqueness of the canonical set.

\subsection{Proof of Theorem~\ref{thm:embedding_increment_bounds}}
\label{Supp-sec:thm_embedding_increment_bounds}
\begin{theorem}[Bounds for multi-scale embedding increments]
\label{Supp-thm:embedding_increment_bounds}

Let $\{(W_t, H_t)\}_{t=1}^n$ be a first-order stationary point of the pNMF model \eqref{eq:pnmf_model}. 
For $1 < t \le n$, define
\[
\Delta H_t := H_t - H_{t-1}, \qquad \Delta L_t := L_t - L_{t-1}, \qquad C_X := \|X^\top X\|_F. 
\]
Assume that for all $t$, $H_t$ has full row rank $d$, and there exist constants $C_H > 0$ and $\ell > 0$ such that 
\[
\|H_t\|_F \le C_H, \qquad \gamma_{\min}(H_tH_t^\top) \ge \ell, \qquad \forall \, t,
\]
where $\gamma_{min}$ means the minimal eigenvalue of the corresponding matrix.
Then the increments satisfy the recursive inequality
\begin{equation}\label{Supp-eq:recursive_inequality}
(2\lambda_2 + \lambda_3 - C)\|\Delta H_t\|_F \le \lambda_1 C_H\|\Delta L_t\|_F + \lambda_2(\|\Delta H_{t-1}\|_F + \|\Delta H_{t+1}\|_F), 
\end{equation}
where 
$C := (C_1\sqrt{n-d} + C_2\frac{C_H}{\ell})C_X, \quad C_1 := \frac{1}{\ell} + \frac{2C_H^2}{\ell^2}, \quad C_2 := C_1C_H + \frac{C_H}{\ell}$. 

Furthermore, assuming $\lambda_3 > C$, define
\[
a := \frac{\lambda_1 C_H}{2\lambda_2 + \lambda_3 -C}, \qquad b := \frac{\lambda_2}{2\lambda_2 + \lambda_3 - C},
\]
the following two bounds hold:
\begin{equation}\label{Supp-eq:uniform_bound}
\max_{t}\|\Delta H_t\|_F \le \frac{\lambda_1 C_H}{\lambda_3 - C} \max_{t}\|\Delta L_t\|_F, (\text{Uniform bound}) 
\end{equation}
\begin{equation}
\label{Supp-eq:pointwise_bound}
\|\Delta H_t\|_F \le \frac{a}{\sqrt{1-4b^2}} \sum_{k} \rho^{|t-k|} \|\Delta L_k\|_F, (\text{Pointwise bound})
\end{equation}
where
\[
\rho := \frac{1 - \sqrt{1 - 4b^2}}{2b} \in (0, 1). 
\]
\end{theorem}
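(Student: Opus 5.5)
The plan is to follow the three-part structure of the statement: first derive the recursive inequality \eqref{Supp-eq:recursive_inequality} from the stationarity conditions, then deduce the uniform and pointwise bounds from it by elementary sequence arguments.

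\textbf{Recursive inequality.} I would work at interior scales $1<t<n$ where the nonnegativity constraints are inactive, so the KKT multipliers vanish. The stationarity conditions then read
\[
(W_tH_t-X)H_t^\top=0,
\]
\[
W_t^\top(W_tH_t-X)+\lambda_1H_tL_t+\lambda_2(2H_t-H_{t-1}-H_{t+1})+\lambda_3H_t=0 .
\]
Full row rank of $H_t$ gives $W_t=XH_t^\dagger$, with $H_t^\dagger=H_t^\top(H_tH_t^\top)^{-1}$. Hence the residual term equals
\[
R_t:=(H_t^\dagger)^\top X^\top X(H_t^\dagger H_t-I).
\]
Subtracting the $H$-equation at $t-1$ from the one at $t$ and writing $H_tL_t-H_{t-1}L_{t-1}=\Delta H_tL_t+H_{t-1}\Delta L_t$ isolates
\[
\Delta H_t\bigl(\lambda_1L_t+(2\lambda_2+\lambda_3)I\bigr)
\]
on the left. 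The right-hand side contains $R_{t-1}-R_t$, $\lambda_1H_{t-1}\Delta L_t$, and $\lambda_2(\Delta H_{t-1}+\Delta H_{t+1})$. Since $L_t\succeq0$, the left side has Frobenius norm at least $(2\lambda_2+\lambda_3)\|\Delta H_t\|_F$; this follows from the trace identity $\|MS\|_F^2=\mathrm{Tr}(MS^2M^\top)$ with $S^2\succeq(2\lambda_2+\lambda_3)^2I$.

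\textbf{Main obstacle: the Lipschitz bound on $R_t$.} The step needing care is $\|R_t-R_{t-1}\|_F\le C\|\Delta H_t\|_F$. I would telescope
\[
R_t-R_{t-1}=\bigl((H_t^\dagger)^\top-(H_{t-1}^\dagger)^\top\bigr)X^\top X(H_t^\dagger H_t-I)+(H_{t-1}^\dagger)^\top X^\top X\bigl(H_t^\dagger H_t-H_{t-1}^\dagger H_{t-1}\bigr).
\]
Then I would use the following facts:
\begin{itemize}
\item $\|H_t^\dagger H_t-I\|_F=\sqrt{n-d}$, because it is a rank-$(n-d)$ orthogonal projector.
\item $\|H^\dagger\|_F\le\|H\|_F\|(HH^\top)^{-1}\|_2\le C_H/\ell$.
\item A perturbation bound for the pseudoinverse, obtained by splitting $H_t^\top(H_tH_t^\top)^{-1}-H_{t-1}^\top(H_{t-1}H_{t-1}^\top)^{-1}$ and using $B^{-1}-A^{-1}=B^{-1}(A-B)A^{-1}$ with $\|H_tH_t^\top-H_{t-1}H_{t-1}^\top\|\le 2C_H\|\Delta H_t\|$. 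This yields the constant $C_1=1/\ell+2C_H^2/\ell^2$.
\item The projector difference $H_t^\dagger H_t-H_{t-1}^\dagger H_{t-1}$, split the same way, giving $C_2=C_1C_H+C_H/\ell$.
\end{itemize}
The only delicate point is keeping these constants consistent with the stated ones, choosing operator versus Frobenius norms where convenient. Collecting terms and using $\|H_{t-1}\|_F\le C_H$ gives the recursion.

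\textbf{Uniform and pointwise bounds.} Dividing the recursion by $2\lambda_2+\lambda_3-C>0$ gives $u_t\le a\,\ell_t+b(u_{t-1}+u_{t+1})$, where $u_t=\|\Delta H_t\|_F$, $\ell_t=\|\Delta L_t\|_F$, and missing boundary increments are taken as $0$.

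For the uniform bound, take the maximum over $t$ to get $(1-2b)\max u\le a\max\ell$. Then note that $1-2b>0$ is equivalent to $\lambda_3>C$, and that $a/(1-2b)=\lambda_1C_H/(\lambda_3-C)$.

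For the pointwise bound, I would use the discrete Green's function of $\mathcal{T}u=u_t-b(u_{t-1}+u_{t+1})$ on $\mathbb{Z}$. The decaying solution is $h_k=\rho^{|k|}/\sqrt{1-4b^2}$, where $\rho$ is the small root of $b r^2-r+b=0$, and the normalization comes from the $k=0$ equation. Next, $h\ge0$ and $\mathcal{T}$ is an M-matrix type operator (diagonally dominant since $2b<1$), so $\mathcal{T}^{-1}$ is entrywise nonnegative. The comparison principle $\mathcal{T}u\le a\ell\Rightarrow u\le a\,\mathcal{T}^{-1}\ell$ therefore holds; it can be justified on the finite index set via a Neumann series $\sum_j b^j(S_++S_-)^j$. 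This gives the convolution bound \eqref{Supp-eq:pointwise_bound}, since the finite-section inverse is dominated entrywise by the infinite Green's function.
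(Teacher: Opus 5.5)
Your proposal is correct and follows the paper's proof essentially step for step: the same stationarity conditions with $W_t = XH_t^\dagger$, the same subtraction of the $H$-equations at $t$ and $t-1$, the same telescoping of $R_t - R_{t-1}$ with the constants $C_1$ and $C_2$, the same trace lower bound, and the same max-over-$t$ and discrete Green's-function arguments for the uniform and pointwise bounds. As in the paper, you need the extra assumption that the nonnegativity constraints are inactive, which is not among the stated hypotheses. Your Neumann-series argument, with the finite-section inverse dominated by the infinite Green's function, justifies the comparison principle more explicitly than the paper's one-line appeal to it.
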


\begin{proof}
\textbf{Proof of Recursive inequality.}
Fix  $1 < t < n$ and assume the nonnegativity constraints are inactive at scales $t$. 
The first-order stationary conditions with respect to $W_t$ and $H_t$ are 
\begin{align}
&(W_tH_t - X)H_t^\top = 0, \label{Supp-eq:FOC_Wt} \\
&W_t^\top(W_tH_t - X) + \lambda_1H_tL_t + \lambda_2(2H_t - H_{t-1} - H_{t+1}) + \lambda_3H_t = 0. \label{Supp-eq:FOC_Ht}
\end{align}
Since $\mathrm{rank}(H_t) = d$, \eqref{Supp-eq:FOC_Wt} implies 
\[
W_t = XH_t^\dagger, \qquad H_t^\dagger = H_t^\top(H_tH_t^\top)^{-1}. 
\]
Define 
\[
R_t := W_t^\top(W_tH_t - X) = (H_t^\dagger)^\top X^\top X(H_t^\dagger H_t - I). 
\]
Similarly, the stationary condition at scale $t-1 (2 < t < n)$ yields 
\begin{equation}\label{Supp-eq:FOC_Ht-1}
R_{t-1} + \lambda_1H_{t-1}L_{t-1} + \lambda_2(2H_{t-1} - H_{t-2} -H_t)+\lambda_3H_{t-1} = 0. 
\end{equation}
Subtracting \eqref{Supp-eq:FOC_Ht-1} from \eqref{Supp-eq:FOC_Ht}, we obtain 
\[
\Delta H_t(\lambda_1L_t + (2\lambda_2 + \lambda_3)I) = R_{t-1} - R_t - \lambda_1H_{t-1}\Delta L_t + \lambda_2(\Delta H_{t-1} + \Delta H_{t+1}). 
\]
Taking Frobenius norms on both sides and applying the triangle inequality gives 
\begin{align*}
&\|\Delta H_t(\lambda_1L_t + (2\lambda_2+\lambda_3)I)\|_F \\
\le &\|R_t - R_{t-1}\|_F + \lambda_1\|H_{t-1}\|_F\|\Delta L_t\|_F + \lambda_2(\|\Delta H_{t-1}\|_F + \|\Delta H_{t+1}\|_F).
\end{align*}

We first bound the residual difference by
\begin{align*}
&\|R_t - R_{t-1}\|_F \\ 
\le &\|(H_t^\dagger)^\top - (H_{t-1}^\dagger)^\top\|_F C_X \|H_t^\dagger H_t - I\|_F + \|(H_{t-1}^\dagger)^\top\|_F C_X \|H_t^\dagger H_t - H_{t-1}^\dagger H_{t-1}\|_F.
\end{align*}
Since $\mathrm{rank}(H_t) = d$, $H_t^\dagger H_t$ is an orthogonal projector in $\mathbb{R}^{n\times n}$ with rank $d$, thus $\|H_t^\dagger H_t-I\|_F = \sqrt{n-d}$. 
Moreover, under $\|H_t\|_F \le C_H$ and $\gamma_{\min}(H_tH_t^\top) \ge \ell$, one can bound
\[
\|(H_t^\dagger)^\top - (H_{t-1}^\dagger)^\top\|_F \le C_1\|\Delta H_t\|_F, \qquad \|(H_{t-1}^\dagger)^\top\|_F \le \frac{C_H}{\ell}, 
\]
and
\[
\|H_t^\dagger H_t - H_{t-1}^\dagger H_{t-1}\|_F \le (C_1C_H + \frac{C_H}{\ell}) \|\Delta H_t\|_F. 
\]
Combining the above estimates yields 
\[
\|R_t - R_{t-1}\|_F \le C\|\Delta H_t\|_F, \qquad \text{where }C=(C_1\sqrt{n-d} + C_2\frac{C_H}{\ell})C_X,
\]
with $C_1, C_2$ defined in the theorem statement. 

Since $L_t$ is a graph Laplacian, it is symmetric positive semidefinite and thus its smallest eigenvalue is zero. 
Consequently, $\lambda_1L_t + (2\lambda_2 + \lambda_3)I$ is symmetric positive semidefinite with all eigenvalues bounded below by $2\lambda_2 + \lambda_3$. 
Using the trace representation of the Frobenius norm, we have 
\begin{align*}
&\|\Delta H_t(\lambda_1L_t + (2\lambda_2 + \lambda_3)I)\|_F \\
= &\sqrt{\mathrm{Tr}(\Delta H_t^\top\Delta H_t(\lambda_1L_t + (2\lambda_2+\lambda_3)I)^2)} \\
\ge &(2\lambda_2+\lambda_3)\sqrt{{\mathrm{Tr}(\Delta H_t^\top\Delta H_t)}} \\
= &(2\lambda_2 + \lambda_3)\|\Delta H_t\|_F, 
\end{align*}
where the inequality follows from the fact that
$(\lambda_1L_t + (2\lambda_2 + \lambda_3)I)^2 \succeq (2\lambda_2 + \lambda_3)^2 I$ and $\Delta H_t^\top\Delta H_t \succeq 0$. 
Combining the above inequalities, we arrive at 
\[
(2\lambda_2 + \lambda_3 - C)\|\Delta H_t\|_F \le \lambda_1C_H\|\Delta L_t\|_F + \lambda_2(\|\Delta H_{t-1}\|_F + \|\Delta H_{t+1}\|_F), 
\]
which proves \eqref{Supp-eq:recursive_inequality}.

\smallskip
\textbf{Proof of Uniform bound.}
Taking the maximum over $t$ on both sides of \eqref{Supp-eq:recursive_inequality}, we obtain 
\[
\max_t\|\Delta H_t\|_F \le a\max_t\|\Delta L_t\|_F + b(\max_t\|\Delta H_{t-1}\|_F + \max_t\|\Delta H_{t+1}\|_F). 
\]
Since $\max\limits_t\|\Delta H_t\|_F = \max\limits_t\|\Delta H_{t-1}\|_F = \max\limits_t\|\Delta H_{t+1}\|_F$, 
\[
(1-2b)\max_t\|\Delta H_t\|_F \le a\max_t\|\Delta L_t\|_F. 
\]
Under the assumption $\lambda_3 > C$, the coefficient $1 - 2b$ is strictly positive, and therefore 
\[
\max_{t}\|\Delta H_t\|_F \le \frac{a}{1-2b} \max_{t}\|\Delta L_t\|_F = \frac{\lambda_1 C_H}{\lambda_3 - C} \max_{t}\|\Delta L_t\|_F, 
\]
which proves \eqref{Supp-eq:uniform_bound}. 

\smallskip
\textbf{Proof of Pointwise bound.}
From \eqref{Supp-eq:recursive_inequality}, we have 
\[
\|\Delta H_t\|_F - b\|\Delta H_{t-1}\|_F - b\|\Delta H_{t+1}\|_F \le a\|\Delta L_t\|_F. 
\]
Consider the linear difference operator 
\[
(\mathcal{T} \cdot)_t := (\cdot)_t - b(\cdot)_{t-1} - b(\cdot)_{t+1}. 
\]
Then the above inequality can be written compactly as 
\[
(\mathcal{T}\|\Delta H\|)_t \le a\|\Delta L_t\|_F. 
\]

Since $0 < b < \tfrac12$, the operator $\mathcal{T}$ is strictly diagonally dominant and admits a unique decaying discrete Green’s function $\{h_k\}_{k \in \mathbb Z}$ satisfying 
\[
(\mathcal{T}h)_k = \delta_{k0}, \qquad \lim_{|k|\to\infty} h_k = 0.
\]
For $k \neq 0$, $h_k$ solves the homogeneous recurrence 
\[
h_k - b h_{k-1} - b h_{k+1} = 0. 
\]
Seeking solutions of the form $h_k = r^{|k|}$ with $|r| < 1$ yields the characteristic equation
\[
b r^2 - r + b = 0. 
\]
whose roots are $r_{\pm} = \frac{1\pm\sqrt{1 - 4b^2}}{2b}$. 
The decay condition selects $r_- \in (0,1)$, hence $h_k = C r_-^{|k|}$. 
Enforcing the equation at $k = 0$ gives $C = 1/\sqrt{1 - 4b^2}$. 
Therefore,
\[
h_k = \frac{1}{\sqrt{1 - 4b^2}} \rho^{|k|}, \qquad \rho := \frac{1 - \sqrt{1 - 4b^2}}{2b} \in (0,1),
\]
and in particular $h_k \ge 0$ for all $k$. 

Extending $\|\Delta L_k\|_F$ by zero outside its natural index set and applying the comparison principle associated with $\mathcal{T}$ yields 
\[
\|\Delta H_t\|_F \le a(\mathcal{T}^{-1}\|\Delta L\|)_t = a\sum_{k} h_{t-k} \|\Delta L_k\|_F = \frac{a}{\sqrt{1 - 4b^2}} \sum_{k} \rho^{|t-k|} \|\Delta L_k\|_F, 
\]
which proves \eqref{Supp-eq:pointwise_bound}. 
\end{proof}

\subsection{Proof of Theorem~\ref{thm:subproblem_nonincreasing}}
\label{Supp-sec:thm_subproblem_nonincreasing}
\begin{theorem}
\label{Supp-thm:subproblem_nonincreasing}
The objective function $f_t$ in \eqref{eq:opt} is nonincreasing under the update rules given in \eqref{eq:update_modify}. 
In particular, for all $1 \le t \le n$ and all $r \ge 1$, it holds that 
\[
f_t(W_t^{(r+1)}, H_t^{(r+1)}) \le f_t(W_t^{(r)}, H_t^{(r+1)}) \le f_t(W_t^{(r)}, H_t^{(r)}).
\]
\end{theorem}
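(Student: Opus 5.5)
We will use the standard auxiliary-function (majorization–minimization) argument of Lee--Seung, in the modified form of \cite{lin2007convergence}. The proof splits into the $H$-update (second inequality, with $W=W^{(r)}$ fixed) and the $W$-update (first inequality, with $H=H^{(r+1)}$ fixed). For fixed $W$, the map $H\mapsto f_t(W,H)$ is a quadratic
\[
f_t(W,H)=f_t(W,H^{(r)})+\langle \nabla_H f_t(W,H^{(r)}),H-H^{(r)}\rangle+\tfrac12\,\mathrm{vec}(H-H^{(r)})^\top \mathcal{Q}\,\mathrm{vec}(H-H^{(r)}),
\]
with Hessian $\mathcal{Q}=2\bigl(I_n\otimes W^\top W+\lambda_1 L_t\otimes I_d+(2\lambda_2+\lambda_3)I\bigr)$, or with $\lambda_2$ in place of $2\lambda_2$ at the endpoints $t=1,n$. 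The $W$-subproblem is analogous with Hessian $2(HH^\top\otimes I_p)$.

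\textbf{Key idea: diagonal majorization.} We replace $\mathcal{Q}$ by the diagonal matrix $\mathcal{D}$ whose entry at $(i,j)$ is
\[
\frac{\bigl[2(W^\top W\bar H+\lambda_1\bar H D_t+(2\lambda_2+\lambda_3)\bar H)\bigr]_{ij}+\nu}{\bar H_{ij}},
\]
that is, the reciprocal of the step $\bar\Gamma^{(r)}_{ij}$. We then show $\mathcal{D}\succeq\mathcal{Q}$. Write $\mathcal{Q}=\mathcal{Q}_+-2\lambda_1(A_t\otimes I_d)$, where $\mathcal{Q}_+=2\bigl(I\otimes W^\top W+\lambda_1 D_t\otimes I+(2\lambda_2+\lambda_3)I\bigr)$ has nonnegative entries. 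The term $-2\lambda_1 A_t\otimes I$ is entrywise nonpositive, and dropping it changes $\mathcal{Q}$ by the PSD amount $2\lambda_1(D_t+A_t)\otimes I$, since the signless Laplacian $D_t+A_t$ is PSD. Hence $\mathcal{Q}\preceq \mathcal{Q}_+ - 2\lambda_1(D_t\otimes I) + 2\lambda_1(D_t\otimes I)$. More simply, we bound $\mathcal{Q}\preceq \mathcal{Q}_+ + 2\lambda_1(D_t\otimes I)$ by comparing $A_t$ with $-D_t$. To get exactly the paper's denominator, we keep the $D_t$ term once and use the Lee--Seung lemma in the following form: for a symmetric matrix $M$ with nonnegative entries and a positive vector $v$, $\mathrm{diag}((Mv)_k/v_k)\succeq M$. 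We apply this with $M=\mathcal{Q}_+$ and $v=\mathrm{vec}(\bar H)$, which gives $\mathrm{diag}(\mathcal{Q}_+v/v)\succeq\mathcal{Q}_+\succeq\mathcal{Q}$. The last inequality holds because $\mathcal{Q}_+-\mathcal{Q}=2\lambda_1A_t\otimes I$ is not PSD in general, so this step needs care. The fix is to note that $\mathcal{Q}_+-\mathcal{Q}+2\lambda_1 D_t\otimes I$ is PSD, and to argue that the relevant quadratic form $\mathrm{diag}(\mathcal{Q}_+v/v)-\mathcal{Q}$ equals $\tfrac12\sum_{k,l}M'_{kl}v_kv_l(y_k/v_k-y_l/v_l)^2\ge0$ with $M'$ the off-diagonal part of $I\otimes W^\top W$ plus $\lambda_1 A_t\otimes I$. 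This calculation is the standard GNMF argument of Cai et al.\ and is the main obstacle: the sign bookkeeping for the Laplacian term. The extra $\nu$ only enlarges $\mathcal{D}$. Where $\bar H_{ij}=0$, which happens only when the gradient is $\ge0$ and $H_{ij}=0$, the step is zero and that coordinate is frozen, so we restrict the quadratic form to the remaining coordinates.

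\textbf{Descent conclusion.} With $\mathcal{D}\succeq\mathcal{Q}$, define $G(H)=f_t(W,H^{(r)})+\langle g,H-H^{(r)}\rangle+\tfrac12\|H-H^{(r)}\|_{\mathcal{D}}^2$, where $g$ is the gradient. Then $G\ge f_t(W,\cdot)$ and $G(H^{(r)})=f_t(W,H^{(r)})$. The update $H^{(r+1)}=H^{(r)}-\mathcal{D}^{-1}g$ is the unconstrained minimizer of $G$, so $f_t(W,H^{(r+1)})\le G(H^{(r+1)})\le G(H^{(r)})$. We also check that $H^{(r+1)}\ge0$. When $g_{ij}\ge0$, $\bar H=H$ and $H_{ij}-H_{ij}g_{ij}/(\text{denominator})\ge0$, because the denominator dominates the positive part of $g_{ij}$ (the negative part is $2(W^\top X+\lambda_1HA_t+\lambda_2(\cdot))\ge0$). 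When $g_{ij}<0$, the entry increases. The $W$-step repeats this verbatim with $M=2HH^\top\otimes I$, which has nonnegative entries, so no Laplacian issue arises, and it yields the first inequality.
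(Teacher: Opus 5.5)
Your route is the paper's: your surrogate $G$ is exactly the paper's auxiliary function, with $\Theta^{(r)}$ equal to the diagonal of your $\mathcal{D}$. Your $W$-step and nonnegativity check are fine, since there $HH^\top\otimes I_p\ge0$ entrywise and Lee--Seung gives a genuine majorizer. The gap is the $H$-step inequality $\mathcal{D}\succeq\mathcal{Q}$: it is false, and the identity you propose for it does not hold. The $D_t$ and $(2\lambda_2+\lambda_3)$ parts of $\mathcal{Q}_+$ are diagonal and cancel exactly, so
\[
\mathrm{diag}(\mathcal{Q}_+v/v)-\mathcal{Q}=\bigl[\mathrm{diag}(\mathcal{Q}_+v/v)-\mathcal{Q}_+\bigr]+2\lambda_1(A_t\otimes I_d).
\]
The bracket is the Lee--Seung sum of squares and comes only from $I_n\otimes W^\top W$. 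The last term is a nonzero symmetric matrix with zero diagonal, hence indefinite. It enters the quadratic form as $+2\lambda_1y^\top(A_t\otimes I_d)y$, not as a weighted sum of squared differences, and no diagonal mass is left to absorb it. For the Laplacian part alone, your claim reduces to $D_t\succeq L_t$, i.e.\ $A_t\succeq0$.

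Concretely, take $d=1$, so that $I_n\otimes W^\top W=\|W\|_F^2I$ gives no slack, and $y=e_j-e_k$ for an edge with $A_{jk}>0$:
\[
y^\top(\mathcal{D}-\mathcal{Q})y=-4\lambda_1A_{jk}+\nu\,(1/\bar H_{1j}+1/\bar H_{1k}),
\]
which is negative for all but tiny $\bar H$. The failure also occurs at the iterate itself. With two points, $W\approx0$, $\lambda_1A_{12}\gg\lambda_2,\lambda_3$ and $H^{(r)}=(1,3)$, the update is essentially the swap $(3,1)$, so $f_t$ barely changes. Meanwhile $G(H^{(r+1)})=f_t(W,H^{(r)})-\tfrac12g^\top\mathcal{D}^{-1}g$ is about $8\lambda_1A_{12}$ lower. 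The Cai et al.\ argument you invoke is coordinatewise: it only uses $L_{jj}\le D_{jj}$ for a single entry. It therefore covers one-entry-at-a-time updates, not the simultaneous update \eqref{eq:update_modify}. The paper's proof asserts the same majorization $\Upsilon\le G$, justified only by convexity and $L_t\succeq0$, so following it will not close the gap.

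The missing idea is that a diagonally scaled full gradient step only requires $\mathcal{Q}\preceq2\mathcal{D}$. Let $S$ be the support of $\bar H^{(r)}$; the step vanishes off $S$. The update is $s_S=-\mathcal{D}_S^{-1}g_S$ and $f_t(W^{(r)},\cdot)$ is exactly quadratic, so
\[
f_t(W^{(r)},H^{(r+1)})-f_t(W^{(r)},H^{(r)})=-g_S^\top\mathcal{D}_S^{-1}g_S+\tfrac12\,g_S^\top\mathcal{D}_S^{-1}\mathcal{Q}_{SS}\mathcal{D}_S^{-1}g_S,
\]
which is $\le0$ as soon as $\mathcal{Q}_{SS}\preceq2\mathcal{D}_S$. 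Equivalently, the surrogate built with $2\mathcal{D}$ is a true majorizer and equals $f_t(W^{(r)},H^{(r)})$ at $H^{(r+1)}$.

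This inequality follows from the bound you already wrote down:
\begin{itemize}
\item $\mathcal{Q}\preceq\mathcal{Q}_++2\lambda_1(D_t\otimes I_d)$, because $D_t+A_t\succeq0$.
\item $(\mathcal{Q}_+)_{SS}\preceq\mathrm{diag}(\mathcal{Q}_+v/v)_S$, by Lee--Seung on the nonnegative principal submatrix. This is legitimate because $v$ vanishes off $S$, so $(\mathcal{Q}_+v)_S=(\mathcal{Q}_+)_{SS}v_S$.
\item $2\lambda_1(D_t\otimes I_d)\preceq\mathrm{diag}(\mathcal{Q}_+v/v)$, because every diagonal entry of the latter is $2\lambda_1(D_t)_{jj}$ plus nonnegative terms.
\end{itemize}
Chaining these gives $\mathcal{Q}_{SS}\preceq2\,\mathrm{diag}(\mathcal{Q}_+v/v)_S\preceq2\mathcal{D}_S$, and the $\nu$ term only helps. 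This yields exactly the weak monotonicity the theorem states. It does not yield the sufficient-decrease bound $f_t(W,H^{(r+1)})\le f_t(W,H^{(r)})-\tfrac12g^\top\mathcal{D}^{-1}g$ that your surrogate would have implied, and the swap example shows that bound is false.
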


\begin{proof}
We prove that the stated inequalities hold for any fixed $t$. 
Throughout this proof, we omit the scale index $t$ for simplicity, i.e., we use $f$, $W^{(r)}$, $H^{(r)}$, $L$ and $D$ to denote $f_t$, $W_t^{(r)}$, $H_t^{(r)}$, $L_t$ and $D_t$ respectively. 
The goal is to show that the objective function decreases monotonically under the updates: first, we prove 
\[
f(W^{(r)}, H^{(r+1)}) \le f(W^{(r)}, H^{(r)}),
\]
and then 
\[
f(W^{(r+1)}, H^{(r+1)}) \le f(W^{(r)}, H^{(r+1)}).
\]

We begin by fixing $W^{(r)}$ and considering the function
\[
\Upsilon(H) := f(W^{(r)}, H),
\]
which depends only on $H$. 
The gradient of $\Upsilon$ with respect to $H$ is
\[
\nabla_H \Upsilon(H) = 2W^{(r)\top}(W^{(r)}H - X) + 2\lambda_1 HL + 2\lambda_2(2H - H_{+1} - H_{-1}) + 2\lambda_3H.
\]

To facilitate the analysis, define the matrix
\[
\Theta^{(r)} := \frac{2(W^{(r)\top}W^{(r)}\bar{H}^{(r)} + \lambda_1\bar{H}^{(r)}D + (2\lambda_2+\lambda_3)\bar{H}^{(r)}) + \nu\mathbf{1}}{\bar{H}^{(r)}},
\]
where $\bar{H}^{(r)}$ is the modified matrix defined in \eqref{eq:H_modify_rewrite}, and $\nu > 0$ ensures strict positivity. 

Next, we construct an auxiliary function 
\[
G(H,H^{(r)}) := \Upsilon(H^{(r)}) + \langle\nabla_H \Upsilon(H^{(r)}), H - H^{(r)}\rangle + \frac{1}{2}\|\sqrt{\Theta^{(r)}}\odot(H - H^{(r)})\|_{F}^2.
\]
By the convexity of $\|W^{(r)}H\|_F^2$, the positive semidefinite property of $L$, and the construction of $\Theta^{(r)}$, the function $G$ serves as a quadratic upper bound of $\Upsilon$: 
\[
\Upsilon(H) \le G(H, H^{(r)}), \quad \forall H,
\]
with equality at the current iterate: 
\[
G(H^{(r)}, H^{(r)}) = \Upsilon(H^{(r)}).
\]

The update for $H$ in \eqref{eq:update_modify} is precisely the minimizer of this auxiliary function: 
\[
H^{(r+1)} = \arg\min_H G(H, H^{(r)}),
\]
which immediately implies
\[
G(H^{(r+1)}, H^{(r)}) \le G(H^{(r)}, H^{(r)}) = \Upsilon(H^{(r)}).
\]
Since $G$ majorizes $\Upsilon$, we also have 
\[
\Upsilon(H^{(r+1)}) \le G(H^{(r+1)}, H^{(r)}).
\]
Combining these inequalities, we obtain 
\[
\Upsilon(H^{(r+1)}) \le \Upsilon(H^{(r)}),
\]
i.e.,
\[
f(W^{(r)}, H^{(r+1)})\le f(W^{(r)}, H^{(r)}),
\]
which establishes that updating $H$ alone does not increase the objective function.

Finally, after updating $H$ to $H^{(r+1)}$, we fix it and perform the update on $W$. 
By applying the same argument to the subproblem with respect to $W$, we similarly obtain
\[
f(W^{(r+1)}, H^{(r+1)}) \le f(W^{(r)}, H^{(r+1)}),
\]
showing that the objective function also does not increase when updating $W$. 

Therefore, by sequentially updating $H$ and $W$, we conclude that the sequence of objective values
\[
f_t(W_t^{(r)}, H_t^{(r)}), \ f_t(W_t^{(r)}, H_t^{(r+1)}), \ f_t(W_t^{(r+1)}, H_t^{(r+1)})
\]
is nonincreasing at each iteration. 

This completes the proof. 
\end{proof}

\subsection{Proof of Theorem~\ref{thm:subproblem_kkt}}
\label{Supp-sec:thm_subproblem_kkt}
\begin{theorem}
\label{Supp-thm:subproblem_kkt}
Let $W_t^{(1)} \ge 0$ and $H_t^{(1)} \ge 0$ be the initial matrices. 
Then any limit point $(W_t^*, H_t^*)$ of the sequence $\{W_t^{(r)}, H_t^{(r)}\}_{r=1}^\infty$, generated by the update rules in \eqref{eq:update_modify} satisfies the KKT conditions in \eqref{eq:opt_KKT_rewrite}. 
\end{theorem}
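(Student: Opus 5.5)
The argument follows the Lin (2007) convergence analysis of modified multiplicative updates, adapted to the pNMF subproblem \eqref{eq:opt}. Let $(W^*,H^*)$ be a limit point of a subsequence $\{(W^{(r_j)},H^{(r_j)})\}$. Nonnegativity of the limit is immediate by induction on the update rules \eqref{eq:update_modify}. The step sizes $\bar\Gamma^{(r)}$ and $\bar\Omega^{(r)}$ are chosen so that each new entry is a convex-type combination that stays nonnegative: for $H_{ij}>0$ the step is at most what drives the entry to zero, and when the gradient is negative the entry increases. Since $\{H\ge 0\}$ and $\{W\ge 0\}$ are closed, the limit stays nonnegative.

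\textbf{Step 1: the objective differences vanish.} By \cref{thm:subproblem_nonincreasing}, $f_t(W^{(r)},H^{(r)})$ is nonincreasing and bounded below by $0$, so it converges. I would sharpen the majorization argument from that proof into a sufficient-decrease inequality. Because the $\nu\mathbf 1$ term makes $\Theta^{(r)}\ge \nu/\bar H^{(r)}$ entrywise, $G$ dominates $\Upsilon$ with a margin, giving
\[
\Upsilon(H^{(r)})-\Upsilon(H^{(r+1)})\ \ge\ \tfrac12\bigl\|\sqrt{\Theta^{(r)}-\Theta_0}\odot(H^{(r+1)}-H^{(r)})\bigr\|_F^2 \ \ge\ c\,\|H^{(r+1)}-H^{(r)}\|_F^2 .
\]
The same inequality holds for $W$. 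Here I need $\bar H^{(r)}$ bounded above along the subsequence, which follows from boundedness of the convergent subsequence. Combining, I get $\|H^{(r+1)}-H^{(r)}\|_F\to 0$ and $\|W^{(r+1)}-W^{(r)}\|_F\to0$ along the subsequence. Consequently $(W^{(r_j+1)},H^{(r_j+1)})$ also converges to $(W^*,H^*)$.

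\textbf{Step 2: verify the KKT conditions entrywise by contradiction.} Take an entry $(i,j)$ of $H$; the argument for $W$ is symmetric. There are two ways the KKT conditions could fail.

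\emph{Case (a): $\nabla_H f_t(W^*,H^*)_{ij}<0$.} By continuity of the gradient, for large $j$ the gradient at the iterate is $\le -\eta<0$. Then $\bar H^{(r_j)}_{ij}\ge\mu$, so the step $\bar\Gamma^{(r_j)}_{ij}$ is bounded below by a positive constant, since its denominator is bounded along the convergent subsequence. Hence $H^{(r_j+1)}_{ij}-H^{(r_j)}_{ij}\ge c\eta>0$, contradicting Step 1.

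\emph{Case (b): $H^*_{ij}>0$ and $\nabla_H f_t(W^*,H^*)_{ij}>0$.} Then $\bar H^{(r_j)}_{ij}=H^{(r_j)}_{ij}\ge H^*_{ij}/2$ eventually, so the step is again bounded below. The update moves the entry by at least a fixed positive amount, again contradicting Step 1.

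Together, the two cases give $\nabla_H f_t\ge0$ and $H^*\odot\nabla_H f_t=0$. For the $W$ block the gradient is evaluated at $(W^{(r)},H^{(r+1)})$, and this also converges to $(W^*,H^*)$ by Step 1.

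\textbf{Main obstacle.} The delicate step is the sufficient-decrease inequality in Step 1. The proof of \cref{thm:subproblem_nonincreasing} only gives monotonicity, so I must check explicitly that $\Theta^{(r)}$ exceeds the true curvature. The true curvature comes from $W^\top W$ acting on rows, plus $\lambda_1 L$ acting on columns, plus $(2\lambda_2+\lambda_3)I$. This check needs the standard Lee–Seung diagonal-dominance lemma with $D\succeq L$ replacing $L$, which holds because $D - L = A\ge0$ entrywise and the ratio trick applies. The check must also establish that the extra $\nu/\bar H$ term gives a uniform margin. If that margin is not uniform, because $\bar H$ is unbounded, I would restrict to the bounded convergent subsequence, where all quantities are bounded.
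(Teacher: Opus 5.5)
Your architecture matches the paper's. Nonnegativity follows by induction. Then an entrywise case analysis shows the step $\bar\Gamma^{(r)}_{ij}$ stays bounded away from zero, either because $\bar H^{(r)}_{ij}\ge\mu$ when the gradient is negative or because $\bar H^{(r)}_{ij}=H^{(r)}_{ij}\to H^*_{ij}>0$, and this contradicts vanishing increments.

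Your Step~1 supplies something the paper leaves out. The paper asserts $\lim_{r\in\bar{\mathcal R}}(H^{(r)}-H^{(r+1)})_{ij}=0$ ``from the update rule''. It never checks that the $W$-gradient, taken at $(W^{(r)},H^{(r+1)})$, converges to $\nabla_W f_t(W^*,H^*)$.

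However, the justification you give for Step~1 fails at the point you flagged. The inequality $D\succeq L$ is false: $D-L=A$ is entrywise nonnegative but not positive semidefinite. A single edge of weight $w$ and $x=(1,-1)$ give $x^\top Dx=2w<4w=x^\top Lx$. Hence $\Theta^{(r)}$ is not a diagonal majorant of the curvature, and $G(\cdot,H^{(r)})$ is in general not an upper bound of $\Upsilon$. For example, when $d=1$ the Lee--Seung slack in the $W^\top W$ part is exactly zero, and the tiny $\nu/\bar H^{(r)}$ term cannot absorb the indefinite contribution $\lambda_1\mathrm{Tr}(\Delta A\Delta^\top)$. The paper's proof of the monotonicity theorem asserts the same majorization, so you cannot get a margin by sharpening it.

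The conclusion of Step~1 is nonetheless true, and you can get it without majorization from the exact quadratic expansion at the actual step. Put $\Delta:=H^{(r+1)}-H^{(r)}$, $W=W^{(r)}$ and $\bar H=\bar H^{(r)}$. The update gives $\nabla_H\Upsilon(H^{(r)})_{ij}=-\Theta^{(r)}_{ij}\Delta_{ij}$ where $\bar H_{ij}>0$, and $\Delta_{ij}=0$ elsewhere. The quadratic part of $\Upsilon$ is $\|W\Delta\|_F^2+\lambda_1\mathrm{Tr}(\Delta L\Delta^\top)+c\|\Delta\|_F^2$, with $c=2\lambda_2+\lambda_3$ for interior $t$. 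Summing over $\bar H_{ij}>0$,
\[
\Upsilon(H^{(r)})-\Upsilon(H^{(r+1)})=\Bigl[2\sum_{ij}\frac{(W^\top W\bar H)_{ij}}{\bar H_{ij}}\Delta_{ij}^2-\|W\Delta\|_F^2\Bigr]+\lambda_1\mathrm{Tr}\bigl(\Delta(D+A)\Delta^\top\bigr)+c\|\Delta\|_F^2+\nu\sum_{ij}\frac{\Delta_{ij}^2}{\bar H_{ij}}.
\]
The bracket is nonnegative by the Lee--Seung lemma. The matrix $D+A=2D-L$ is the signless Laplacian, which is positive semidefinite since $x^\top(D+A)x=\sum_{i<j}A_{ij}(x_i+x_j)^2$. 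The point is that a diagonally scaled gradient step decreases a quadratic as soon as \emph{twice} the scaling dominates the Hessian, and $L\preceq 2D$ holds even though $L\preceq D$ fails.

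This gives a decrease of at least $\lambda_3\|\Delta\|_F^2$, uniformly in $r$; at $t\in\{1,n\}$ the coefficient changes but stays $\ge\lambda_3$. It also repairs the monotonicity claim, and it removes the need for bounded $\bar H^{(r)}$ in the $H$-block.

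For the $W$-block your argument is correct as stated. There the Hessian is $\Delta\mapsto 2\Delta HH^\top$ with $HH^\top\ge 0$ entrywise, so the Lee--Seung majorization is valid. The margin $\nu/\bar W^{(r_j)}$ is bounded below along the convergent subsequence, which gives $W^{(r_j+1)}-W^{(r_j)}\to 0$.

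With this replacement your Steps~1 and~2 give a complete proof, and a more complete one than the paper's.
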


\begin{proof}
We prove the theorem holds for any $t$. 
For simplicity, we also omit the index $t$ in the proof, i.e., we use $W^{(r)}$ and $H^{(r)}$ to denote $W_t^{(r)}$ and $H_t^{(r)}$. 
Suppose that there exists a convergent subsequence $\{W^{(r)}, H^{(r)}\}_{r \in \mathcal{R}}$ such that 
\[
\lim\limits_{r \in \mathcal{R}, \, r \to \infty} W^{(r)} = W^*, \quad \lim\limits_{r \in \mathcal{R}, \, r \to \infty} H^{(r)} = H^*.
\]

To establish $W^* \ge 0$ and $H^* \ge 0$, it suffices to show that $W^{(r)} \ge 0$ and $H^{(r)} \ge 0$ hold for all $r \ge 1$. 
We first demonstrate that $(H^{(r)})_{ij} \ge 0$ for all $r \ge 1$; 
the argument for $(W^{(r)})_{ij} \ge 0$ follows analogously. 

When $r = 1$, the result clearly holds due to the initialization. 
Assume by induction that $H^{(r)} \ge 0$ holds for some $r$. 
Then, we analyze the update from $r$ to $r+1$. 
Since $\bar{H}^{(r)} \ge 0$ and the denominator in \eqref{eq:update_modify} is strictly positive due to $\nu > 0$, we have $\bar{\Gamma}^{(r)} \ge 0$. 

If $\nabla_{H} f(W^{(r)}, H^{(r)})_{ij} < 0$, it follows that 
\[
(H^{(r+1)})_{ij} = (H^{(r)})_{ij} - (\bar{\Gamma}^{(r)})_{ij} \nabla_H f(W^{(r)},H^{(r)})_{ij} \ge (H^{(r)})_{ij} \ge 0.
\]

Otherwise if $\nabla_{H} f(W^{(r)}, H^{(r)})_{ij} \ge 0$, we have $\bar{H}^{(r)} \ge H^{(r)}$, with $(\bar{H}^{(r)})_{ij} = (H^{(r)})_{ij}$ and $(\bar{H}^{(r)})_{i'j'} \ge (H^{(r)})_{i'j'}$ for all $(i', j') \neq (i, j)$. 
Thus, we have: 
\[
(\bar{\Gamma}^{(r)})_{ij} \le (\Gamma^{(r)})_{ij}
\]
Furthermore, 
\begin{align*}
(H^{(r+1)})_{ij} &= (H^{(r)})_{ij} - (\bar{\Gamma}^{(r)})_{ij} \nabla_H f(W^{(r)},H^{(r)})_{ij} \\
&\ge (H^{(r)})_{ij} - (\Gamma^{(r)})_{ij} \nabla_H f(W^{(r)},H^{(r)})_{ij} \\
&\ge (H^{(r)})_{ij}\frac{(W^{(r)\top} X + \lambda_1H^{(r)}A + \lambda_2(H_{+1}+H_{-1}))_{ij}}{(W^{(r)\top}  W^{(r)} H^{(r)} + \lambda_1 H^{(r)} D + (2\lambda_2+\lambda_3) H^{(r)})_{ij}} \\
&\ge 0.
\end{align*}
Therefore, by induction, we conclude that $H^{(r)} \ge 0$ for all $r \ge 1$. 
By the positivity-preserving property of limits, we conclude that $W^* \ge 0$, $H^* \ge 0$.

To establish the remaining KKT conditions, we examine the optimality of the limit point $(W^*, H^*)$ under the nonnegativity constraints. 
For each index $(i,j)$, the KKT conditions can be verified by considering the two cases: $(H^*)_{ij} > 0$ and $(H^*)_{ij} = 0$. 

We first prove: for each index $(i, j)$, if $(H^*)_{ij} > 0$, then $\nabla_{H} f(W^*, H^*)_{ij} = 0$. 

By the definition of $(\bar{H}^{(r)})_{ij}$ in \eqref{eq:H_modify_rewrite}, the sequence $\{ (\bar{H}^{(r)})_{ij} \}_{r \in \mathcal{R}}$ may converge to either $(H^{*})_{ij}$ or $\mu$. 
Since the index set $(i,j)$ is finite, there exists an infinite subset $\bar{\mathcal{R}} \subset \mathcal{R}$ such that 
\[
\lim_{r \in \bar{\mathcal{R}}, \, r \to \infty} \bar{H}^{(r)} = \bar{H}^{*} \quad \text{exists}.
\]
From the update rule in \eqref{eq:update_modify}, we have: 
\begin{align*}
0 &= \lim \limits_{r \in \bar{\mathcal{R}}, \, r \to \infty} (H^{(r)})_{ij} -(H^{(r+1)})_{ij} \\
&= \lim \limits_{r \in \bar{\mathcal{R}}, \, r \to \infty} (\bar{\Gamma}^{(r)})_{ij} \nabla_H f(W^{(r)}, H^{(r)})_{ij} \\
&= (\bar{\Gamma}^{*})_{ij} \nabla_H f(W^{*}, H^{*})_{ij}
\end{align*}
Note that $(\bar{\Gamma}^{*})_{ij} \ge 0$. 
Hence, if $(H^{*})_{ij} > 0$, the above equation immediately implies $\nabla_{H} f(W^*, H^*)_{ij} = 0$. 
Similarly, if $(W^*)_{ij} > 0$, then $\nabla_{W} f(W^*, H^*)_{ij} = 0$. 

Next, we prove: for each index $(i,j)$, if $(H^*)_{ij} = 0$, then $\nabla_{H} f(W^*, H^*)_{ij} \ge 0$. 

We prove this by contradiction. 
Suppose the statement is false. 
Then there exists an index $(i,j)$ such that 
\[
(H^*)_{ij} = 0 \text{ and } \nabla_{H} f(W^*, H^*)_{ij} < 0.
\]
For large enough $r \in \bar{\mathcal{R}}$, we have $\nabla_{H} f(W^{(r)}, H^{(r)})_{ij} < 0$, and hence 
\[
\lim_{r \in \bar{\mathcal{R}}, \, r \to \infty} (\bar{H}^{(r)})_{ij} = (\bar{H}^{*})_{ij} = \mu > 0.
\]
Therefore, 
\[
0 = \lim \limits_{r \in \bar{\mathcal{R}}, \, r \to \infty} (H^{(r)})_{ij}-(H^{(r+1)})_{ij} = (\bar{\Gamma}^{*})_{ij} \nabla_H f(W^{*},H^{*})_{ij} < 0,
\]
which leads to a contradiction. 
Thus, our assumption must be false, and we conclude: $\nabla_{H} f(W^*, H^*)_{ij} \ge 0$. 
Similarly, if $(W^*)_{ij} = 0$, then $\nabla_{W} f(W^*, H^*)_{ij} \ge 0$. 

This completes the proof.
\end{proof}

\subsection{Proof of Theorem~\ref{thm:global_nonincreasing}}
\label{Supp-sec:thm_global_nonincreasing}
\begin{theorem}
\label{Supp-thm:global_nonincreasing}
Let $\{(W_t^{(s,r)}, H_t^{(s,r)})\}_{t=1}^n$ denote the iterates at the $s$-th outer iteration and the $r$-th inner iteration, and let $(W_t^{(s,*)}, H_t^{(s,*)})$ denote a limit point of the inner iterations within the $s$-th outer iteration.
Then, under in \cref{alg:sequential_alternating_optimization}, the overall objective function $\mathcal{O}$ in \eqref{eq:pnmf_model} is nonincreasing across outer iterations: 
\begin{align*}
&\mathcal{O}(W_1^{(s+1,*)}, H_1^{(s+1,*)}, \dots, W_n^{(s+1,*)}, H_n^{(s+1,*)}) \\
\le &\mathcal{O}(W_1^{(s,*)}, H_1^{(s,*)}, \dots, W_n^{(s,*)}, H_n^{(s,*)}), \quad \forall s \ge 1.
\end{align*}
\end{theorem}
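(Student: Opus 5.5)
The plan is a block-coordinate descent argument. The key structural fact is that, once all blocks other than $(W_t,H_t)$ are frozen, the global objective $\mathcal{O}$ and the subproblem objective $f_t$ in \eqref{eq:opt} differ only by terms that do not depend on $(W_t,H_t)$. Fix the current values of all other blocks. Collecting the summands of \eqref{eq:pnmf_model} that involve $W_t$ or $H_t$ gives exactly $\|X-W_tH_t\|_F^2+\lambda_1\mathrm{Tr}(H_tL_tH_t^\top)+\lambda_2(\|H_{t+1}-H_t\|_F^2+\|H_t-H_{t-1}\|_F^2)+\lambda_3\|H_t\|_F^2$, with the boundary conventions for $t=1$ and $t=n$. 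This is $f_t(W_t,H_t)$ with $H_{\pm1}$ set to the current neighbours. Hence
\[
\mathcal{O}(\dots,W_t,H_t,\dots)=f_t(W_t,H_t)+c_t,
\]
where $c_t$ depends only on the frozen blocks. So any decrease of $f_t$ translates directly into a decrease of $\mathcal{O}$.

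Next I trace one outer sweep of \cref{alg:sequential_alternating_optimization}. It starts from the configuration $\Xi^{(0)}:=\{(W_\tau^{(s,*)},H_\tau^{(s,*)})\}_\tau$. For $t=1,\dots,n$ it replaces block $t$ by the limit of the inner iterations, which gives intermediate configurations $\Xi^{(1)},\dots,\Xi^{(n)}=\{(W_\tau^{(s+1,*)},H_\tau^{(s+1,*)})\}_\tau$. Blocks $\tau<t$ have already been updated and blocks $\tau>t$ have not.

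At step $t$, the inner loop is warm-started from the current block $(W_t^{(s,*)},H_t^{(s,*)})$. By \cref{Supp-thm:subproblem_nonincreasing}, the sequence $f_t(W_t^{(r)},H_t^{(r)})$ is nonincreasing in $r$. Since $f_t$ is continuous, any limit point along a subsequence satisfies $f_t(W_t^{(s+1,*)},H_t^{(s+1,*)})\le f_t(W_t^{(s,*)},H_t^{(s,*)})$. Using the identity above, this gives $\mathcal{O}(\Xi^{(t)})\le\mathcal{O}(\Xi^{(t-1)})$. Chaining over $t=1,\dots,n$ yields the claim.

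The main obstacle is the initialization of each inner loop. The monotonicity inequality only connects the limit to the block's previous value if the inner iterations start at $(W_t^{(s,*)},H_t^{(s,*)})$. I will state this warm-start convention explicitly, reading it as the meaning of the algorithm's notation.

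A second, minor point is that the update \eqref{eq:update_modify} uses $\bar H$ and $\bar W$. Monotonicity of $f_t$ under these modified updates is exactly what \cref{Supp-thm:subproblem_nonincreasing} provides, so I will simply invoke that result.
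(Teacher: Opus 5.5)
Your proposal is correct and follows the same block-coordinate descent argument as the paper: apply the subproblem monotonicity theorem to each block with the others frozen, pass to the limit of the inner iterations, and chain the $n$ block updates within one outer sweep. The paper uses three things without stating them, and you make each explicit: the identity $\mathcal{O}=f_t+c_t$, the continuity step that handles a subsequential limit point, and the warm-start convention. Stating them only tightens the argument.
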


\begin{proof}
For each outer iteration $s$, denote by $(W_t^{(s,r)}, H_t^{(s,r)})$ the iterates of the inner updates for block $t$, and let $(W_t^{(s,*)}, H_t^{(s,*)})$ be the corresponding limit. According to \cref{Supp-thm:subproblem_nonincreasing}, updating each block $(W_t,H_t)$ using the rule in \eqref{eq:update_modify} ensures that the objective function $\mathcal{O}$ does not increase while all other variables are fixed:
\[
\mathcal{O}(\dots, W_t^{(s,r+1)}, H_t^{(s,r+1)}, \dots) \le \mathcal{O}(\dots, W_t^{(s,r)}, H_t^{(s,r)}, \dots).
\]
Taking the limit of the inner iterations yields
\[
\mathcal{O}(\dots, W_t^{(s,*)}, H_t^{(s,*)}, \dots) \le \mathcal{O}(\dots, W_t^{(s,r)}, H_t^{(s,r)}, \dots), \quad \forall r.
\]

During the $s$-th outer iteration, the blocks $(W_1,H_1), \dots, (W_n,H_n)$ are updated sequentially. Since each block update is nonincreasing, the overall objective function satisfies the chain of inequalities
\begin{align*}
&\mathcal{O}(W_1^{(s,*)}, H_1^{(s,*)}, \dots, W_n^{(s,*)}, H_n^{(s,*)}) \\
\ge &\mathcal{O}(W_1^{(s+1,*)}, H_1^{(s+1,*)}, W_2^{(s,*)}, H_2^{(s,*)}, \dots, W_n^{(s,*)}, H_n^{(s,*)}) \\
\ge &\dots \\
\ge &\mathcal{O}(W_1^{(s+1,*)}, H_1^{(s+1,*)}, \dots, W_n^{(s+1,*)}, H_n^{(s+1,*)}).
\end{align*}
which directly shows that the objective function does not increase after completing all block updates.

Therefore, by sequentially updating all blocks in each outer iteration, we conclude that
\begin{align*}
&\mathcal{O}(W_1^{(s,*)}, H_1^{(s,*)}, \dots, W_n^{(s,*)}, H_n^{(s,*)}) \\
\ge &\mathcal{O}(W_1^{(s+1,*)}, H_1^{(s+1,*)}, \dots, W_n^{(s+1,*)}, H_n^{(s+1,*)}), \quad \forall s \ge 1.
\end{align*}
This establishes that the sequence of objective values is nonincreasing over the alternating optimization process, completing the proof.
\end{proof}

\subsection{Computational Complexity}
\label{Supp-sec:computational_complexity}
We analyze the computational complexity of the proposed optimization algorithm described in \cref{alg:sequential_alternating_optimization}. 
The analysis is carried out at the level of a single inner iteration for one scale $t$, followed by aggregation over all scales and outer iterations. 

\textbf{\emph{Cost of updating $H_t$.}}
The dominant computational costs in updating $H_t$ arise from matrix multiplications involving $W_t$, $X$, and the scale-specific graph structures. 
Specifically, the main operations include: 
(i) computing $W_t^\top X$, which costs $O(dpn)$;
(ii) computing products of the form $W_t^\top W_t H_t$, which cost
$O(d^2(p+n))$; and 
(iii) applying the graph Laplacian terms $H_t D_t$ and $H_t L_t$.
Since $D_t$ and $L_t$ are dense in general, these terms incur a cost of $O(dn^2)$.
As a result, the total cost of one update of $H_t$ is 
\[
O(dpn + d^2(p+n) + dn^2).
\]

\textbf{\emph{Cost of updating $W_t$.}}
Updating $W_t$ primarily involves matrix products with $H_t$ and $X$.
The dominant operations are computing $XH_t^\top$ and $W_t H_t H_t^\top$, which together cost 
\[
O(dpn + d^2(p+n)).
\]
Compared with the update of $H_t$, this cost does not involve graph-related terms and is therefore of lower order when $n$ is large. 

\textbf{\emph{Overall complexity.}}
Combining the updates of $H_t$ and $W_t$, the computational complexity of one inner iteration at a single scale $t$ is 
\[
O(dpn + d^2(p+n) + dn^2). 
\]
Since the algorithm sequentially updates all $n$ scales in each outer iteration, and assuming that each $(W_t, H_t)$ pair is updated for $r$ inner iterations on average and that the algorithm performs $s$ outer iterations, the total computational complexity is 
\[
O(srn(dpn + d^2(p+n) + dn^2)).
\]
If sparse graph constructions are employed, the $dn^2$ term can be significantly reduced, leading to improved scalability for large $n$.

\section{Additional Experimental Results}
\subsection{Simulation Experiment}
\subsubsection{Empirical Convergence Behavior}
\label{Supp-sec:empirical_convergence_behavior}
We examine the empirical convergence behavior of pNMF on the simulated dataset by tracking the evolution of the objective function throughout the optimization process. 
The proposed \cref{alg:sequential_alternating_optimization} follows a sequential alternating optimization scheme with nested loops: each outer iteration sequentially updates all $n$ scales, and each scale-specific subproblem is solved via an inner iterative procedure. 

\cref{fig:convergence_vis}(a) reports the objective function value evaluated at each outer iteration. 
The objective decreases monotonically and stabilizes after a moderate number of iterations, indicating stable convergence of the overall optimization process in practice. 
To further characterize the convergence speed, \cref{fig:convergence_vis}(b) shows the relative decrease of the objective function between successive outer iterations, defined as $|\mathcal{O}^{(s-1)} - \mathcal{O}^{(s)}| / \mathcal{O}^{(s-1)}$. 
The rapid decay of the relative change suggests that the algorithm efficiently approaches a stationary point. 

To provide a more fine-grained view, \cref{fig:convergence_vis}(c) illustrates the evolution of the objective function across all update steps within the outer--scale--inner optimization process. 
Each individual update, including scale-wise updates and inner-loop refinements, leads to a non-increasing objective value. 
This empirical behavior is consistent with the design of the subproblem updates and their alignment with the global objective. 

Overall, these results demonstrate that the proposed sequential alternating optimization strategy remains stable and well-behaved even under a multi-scale and nested optimization setting, and provide empirical support for the theoretical convergence analysis presented earlier. 
\begin{figure}[t]
\centering
\includegraphics[width=1\textwidth]{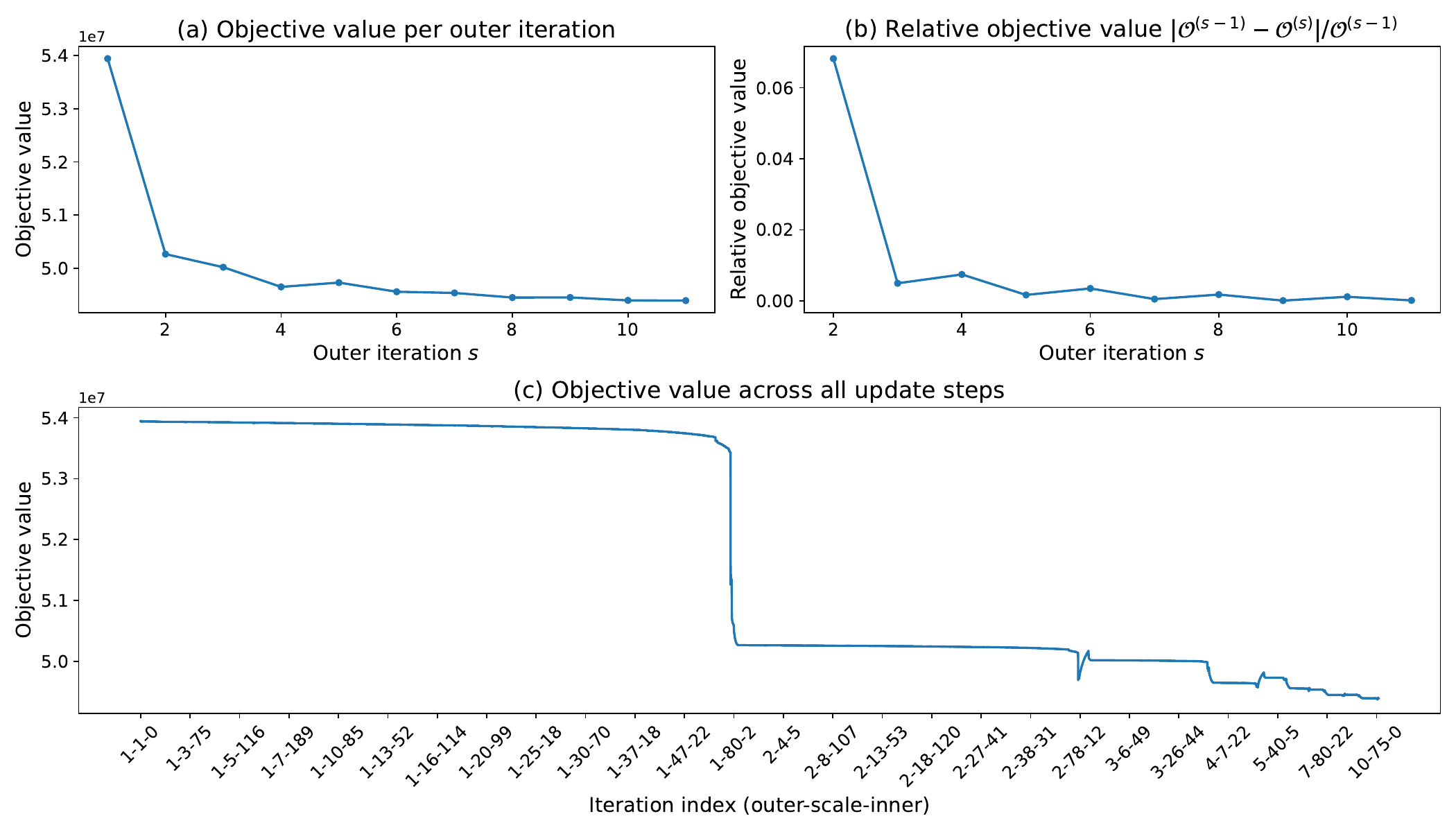}
\caption{Empirical convergence behavior of pNMF on the simulation dataset. 
(a) Objective function value versus outer iteration. 
(b) Relative decrease of the objective function between successive outer iterations. 
(c) Fine-grained evolution of the objective function across all update steps in the outer--scale--inner optimization process.}
\label{fig:convergence_vis}
\end{figure}

\subsubsection{Ablation Study on Regularization Terms}
\label{Supp-sec:ablation_study_on_regularization_terms}
We conduct an ablation study on the simulated dataset to investigate the role of each regularization component in the proposed pNMF model, with a particular focus on how the removal of individual terms affects the resulting multi-scale embeddings. 
The pNMF objective consists of a data fidelity term, a scale-specific geometric regularization term weighted by $\lambda_1$, a cross-scale smoothness term weighted by $\lambda_2$, and an anchoring term weighted by $\lambda_3$, each targeting a distinct aspect of multi-scale structure modeling. 

Starting from the full pNMF model, we consider the following ablation variants to disentangle the effects of individual regularization components: 
\begin{itemize}
\item \textbf{pNMF (Full)}: the complete model with all regularization terms enabled; 
\item \textbf{pNMF-noGeom}: the geometric regularization term is removed by setting $\lambda_1 = 0$; 
\item \textbf{pNMF-noSmooth}: the cross-scale smoothness term is removed by setting $\lambda_2 = 0$; 
\item \textbf{pNMF-noAnchor}: the anchoring term is removed by setting $\lambda_3 = 0$; 
\item \textbf{pNMF-GeomOnly}: only the geometric regularization term is retained, with $\lambda_2 = \lambda_3 = 0$; 
\item \textbf{pNMF-SmoothOnly}: only the cross-scale smoothness term is retained, with $\lambda_1 = \lambda_3 = 0$. 
\end{itemize}
This design allows us to separately examine the roles of geometric structure induction and cross-scale regulation. 
All variants use the same persistence-based scale sequence, initialization strategy, and stopping criteria. 
The hyperparameters of the retained terms are fixed to the same values as in the full model, ensuring that any observed differences arise solely from the presence or absence of specific regularization components. 

The qualitative results of the ablation study are shown in \cref{fig:ablation_vis}. 
Each column corresponds to the same scale index, while each row represents a different model variant. 
The full pNMF model exhibits clear and interpretable multi-scale embedding behavior: as the scale becomes finer, the concentric circular structures emerge progressively, and embeddings at different scales capture distinct geometric resolutions of the data. 

For \textbf{pNMF-noGeom}, the embeddings at all scales closely resemble those obtained at the finest scale of the full model, effectively collapsing the multi-scale embeddings into a single geometric configuration. 
Although geometric structure is still present, the embeddings no longer adapt to changes in scale, indicating that geometric regularization is essential for inducing scale-dependent structure. 
A similar degeneration is observed for \textbf{pNMF-SmoothOnly}, suggesting that cross-scale smoothness alone is insufficient to generate meaningful geometric organization across scales. 

Removing the cross-scale smoothness term (\textbf{pNMF-noSmooth}) leads to a noticeable degradation of embedding quality at intermediate scales. 
While geometric patterns remain visible at certain scales, the concentric circular structures become distorted or unstable at others, resulting in a fragmented and difficult-to-interpret cross-scale evolution. 
This behavior highlights the role of the cross-scale smoothness term in regulating how geometric structures evolve coherently across scales, rather than in creating the structures themselves. 

In contrast, \textbf{pNMF-noAnchor} and \textbf{pNMF-GeomOnly} produce embeddings that are visually similar to those of the full model on the simulated dataset. 
This indicates that, under the controlled and symmetric simulation setting, geometric regularization alone is sufficient to stabilize the global configuration of the embeddings, and the anchoring term has a limited effect. 

Overall, this ablation study demonstrates that geometric regularization is necessary for enforcing scale-dependent structure, while cross-scale smoothness is crucial for maintaining a coherent and interpretable evolution across scales. 
The anchoring term serves as a complementary stabilizer whose contribution depends on data complexity, jointly supporting the design of the full pNMF objective. 
\begin{figure}[htbp]
\centering
\includegraphics[width=1\textwidth]{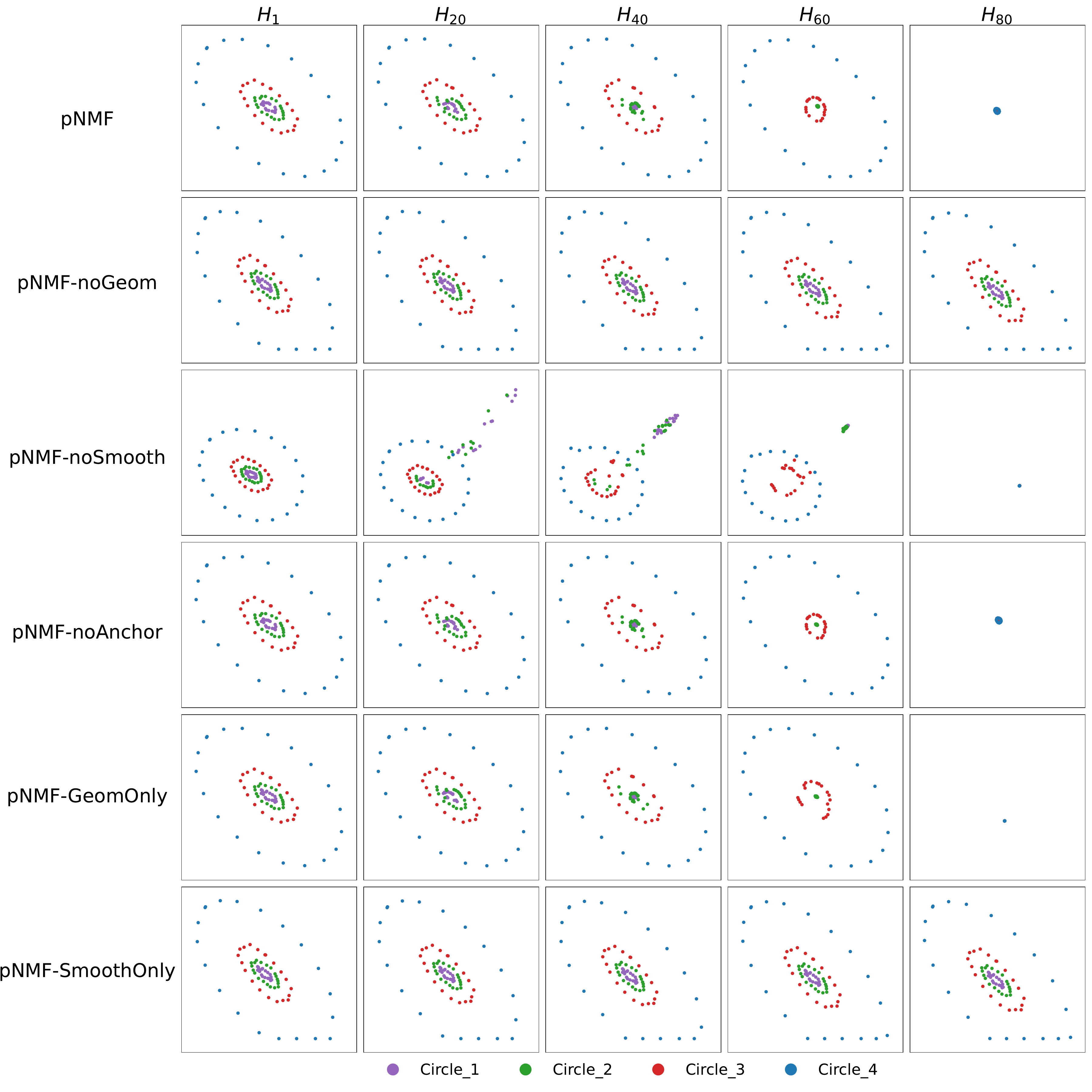}
\caption{Effect of regularization ablation on multi-scale embedding visualization. 
Each column corresponds to the same scale index, while each row represents a different model variant. 
From top to bottom: pNMF (Full), pNMF-noGeom, pNMF-noSmooth, pNMF-noAnchor, pNMF-GeomOnly, and pNMF-SmoothOnly.}
\label{fig:ablation_vis}
\end{figure}

\subsection{Application to scRNA-seq Data Clustering}
\subsubsection{Evaluation Metrics}
\label{Supp-sec:evaluation_metrics}
Let $L = \{l_1, l_2, \ldots, l_n\}$ and $C \!=\!\{c_1, c_2, \ldots, c_n\}$ denote the true and predicted labels, respectively, where $n$ is the total number of data points. 
To evaluate clustering performance, we adopt four widely used metrics: Adjusted Rand Index (ARI), Normalized Mutual Information (NMI), Purity, and Accuracy. 
\begin{itemize}
\item \textbf{Adjusted Rand Index (ARI).}
ARI quantifies the agreement between the predicted and true labels, adjusted for chance. 
It is defined as 
\[
\text{ARI} = \frac{\text{RI} - \mathbb{E}[\text{RI}]}{\max(\text{RI}) - \mathbb{E}[\text{RI}]}, 
\]
where $\text{RI}$ denotes the Rand Index and $\mathbb{E}[\text{RI}]$ is its expected value under random labeling. 
ARI ranges from $-1$ to $1$, with $1$ indicating perfect agreement and $0$ corresponding to random clustering. 
\item \textbf{Normalized Mutual Information (NMI).}
NMI measures the mutual dependence between the true and predicted labels, defined as 
\[
\text{NMI}(C, L) = \frac{2 \cdot I(C; L)}{H(C) + H(L)}, 
\]
where $I(C; L)$ denotes the mutual information between $C$ and $L$, and $H(C)$ and $H(L)$ are their respective entropies. 
NMI takes values in $[0, 1]$, with larger values indicating better agreement. 
\item \textbf{Purity.}
Purity evaluates the extent to which each cluster contains data points from a single true label, defined as 
\[
\text{Purity} = \frac{1}{n} \sum_{i} \max_j |C_i \cap L_j|. 
\]
The intersection $|C_i \cap L_j|$ denotes the number of data points in cluster $i$ that have the true label $j$. 
Purity ranges from $0$ to $1$, with $1$ indicating that each cluster consists entirely of data points from a single true label. 
\item \textbf{Accuracy.}
Accuracy measures the proportion of correctly clustered data points. 
Before computing accuracy, predicted cluster labels are matched to the true labels using the Hungarian algorithm. 
It is defined as 
\[
\text{Accuracy} = \frac{1}{n} \sum_{i=1}^n \delta(c_i, l_i), 
\]
where $\delta(c_i, l_i)$ is the Kronecker delta function, equal to $1$ if $c_i = l_i$, and $0$ otherwise. 
Accuracy ranges from $0$ to $1$, with $1$ indicating that all data points are correctly clustered. 
\end{itemize}

\subsubsection{Clustering Visualization}
\label{Supp-sec:clustering_visualization}
We visualize the low-dimensional embeddings using t-SNE and UMAP (\cref{Supp-fig:tSNE_umap_vis}). 
Across datasets, pNMF embeddings more clearly separate distinct cell types compared to baseline methods, consistent with the quantitative clustering results. 
For example, in GSE75748time, pNMF distinguishes the two cell populations at 00hr and 12hr, whereas baseline embeddings exhibit overlapping clusters. 
Similarly, in Dendritic\_batch1, pNMF embeddings provide a more accurate separation of the four cell types. 
These qualitative results further corroborate the quantitative improvements, demonstrating that pNMF preserves biologically meaningful structures in single-cell data. 
\begin{figure}[htbp]
\centering
\includegraphics[width=1\textwidth]{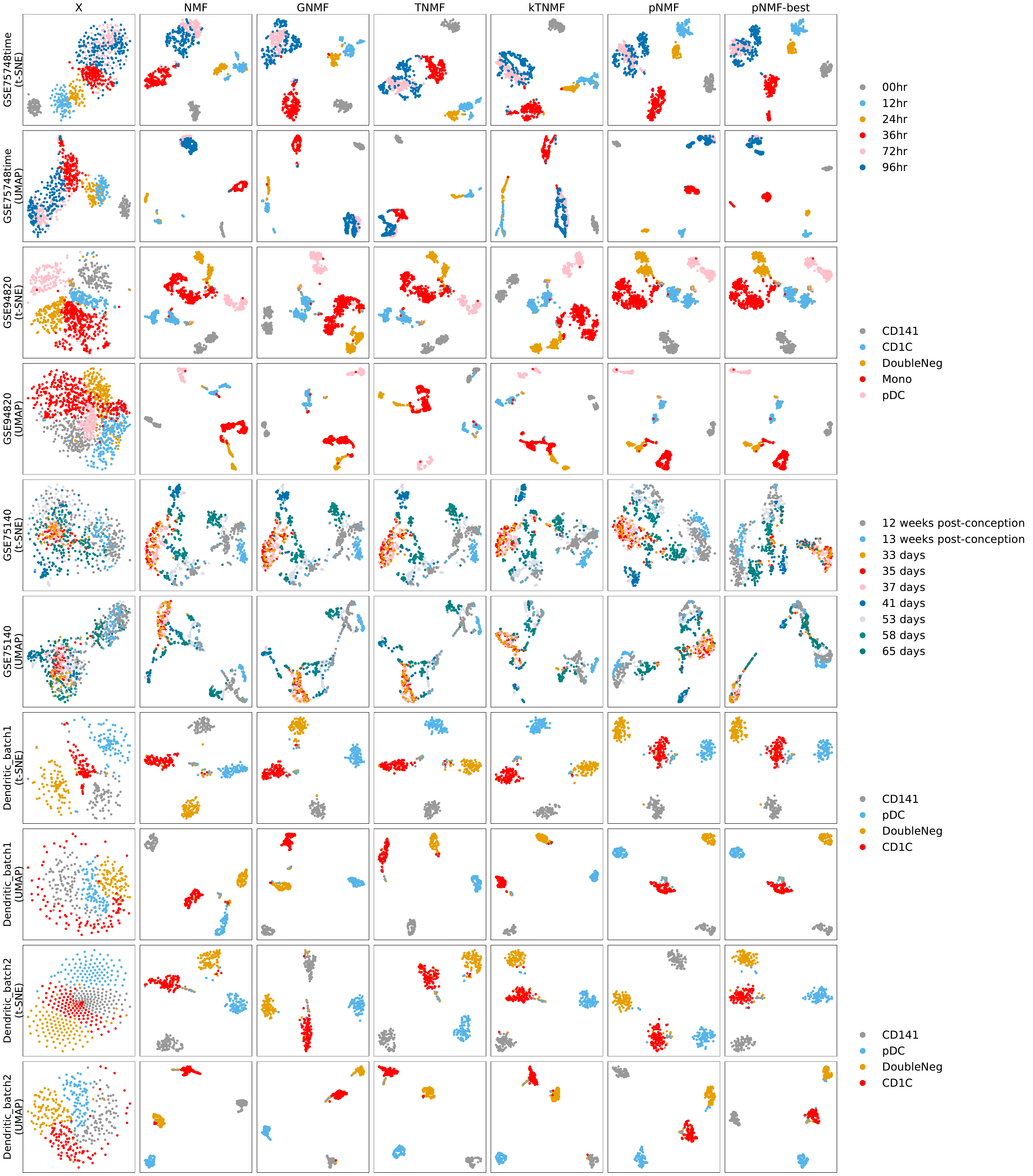}
\caption{t-SNE and UMAP visualizations for scRNA-seq datasets. 
Each dataset is represented by two rows of panels: the first row shows t-SNE embeddings and the second row shows UMAP embeddings. 
The first column corresponds to the raw data $X$ directly visualized, while the remaining columns show embeddings from different methods.}
\label{Supp-fig:tSNE_umap_vis}
\end{figure}

\subsubsection{Parameter Analysis}
\label{Supp-sec:parameter_analysis}
We investigate the sensitivity of pNMF to its regularization parameters by conducting a comprehensive parameter analysis on five scRNA-seq datasets. 
Specifically, the three regularization coefficients $\lambda_1$, $\lambda_2$, and $\lambda_3$ are independently varied over $\{0.1, 1, 10\}$, resulting in $27$ parameter combinations for each dataset. 
For each configuration, clustering performance is evaluated using ARI, NMI, Purity, Accuracy, as well as their average. 

The results are summarized using radar charts in \cref{fig:parameter_analysis_radar_charts}, where each subplot corresponds to one dataset and each polygon represents a specific parameter combination. 
This visualization provides an intuitive overview of both performance level and stability under different regularization settings. 

Overall, pNMF exhibits robust performance across a wide range of parameter values. 
In particular, the radar charts show that configurations with moderate regularization strengths generally achieve consistently strong performance across all metrics, whereas extremely small or large values may lead to mild degradation on certain datasets. 
This indicates that pNMF does not rely on delicate parameter tuning to achieve good clustering quality. 

Examining the effect of individual regularization terms, we observe that the geometric regularization coefficient $\lambda_1$ plays a central role in clustering performance. 
Across datasets, settings with $\lambda_1 = 1$ tend to yield stable and well-balanced results, while smaller values may insufficiently enforce geometric structure and larger values occasionally lead to over-regularization. 

The cross-scale smoothness parameter $\lambda_2$ shows a similar trend. 
Moderate values generally promote coherent performance across scales, whereas overly weak smoothness may reduce stability, especially in datasets with complex cell-type transitions. 
This behavior is consistent with the role of $\lambda_2$ in regulating the evolution of embeddings along the scale dimension. 

In contrast, the anchoring parameter $\lambda_3$ has a comparatively milder influence on clustering performance. 
While appropriate anchoring can slightly improve stability, the radar charts suggest that pNMF remains effective over a broad range of $\lambda_3$ values, indicating that anchoring mainly serves as a complementary regularization term rather than a dominant factor. 

Based on this analysis, we adopt $\lambda_1 = \lambda_2 = \lambda_3 = 1$ as the default setting in all subsequent experiments. 
This choice provides a favorable trade-off between clustering accuracy, robustness, and interpretability across datasets, while avoiding dataset-specific parameter tuning. 
\begin{figure}[t]
\centering
\includegraphics[width=1\textwidth]{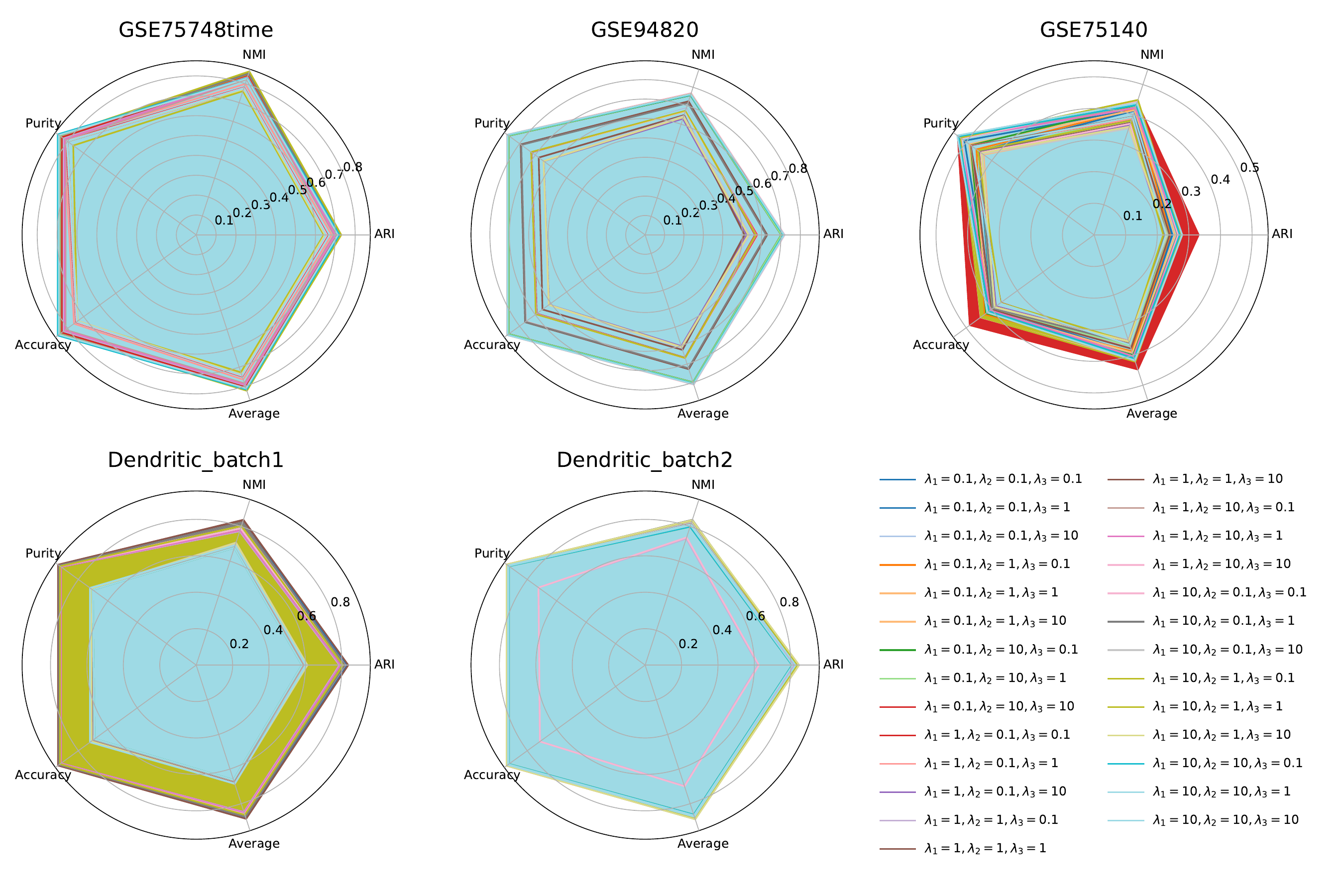}
\caption{Parameter analysis radar charts for scRNA-seq datasets. 
Each chart displays the clustering performance of pNMF under 27 combinations of regularization parameters $\lambda_1$, $\lambda_2$, and $\lambda_3$. 
The metrics plotted are ARI, NMI, Purity, Accuracy, and their average.}
\label{fig:parameter_analysis_radar_charts}
\end{figure}

\section{Code Availability}
The code for the proposed pNMF and experiments is available at \url{https://github.com/LiminLi-xjtu/pNMF}.

\bibliographystyle{siamplain}
\bibliography{references}

\end{document}

%% file: shared.tex
\usepackage{lipsum}
\usepackage{amsfonts}
\usepackage{graphicx}
\usepackage{epstopdf}
\usepackage{algorithmic}
\usepackage{amssymb}
\usepackage{subfigure}
\usepackage{multirow}
\usepackage{xr-hyper}
\usepackage{hyperref}
\ifpdf
  \DeclareGraphicsExtensions{.eps,.pdf,.png,.jpg}
\else
  \DeclareGraphicsExtensions{.eps}
\fi

\newsiamremark{remark}{Remark}
\newsiamremark{hypothesis}{Hypothesis}
\crefname{hypothesis}{Hypothesis}{Hypotheses}
\newsiamthm{claim}{Claim}
\newsiamremark{fact}{Fact}
\crefname{fact}{Fact}{Facts}

\headers{Persistent Nonnegative Matrix Factorization}{Zhang, Miao, and Li}

\title{Persistent Nonnegative Matrix Factorization via Multi-Scale Graph Regularization
\thanks{Submitted to the editors DATE.
\funding{This work was funded by National Natural Science Foundation of China project under Grant No.~12222115.}}}

\author{JICHAO ZHANG\thanks{School of Mathematics and Statistics, Xi'an Jiaotong University, Xi'an, Shaanxi Province, 710049, China (\email{jichaozhang@stu.xjtu.edu.cn}).}
\and RAN MIAO\thanks{School of Mathematics and Statistics, Xi'an Jiaotong University, Xi'an, Shaanxi Province, 710049, China (\email{reimu741@stu.xjtu.edu.cn}).}
\and LIMIN LI\thanks{Corresponding author. School of Mathematics and Statistics, Xi'an Jiaotong University, Xi'an, Shaanxi Province, 710049, China (\email{liminli@mail.xjtu.edu.cn}).}}

\usepackage{amsopn}


%% file: references.bib
@article{koren2009matrix,
  title={Matrix factorization techniques for recommender systems},
  author={Koren, Yehuda and Bell, Robert and Volinsky, Chris},
  journal={Computer},
  volume={42},
  number={8},
  pages={30--37},
  year={2009},
  publisher={IEEE}
}

@article{stewart1993early,
  title={On the early history of the singular value decomposition},
  author={Stewart, Gilbert W},
  journal={SIAM review},
  volume={35},
  number={4},
  pages={551--566},
  year={1993},
  publisher={SIAM}
}

@article{palmer1977hierarchical,
  title={Hierarchical structure in perceptual representation},
  author={Palmer, Stephen E},
  journal={Cognitive psychology},
  volume={9},
  number={4},
  pages={441--474},
  year={1977},
  publisher={Elsevier}
}

@article{wachsmuth1994recognition,
  title={Recognition of objects and their component parts: responses of single units in the temporal cortex of the macaque},
  author={Wachsmuth, E and Oram, Michael William and Perrett, David Ian},
  journal={Cerebral Cortex},
  volume={4},
  number={5},
  pages={509--522},
  year={1994},
  publisher={Oxford University Press}
}

@article{logothetis1996visual,
  title={Visual object recognition.},
  author={Logothetis, Nikos K and Sheinberg, David L},
  journal={Annual review of neuroscience},
  volume={19},
  pages={577--621},
  year={1996}
}

@article{lee1999learning,
  title={Learning the parts of objects by non-negative matrix factorization},
  author={Lee, Daniel D and Seung, H Sebastian},
  journal={nature},
  volume={401},
  number={6755},
  pages={788--791},
  year={1999},
  publisher={Nature Publishing Group UK London}
}

@article{tenenbaum2000global,
  title={A global geometric framework for nonlinear dimensionality reduction},
  author={Tenenbaum, Joshua B and Silva, Vin de and Langford, John C},
  journal={science},
  volume={290},
  number={5500},
  pages={2319--2323},
  year={2000},
  publisher={American Association for the Advancement of Science}
}

@article{roweis2000nonlinear,
  title={Nonlinear dimensionality reduction by locally linear embedding},
  author={Roweis, Sam T and Saul, Lawrence K},
  journal={science},
  volume={290},
  number={5500},
  pages={2323--2326},
  year={2000},
  publisher={American Association for the Advancement of Science}
}

@article{belkin2006manifold,
  title={Manifold regularization: A geometric framework for learning from labeled and unlabeled examples.},
  author={Belkin, Mikhail and Niyogi, Partha and Sindhwani, Vikas},
  journal={Journal of machine learning research},
  volume={7},
  number={11},
  year={2006}
}

@article{cai2010graph,
  title={Graph regularized nonnegative matrix factorization for data representation},
  author={Cai, Deng and He, Xiaofei and Han, Jiawei and Huang, Thomas S},
  journal={IEEE transactions on pattern analysis and machine intelligence},
  volume={33},
  number={8},
  pages={1548--1560},
  year={2010},
  publisher={IEEE}
}

@book{chung1997spectral,
  title={Spectral graph theory},
  author={Chung, Fan RK},
  volume={92},
  year={1997},
  publisher={American Mathematical Soc.}
}

@article{wang2020persistent,
  title={Persistent spectral graph},
  author={Wang, Rui and Nguyen, Duc Duy and Wei, Guo-Wei},
  journal={International journal for numerical methods in biomedical engineering},
  volume={36},
  number={9},
  pages={e3376},
  year={2020},
  publisher={Wiley Online Library}
}

@article{memoli2022persistent,
  title={Persistent Laplacians: Properties, algorithms and implications},
  author={M{\'e}moli, Facundo and Wan, Zhengchao and Wang, Yusu},
  journal={SIAM Journal on Mathematics of Data Science},
  volume={4},
  number={2},
  pages={858--884},
  year={2022},
  publisher={SIAM}
}

@article{wang2021hermes,
  title={HERMES: Persistent spectral graph software},
  author={Wang, Rui and Zhao, Rundong and Ribando-Gros, Emily and Chen, Jiahui and Tong, Yiying and Wei, Guo-Wei},
  journal={Foundations of data science (Springfield, Mo.)},
  volume={3},
  number={1},
  pages={67},
  year={2021}
}

@article{hozumi2024analyzing,
  title={Analyzing single cell RNA sequencing with topological nonnegative matrix factorization},
  author={Hozumi, Yuta and Wei, Guo-Wei},
  journal={Journal of Computational and Applied Mathematics},
  volume={445},
  pages={115842},
  year={2024},
  publisher={Elsevier}
}

@article{bubenik2015statistical,
  title={Statistical topological data analysis using persistence landscapes.},
  author={Bubenik, Peter and others},
  journal={J. Mach. Learn. Res.},
  volume={16},
  number={1},
  pages={77--102},
  year={2015}
}

@article{wasserman2018topological,
  title={Topological data analysis},
  author={Wasserman, Larry},
  journal={Annual Review of Statistics and Its Application},
  volume={5},
  number={1},
  pages={501--532},
  year={2018},
  publisher={Annual Reviews}
}

@article{chazal2021introduction,
  title={An introduction to topological data analysis: fundamental and practical aspects for data scientists},
  author={Chazal, Fr{\'e}d{\'e}ric and Michel, Bertrand},
  journal={Frontiers in artificial intelligence},
  volume={4},
  pages={667963},
  year={2021},
  publisher={Frontiers Media SA}
}

@inproceedings{zomorodian2004computing,
  title={Computing persistent homology},
  author={Zomorodian, Afra and Carlsson, Gunnar},
  booktitle={Proceedings of the twentieth annual symposium on Computational geometry},
  pages={347--356},
  year={2004}
}

@article{edelsbrunner2008persistent,
  title={Persistent homology-a survey},
  author={Edelsbrunner, Herbert and Harer, John and others},
  journal={Contemporary mathematics},
  volume={453},
  number={26},
  pages={257--282},
  year={2008},
  publisher={Providence, RI: American Mathematical Society}
}

@article{clough2020topological,
  title={A topological loss function for deep-learning based image segmentation using persistent homology},
  author={Clough, James R and Byrne, Nicholas and Oksuz, Ilkay and Zimmer, Veronika A and Schnabel, Julia A and King, Andrew P},
  journal={IEEE transactions on pattern analysis and machine intelligence},
  volume={44},
  number={12},
  pages={8766--8778},
  year={2020},
  publisher={IEEE}
}

@inproceedings{moor2020topological,
  title={Topological autoencoders},
  author={Moor, Michael and Horn, Max and Rieck, Bastian and Borgwardt, Karsten},
  booktitle={International conference on machine learning},
  pages={7045--7054},
  year={2020},
  organization={PMLR}
}

@article{huynh2024topological,
  title={Topological and geometric analysis of cell states in single-cell transcriptomic data},
  author={Huynh, Tram and Cang, Zixuan},
  journal={Briefings in Bioinformatics},
  volume={25},
  number={3},
  pages={bbae176},
  year={2024},
  publisher={Oxford University Press}
}

@article{zomorodian2010fast,
  title={Fast construction of the Vietoris-Rips complex},
  author={Zomorodian, Afra},
  journal={Computers \& Graphics},
  volume={34},
  number={3},
  pages={263--271},
  year={2010},
  publisher={Elsevier}
}

@article{lee2000algorithms,
  title={Algorithms for non-negative matrix factorization},
  author={Lee, Daniel and Seung, H Sebastian},
  journal={Advances in neural information processing systems},
  volume={13},
  year={2000}
}

@article{wang2012nonnegative,
  title={Nonnegative matrix factorization: A comprehensive review},
  author={Wang, Yu-Xiong and Zhang, Yu-Jin},
  journal={IEEE Transactions on knowledge and data engineering},
  volume={25},
  number={6},
  pages={1336--1353},
  year={2012},
  publisher={IEEE}
}

@book{gillis2020nonnegative,
  title={Nonnegative matrix factorization},
  author={Gillis, Nicolas},
  year={2020},
  publisher={SIAM}
}

@article{lin2007convergence,
  title={On the convergence of multiplicative update algorithms for nonnegative matrix factorization},
  author={Lin, Chih-Jen},
  journal={IEEE Transactions on Neural Networks},
  volume={18},
  number={6},
  pages={1589--1596},
  year={2007},
  publisher={IEEE}
}

@article{chu2016single,
  title={Single-cell RNA-seq reveals novel regulators of human embryonic stem cell differentiation to definitive endoderm},
  author={Chu, Li-Fang and Leng, Ning and Zhang, Jue and Hou, Zhonggang and Mamott, Daniel and Vereide, David T and Choi, Jeea and Kendziorski, Christina and Stewart, Ron and Thomson, James A},
  journal={Genome biology},
  volume={17},
  pages={1--20},
  year={2016},
  publisher={Springer}
}

@article{villani2017single,
  title={Single-cell RNA-seq reveals new types of human blood dendritic cells, monocytes, and progenitors},
  author={Villani, Alexandra-Chlo{\'e} and Satija, Rahul and Reynolds, Gary and Sarkizova, Siranush and Shekhar, Karthik and Fletcher, James and Griesbeck, Morgane and Butler, Andrew and Zheng, Shiwei and Lazo, Suzan and others},
  journal={Science},
  volume={356},
  number={6335},
  pages={eaah4573},
  year={2017},
  publisher={American Association for the Advancement of Science}
}

@article{milnor1964betti,
  title={On the Betti numbers of real varieties},
  author={Milnor, John},
  journal={Proceedings of the American Mathematical Society},
  volume={15},
  number={2},
  pages={275--280},
  year={1964},
  publisher={JSTOR}
}

@article{boutsidis2008svd,
  title={SVD based initialization: A head start for nonnegative matrix factorization},
  author={Boutsidis, Christos and Gallopoulos, Efstratios},
  journal={Pattern recognition},
  volume={41},
  number={4},
  pages={1350--1362},
  year={2008},
  publisher={Elsevier}
}

@article{camp2015human,
  title={Human cerebral organoids recapitulate gene expression programs of fetal neocortex development},
  author={Camp, J Gray and Badsha, Farhath and Florio, Marta and Kanton, Sabina and Gerber, Tobias and Wilsch-Br{\"a}uninger, Michaela and Lewitus, Eric and Sykes, Alex and Hevers, Wulf and Lancaster, Madeline and others},
  journal={Proceedings of the National Academy of Sciences},
  volume={112},
  number={51},
  pages={15672--15677},
  year={2015},
  publisher={National Academy of Sciences}
}

@inproceedings{cohen2005stability,
  title={Stability of persistence diagrams},
  author={Cohen-Steiner, David and Edelsbrunner, Herbert and Harer, John},
  booktitle={Proceedings of the twenty-first annual symposium on Computational geometry},
  pages={263--271},
  year={2005}
}

@book{golub2013matrix,
  title={Matrix computations},
  author={Golub, Gene H and Van Loan, Charles F},
  year={2013},
  publisher={JHU press}
}
